\newcommand{\R}{\mathbb{R}}
\definecolor{RoyalBlue}{RGB}{0,100,170}
\definecolor{OliveGreen}{RGB}{60, 128, 49}
\definecolor{peach}{rgb}{1, 0.56, 0.56}
\definecolor{midgray}{RGB}{150,150,150}
\definecolor{EasternBlue}{RGB}{37,150,190}
\definecolor{sand}{RGB}{250,150,120}
\definecolor{grass}{RGB}{120, 190, 50}
\definecolor{sky}{RGB}{50,150,250}
\definecolor{Orange}{RGB}{250,150,50}
\definecolor{Cerulean}{RGB}{80,150,220}
\definecolor{Emerald}{RGB}{62,156,94}
\definecolor{Rouge}{RGB}{250,95,95}
\definecolor{ColorDef}{RGB}{80, 180, 150}
\definecolor{RevisionRed}{RGB}{240,35,35}
\definecolor{RevisionBlue}{RGB}{80,180,250}
\definecolor{Confirm}{RGB}{103, 162, 229}
\definecolor{RESULT}{RGB}{250,150,50}
\definecolor{TODO}{RGB}{250,150,50}
\definecolor{Future}{RGB}{198, 121, 235}
\definecolor{myGold}{RGB}{231,141,20}
\definecolor{myBlue}{rgb}{0.19,0.41,.65}
\definecolor{myPurple}{RGB}{175,0,124}
\definecolor{Gred}{RGB}{219, 50, 54}
\definecolor{Ggreen}{RGB}{60, 186, 84}
\definecolor{Gblue}{RGB}{72, 133, 237}
\definecolor{Gyellow}{RGB}{247, 178, 16}
\newif\ifsolution
\newcounter{solnequation}
\newcounter{solneqtmp}
    \renewcommand\theequation{S\arabic{equation}}
    \ifsolution\color{red}\expandafter\textbf{Solution} \BODY\fi%
    \renewcommand\theequation{\arabic{equation}}}
\newtheorem{theorem}{Theorem}
\newtheorem{lemma}{Lemma}
\newtheorem{proposition}{Proposition}
\newtheorem{assumption}{Assumption}
\newtheorem{corollary}[lemma]{Corollary}
\newtheorem{definition}{Definition}
\newtheorem{remark}{Remark}
\newcommand*\Z[0]{\mathbb{Z}}
\newcommand*\lin[1]{\bm{\left\langle} #1 \bm{\right\rangle}}
\newcommand*\Ep[2]{\mathbb{E}_{{#1}}\left[{#2}\right]}
\newcommand*\D[0]{\mathcal{D}}
\newcommand\numberthis{\addtocounter{equation}{1}\tag{\theequation}}
\newcommand*\lrb[1]{{\left[#1\right]}}
\newcommand*\lrbb[1]{\left\{#1\right\}}
\newcommand*\lrp[1]{{\left(#1\right)}}
\newcommand*\lrn[1]{{\left\|#1\right\|}}
\newcommand*\A[0]{\mathcal{A}}
\newcommand*\bmat[1]{\begin{bmatrix} #1 \end{bmatrix}}
\renewcommand*{\Re}{\mathbb{R}}
\newcommand*{\M}[1]{M_{#1}}
\renewcommand*{\Pr}[1]{\mathbb{P}\lrp{{#1}}}
\renewcommand{\epsilon}{\varepsilon}
\newcommand{\C}{{\mathcal{C}}}
\newcommand{\xc}[1]{{\color{black!10!orange} [Xiang: #1]}}
\newcommand{\pr}[1]{{\color{black!10!blue} [Paul: #1]}}
\newcommand{\lc}[1]{{\color{black!10!red} [Larry: #1]}}
\newcommand{\TF}{{\sf TF}} 
\DeclareMathOperator{\gL}{\mathcal{L}}
\newcommand*\tx[1]{x^{(#1)}}
\newcommand*\ty[1]{y^{(#1)}}
\newcommand*\tz[1]{z^{(#1)}}
\renewcommand*\M[0]{\mathcal{M}}
\definecolor{darkgreen}{rgb}{0.0, 0.5, 0.0}
\newcommand*{\red}[1]{{\textcolor{red}{#1}}}
\newcommand*{\blue}[1]{{\textcolor{blue}{#1}}}
\newcommand*{\green}[1]{{\textcolor{darkgreen}{#1}}}
\DeclareMathOperator{\rbf}{\texttt{rbf}}
\DeclareMathOperator{\linear}{\texttt{linear}}
\DeclareMathOperator{\attn}{Attn}
\newcommand{\distas}[1]{\mathbin{\overset{#1}{\kern\z@\sim}}}%
\newcommand{\beq}{\vspace{0mm}\begin{equation}}
\newcommand{\eeq}{\vspace{0mm}\end{equation}}
\newcommand{\beqs}{\vspace{0mm}\begin{eqnarray}}
\newcommand{\eeqs}{\vspace{0mm}\end{eqnarray}}
\newcommand{\barr}{\begin{array}}
\newcommand{\earr}{\end{array}}
\newtheorem{theorem}{Theorem} 
\newtheorem{lemma}{Lemma}
\newtheorem{remark}{Remark}
\newtcolorbox{AIbox}[2][]{aibox,title=#2,#1}
\definecolor{commentcolour}{rgb}{0.3,0.7,0.2}
\definecolor{backcolour}{rgb}{0.98,0.98,0.98}
\lstdefinelanguage{markdown}{
    comment=[l]{\#},
    morestring=[s]{```}{```},
    commentstyle=\color{commentcolour}\bfseries,
    stringstyle=\color{blue},
    basicstyle=\scriptsize\ttfamily,
    showstringspaces=false,
    breaklines=true,
    breakautoindent=false,
    breakindent=0pt,
    backgroundcolor=\color{backcolour},
}
\lstdefinestyle{mystyle}{
    morekeywords={self},
    basicstyle=\footnotesize\ttfamily,
    keywordstyle=\color{blue},
    commentstyle=\color{commentcolour}\bfseries,
    breaklines=true,
    breakautoindent=false,
    showstringspaces=false,
    stringstyle=\color{red},
    escapechar=@
}
\lstdefinelanguage{PythonPlus}[]{Python}{
  morekeywords=[1]{,as,assert,nonlocal,with,yield,self,True,False,None,} 
  morekeywords=[2]{,__init__,__add__,__mul__,__div__,__sub__,__call__,__getitem__,__setitem__,__eq__,__ne__,__nonzero__,__rmul__,__radd__,__repr__,__str__,__get__,__truediv__,__pow__,__name__,__future__,__all__,}, 
  morekeywords=[3]{,object,type,isinstance,copy,deepcopy,zip,enumerate,reversed,list,set,len,dict,tuple,range,xrange,append,execfile,real,imag,reduce,str,repr,}, 
  morekeywords=[4]{,Exception,NameError,IndexError,SyntaxError,TypeError,ValueError,OverflowError,ZeroDivisionError,}, 
  morekeywords=[5]{,ode,fsolve,sqrt,exp,sin,cos,arctan,arctan2,arccos,pi, array,norm,solve,dot,arange,isscalar,max,sum,flatten,shape,reshape,find,any,all,abs,plot,linspace,legend,quad,polyval,polyfit,hstack,concatenate,vstack,column_stack,empty,zeros,ones,rand,vander,grid,pcolor,eig,eigs,eigvals,svd,qr,tan,det,logspace,roll,min,mean,cumsum,cumprod,diff,vectorize,lstsq,cla,eye,xlabel,ylabel,squeeze,}, 
}
\title{In-Context Semi-Supervised Learning}
\author[1]{Jiashuo Fan$^*$}
\author[1]{Paul Rosu$^*$}
\author[1]{Aaron T. Wang}
\author[1]{Zeyu Michael Li}
\author[1]{Lawrence Carin}
\author[1]{Xiang Cheng}
\affil[1]{Department of Electrical and Computer Engineering, Duke University}
\affil[ ]{\texttt{\{jf381, pr145, tw231,zeyu.li030, lcarin, xiang.cheng\}@duke.edu}}
\date{\vspace{-2em}}
\begin{document}

\maketitle

\setlength{\skip\footins}{30pt} 
\maketitle
\def\thefootnote{*}\footnotetext{Equal contribution.}\def\thefootnote{\arabic{footnote}}

\begin{abstract}
There has been significant recent interest on understanding the capacity of Transformers for in-context learning (ICL), yet most theory focuses on supervised settings with explicitly \emph{labeled} pairs. In practice, Transformers often perform well even when labels are sparse or absent, suggesting crucial structure within unlabeled contextual demonstrations. We introduce and study \emph{in-context semi-supervised learning (IC-SSL)}, where a small set of labeled examples is accompanied by many unlabeled points, and show that Transformers can leverage the unlabeled context to learn a robust, context-dependent representation. This representation enables accurate predictions and markedly improves performance in low-label regimes, offering foundational insights into how Transformers exploit unlabeled context for representation learning within the ICL framework.

\end{abstract}

\section{Introduction}

\vspace{-4mm}
The Transformer \citep{Attention_All} has revolutionized natural language processing, achieving remarkable success in tasks like language generation \cite{Attention_All,radford2019language,BERT,LLM_few,llama,deepseek}; it has also been applied in many other areas, including vision \cite{ViT}, network analysis \cite{cheng2025graph,graphTransformer_PEs,graphTransformer_PEs_rethink} and  biology \cite{Transformers_mol_biology}. Motivated by the demonstrated effectiveness of Transformers for few-shot (or in-context) learning \cite{LLM_few}, a promising line of recent research has focused on functional data \cite{vonoswald2023transformers,ahn2023transformers,Zhang23,Schlag21}. In this setting one is given a set of contextual data pairs, where each pair is characterized by the observed input and output of a {\em latent} function. It has been demonstrated that a properly designed Transformer can in its forward pass perform functional gradient descent (GD) -- or close variations thereof -- for the function, and it can use that inference to predict the expected functional outcome for a specified query input to the function. The Transformer can perform this inference in-context, with no parameter fine-tuning. 

Early work on Transformer-based in-context learning (ICL) focused on linear functions \cite{vonoswald2023transformers,ahn2023transformers,Zhang23,Schlag21}, and that has been extended recently to nonlinear functions that reside in a reproducing kernel Hilbert space (RKHS)  \cite{cheng2024transformers}. These works focus on learning real-valued outcomes, and similar analysis was recently developed for categorical outcomes \cite{Aaron_ICL}. Categorical observations are of interest when studying ICL for classification problems. 

Prior work on Transformer-based ICL largely treats the supervised case, using contextual {\em labeled} pairs. When unlabeled points appear, their labels are inferred independently, ignoring structure in the full unlabeled set. Yet in practice we have many unlabeled samples and few labels, and so it is of interest for ICL to leverage the context provided by {\em all} of this data.

\begin{figure}[t]
\centerline{
\includegraphics[width=\linewidth]{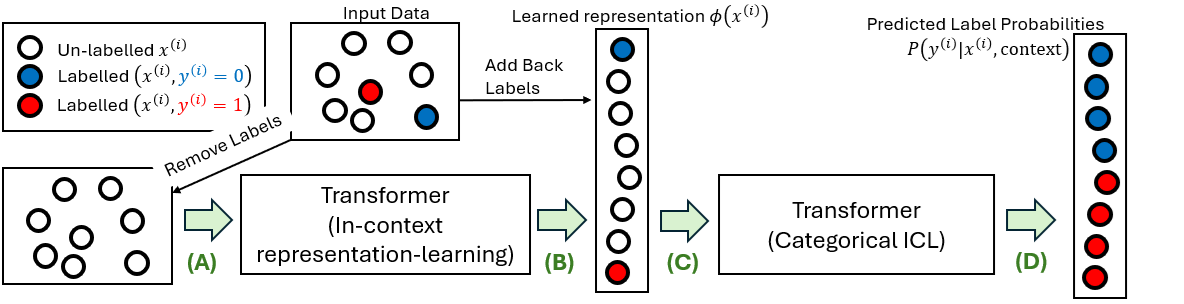}}\vspace{-2mm}
\caption{\small Setup for semi-supervised in-context learning with a Transformer. Input Data consists of labeled pairs $\{(\tx{1},\ty{1}),\dots,(\tx{m},\ty{m})\}$, where $\ty{i}\in\{\blue{0}, \red{1}\}$, and unlabeled data $\{\tx{m+1},\dots,\tx{n}\}$. In stage \green{(A)}, a Transformer module takes as input $\{\tx{1},\dots,\tx{n}\}$ and outputs $\phi(\tx{i})$, where $\phi(\cdot)$ is a \textbf{context-dependent feature representation}. In stage \green{(B)}, we augment $\{\phi(\tx{1}),\dots,\phi(\tx{n})\}$ with the $m$ \emph{observed labels}. In stage \green{(C)}, a second Transformer module takes this augmented set as input. In stage \green{(D)}, the second Transformer module outputs prediction probabilities of $\ty{m+1},\dots,\ty{n}$ for each unlabeled $\tx{i}$ in standard ICL fashion. The two Transformer modules combine into a single Transformer, trained end-to-end. As detailed in Section \ref{s:transformer_construction}, the first module is motivated by learning an eigenmap of the Laplacian, and the second module performs ICL with categorical observations by implementing gradient descent at inference time.}
\vspace{-6mm}
\label{fig:summary}
\end{figure}

\vspace{-4mm}
\subsection{Outline of Contributions}
\vspace{-3mm}

\paragraph{In-context semi-supervised learning setup.} We extend Transformer-based in-context learning to a semi-supervised setting, broadening the concept of ``context'' to leverage both labeled and unlabeled data within a unified end-to-end Transformer architecture (Figure \ref{fig:summary}). We demonstrate how Transformers can effectively perform semi-supervised ICL, using scarce labeled data alongside abundant unlabeled examples, without the need for separate preprocessing of inputs.
\vspace{-4mm}
\paragraph{A two-stage Transformer construction for representation learning + ICL.}
We construct a two-stage Transformer (Section~\ref{s:transformer_construction}). The first stage performs representation learning from unlabeled data by forming a discrete Laplacian from local Euclidean distances, then computing an Eigenmap; this yields context-dependent manifold structure in the forward pass (Section~\ref{ss:transformer_representation_learning}). The second stage (Section~\ref{sec:transformer_supervised_learning}) is a construction for a Transformer‑based ICL module that can be shown to perform in‑context learning for categorical observations by implementing functional gradient descent in its forward pass. \emph{Mechanistically}, each stage is capable of implementing an explicit algorithm (1. Laplacian construction + power iteration; 2. kernelized GD for in-context inference), providing a first step toward a transparent account of how attention and MLPs realize semi-supervised ICL computations. In practice, we instantiate both stages inside a single Transformer and train the model end-to-end with the IC-SSL objective in Eq.~\eqref{e:icl_ss_loss}; the modular construction serves as a strong mechanistic inductive bias/initialization rather than a hard constraint, and joint optimization can deviate from and improve upon the strict two-stage algorithm when beneficial. In addition, the construction serves as a \emph{prototype} for analyzing full end-to-end Transformers: it gives a modular reference representation against which we can compare baseline Transformers trained directly on data (e.g., via separation and alignment analyses in Section~\ref{sec:experiments}).

\vspace{-3mm}
\paragraph{Mechanistic understanding of IC-SSL.}
Section~\ref{sec:experiments} examines our method across manifolds of increasing complexity — low‑dimensional manifolds in $\Re^3$ (Figure~\ref{fig:synthetic_manifolds}), higher‑dimensional product manifolds in $\Re^{15}$, and a high‑dimensional image manifold from Stable Diffusion v1.5 (Figure~\ref{fig:manifolds}) — where our construction consistently outperforms strong baselines. We then use the construction as a \emph{reference model} to interpret how an unconstrained, end‑to‑end Transformer learns to perform in-context semi-supervised learning on real data (ImageNet100). Our model serves as a reference point for what representations are learned by the standard Transformer (Table~\ref{tab:alignment_metrics}), as well as for understanding the phase-transitions in the standard Transformer's learning curves (Figure~\ref{fig:imagenet100_icl}).
\vspace{-3mm}
\paragraph{Inductive bias and learning.}
Our two-stage design imposes a concrete inductive bias—local, sparse attention that can effectively implement spectral feature learning (first stage) and gradient-based learning (second stage) This makes IC‑SSL markedly more sample‑efficient in low‑data regimes. As shown in Section~\ref{sec:experiments}, across elementary manifolds in $\Re^3$ (Figure~\ref{fig:synthetic_manifolds}), higher‑dimensional product manifolds in $\Re^{15}$ (Figure~\ref{fig:product_manifold}), and high‑dimensional image manifolds (Figure~\ref{fig:manifolds}), this bias translates into consistently higher accuracy than strong baselines, including variants that rely on offline Laplacian Eigenmaps, even with few labeled examples. The same structural bias also supports robust generalization: the model retains strong in‑distribution performance and transfers to previously unseen geometries (Figure~\ref{fig:ood_cylinder}) and modalities (Figure~\ref{fig:manifolds}), indicating that it learns geometry‑aware computations rather than overfitting to a specific dataset or coordinate system.

\begin{figure}[h]
\centerline{
\includegraphics[width=\linewidth]{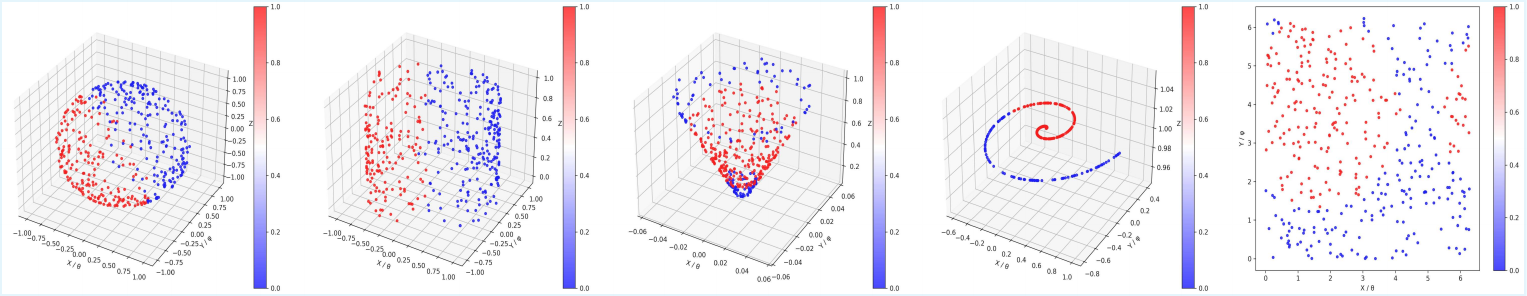}}\vspace{-2mm}
\caption{\small Synthetic manifolds we use. Left to right: Sphere $(\mathbb{S}^2)$, Right Circular Cylinder $(\text{C})$, Right Circular Cone $(\text{Cone}_\alpha)$, Archimedean Spiral ($\text{Swiss-Roll, SR}$), Flat Torus $(\mathbb{T}^2)$. The colors reflect the binary labels.
\vspace{-5mm}}
\label{fig:synthetic_manifolds}
\end{figure}

\vspace{-1mm}
\subsection{Related Work}
\vspace{-3mm}



Semi-supervised learning (SSL) has been a problem of long-standing interest to the machine learning community \cite{semi_sup_survey,deep_semi_sup,book_semi_sup}. It is particularly valuable when labeled data are scarce, multiple unlabeled samples are observed at once, and there is an underlying structure associated with the covariates from both labeled and unlabeled data. Despite the recent surge in research on Transformer-based In-Context Learning (ICL) \cite{vonoswald2023transformers,ahn2023transformers,Zhang23,Schlag21,cheng2024transformers,Aaron_ICL}, standard implementations often overlook the potential of the unlabeled portion of the context.

Recent work has begun to investigate this intersection, primarily focusing on scaling and data selection. \cite{agarwal2024many} demonstrated that scaling to many-shot prompts yields significant gains and introduced Unsupervised ICL (using inputs only) to reduce dependence on curated labels. Similarly, \cite{chen2025maple} improved many-shot ICL by influence-selecting unlabeled examples and pseudo-labeling them with an LLM. From a theoretical perspective, \cite{li2025and} proved that while a single-layer linear-attention model cannot exploit unlabeled data, depth or looping constructs can implement polynomial estimators akin to the Expectation-Maximization (EM) algorithm, showing empirical gains on tabular tasks.

These findings motivate our focus on the fundamental mechanism by which Transformers leverage unlabeled context. Unlike previous approaches that rely on adding pseudo-labeled shots \cite{chen2025maple, agarwal2024many}, we argue that Transformers can extract structure directly from unlabeled tokens. This relates to the field of spectral geometry and graph-based embeddings. Traditionally, embedding vectors for unlabeled samples—often derived from the eigenvectors of the graph Laplacian \cite{coifman2006diffusion}—have been calculated offline and used as positional embeddings (PEs) in graph Transformers \cite{graphTransformer_PEs, graphTransformer_PEs_rethink}. While \cite{cheng2025graph} recently showed that Transformers can compute certain PEs during inference, we are unaware of any prior work that fully integrates this into a semi-supervised ICL framework.

In this work, we bridge this gap by demonstrating that a full Transformer extracts geometry-aligned representations in-context. By computing a representation from unlabeled tokens and using subsequent attention layers to implement a gradient-based learning algorithm, our approach provides an explicit forward-pass mechanism that explains why depth is critical in semi-supervised ICL. This mechanism is supported by cross-domain evidence—ranging from synthetic manifolds to diffusion-generated images and ImageNet100 features—showing that the Transformer learns representations consistent with the underlying geometry of the data.

\vspace{-3mm}
\section{Problem Setup}
\vspace{-3mm}


\textbf{Background: In-Context Supervised Learning.} For in-context $\emph{supervised}$ learning \cite{vonoswald2023transformers,cheng2024transformers}, the Transformer is presented with contextual data $\C=\lrbb{(\tx{1},\ty{1}),\dots,(\tx{m},\ty{m})}$, where $\tx{i}\in \Re^d$, and $\ty{i}$ is a label. The query is $\tx{m+1}$. The goal of prior work \cite{vonoswald2023transformers,cheng2024transformers} was examination of the Transformer to infer $\mathbb{E}(y^{(m+1)})$ based on the contextual data. In this setting, it was assumed that there is an underlying function $f(x)$ responsible for the relationship between any input $x^{(i)}$ and output $y^{(i)}$. In most prior work of this type, investigation was performed on the potential for Transformers to infer context-dependent $f(x)$ on the forward pass, and use it to estimate $\mathbb{E}(y^{(m+1)})$.

In most prior work \cite{vonoswald2023transformers,cheng2024transformers} the outcomes $y^{(i)}$ were real. Recently this has been extended to categorical $y^{(i)}\in\{1,\dots,C\}$ \cite{Aaron_ICL}, and we will consider that setting here. Given $f(x)$, the relationship between $x^{(i)}$ and $y^{(i)}$ is modeled via the softmax (like at the output of language models \cite{Attention_All}):
%
\begin{align*}
    & \Pr{\ty{i}  | \tx{i}} = \frac{\exp\lrp{w_{\ty{i}}^\top f(\tx{i})}}{\sum_{c=1}^C \exp\lrp{w_{c}^\top f(\tx{i})}}.
    \numberthis \label{e:standard_categorical_icl}
\end{align*}
For each category (label type) $c$ there is a learned embedding vector $w_c\in\mathbb{R}^{d^\prime}$, and the function $f(x)$ has a $d^\prime$-dimensional vector output. In-context learning (ICL) in this setting was first developed in \cite{Aaron_ICL}, and for reader convenience details are provided in  Appendix \ref{sec:A_GD_derive}.

A limitation of this approach is that the context that is exploited is only manifested in the form of the observed labeled data, and each query is analyzed in isolation. A contribution of this paper concerns extending this concept to a {\em semi-supervised ICL} setting, as discussed next.



\textbf{In-context \emph{semi-supervised} learning.} Let $n$ denote the total number of points, and let $m<n$ denote the number of labeled points. The input to the Transformer is thus
\beq
    \C = \underbrace{\lrbb{(\tx{1},\ty{1}),\dots,(\tx{m},\ty{m})}}_{\text{labeled points}} \cup \underbrace{\lrbb{\tx{m+1},\dots,\tx{n}}}_{\text{unlabeled points}}\label{eq:context}.
\eeq
The goal is to predict the expected values of $\ty{m+1},\dots,\ty{n}$, with this prediction placed within the context of {\em all} data, all $n$ observed covariates and the $m$ labels. 

Define the matrix $X=(x^{(1)},\dots,x^{(n)})\in\mathbb{R}^{d\times n}$, and the function $\phi(X)$ that outputs a matrix $\Phi=(\phi^{(1)},\dots,\phi^{(n)})$, with $\phi^{(i)}$ a feature vector for $x^{(i)}$. Importantly, $\phi(X)$ is a function of {\em all} observed covariates, and therefore features $\phi^{(i)}$ are not only dependent on $x^{(i)}$, but also account for context (e.g., manifold information) provided by the columns of $X$. The model in (\ref{e:standard_categorical_icl}) generalizes to
\begin{align*}
    & \Pr{\ty{i}  |X} = {\exp\lrp{w_{\ty{i}}^\top f({\phi^{(i)}})}}/{\sum_{c=1}^C \exp\lrp{w_{c}^\top f({\phi^{(i)}})}}.
    \numberthis \label{e:standard_categorical_icl_1}
\end{align*}
Note that the observation $y^{(i)}$ associated with any sample is now dependent on {\em all} covariates, via the function $\phi(X)$. Further, the ICL prediction we develop in Section \ref{sec:transformer_supervised_learning} is based on all observed labels $y^{(1)},\dots,y^{(m)}$; thus, cumulatively, the setup will utilize all contextual data, labeled and unlabeled.

\vspace{-4mm}
\subsection{In-context Semi-Supervised Loss}
\vspace{-3mm}

We will define two Transformers, $\TF_{rep}$ and $\TF_{sup}$, which are respectively responsible for inferring $\phi$ (representation learning), and subsequently inferring $f$. Architectural details of $\TF_{rep}$ and $\TF_{sup}$ are discussed in Section \ref{s:transformer_construction} below. Let $\{\theta_{rep}, \theta_{sup}\}$ denote parameters for $\{\TF_{rep},\TF_{sup}\}$. 
Let $\theta = \theta_{rep} \cup \theta_{sup} \cup \{w_c\}_{c=1,\dots,C}$ denote the union of all parameters. Then given a context \eqref{eq:context} of encompassing labeled and unlabeled samples, the \emph{in-context semi-supervised cross-entropy loss} is:
\begin{align*}
    L(\theta; \C) = \frac{1}{n-m}\sum_{i=m+1}^n \log\lrp{\frac{\exp\lrp{w_{\ty{i}}^\top f(\phi(\tx{i})) }}{\sum_{c=1}^C \exp\lrp{w_{c}^\top f(\phi(\tx{i}))}}}.
    \numberthis \label{e:icl_ss_loss}
\end{align*}
When training, this loss is fit to the $n-m$ {\em unlabeled} samples (for which we have labels {\em when training}).
The training objective of the Transformer is thus $\min_{\theta} \mathbf{L}(\theta)$, where $ \mathbf{L}(\theta)=  \Ep{\C}{L(\theta ; \C)}$, i.e., an average over the available training contextual datasets. Note that the optimization is performed wrt the parameters $\theta$, not {\em directly} on the features $\phi^{(i)}$ and function $f(\phi^{(i)})$, which are manifested on the Transformer forward pass.


\vspace{-5mm}
\section{Transformer Construction}
\label{s:transformer_construction}
\vspace{-3mm}


\subsection{Transformer for In-context Representation Learning\label{sec:representation_learning}}
\label{ss:transformer_representation_learning}

\vspace{-2mm}
Given $\C$, We define the affinity matrix $\A \in \Re^{n\times n}$ with RBF bandwidth $h$ as $\A_{i,j} = \exp(- \lrn{\tx{i} - \tx{j}}_2^2/(2h))$, and $\A_{i,i}:=0$. Let $\D$ be the diagonal matrix defined as $\D_{i,i} = \sum_{j=1}^n \A_{ij}$. The discrete Laplacian $\gL:= \D - \A$, and its right normalized version $\hat{\gL} = I_{n\times n} - \A \D^{-1}$ are approximations to the Laplace-Beltrami operator \cite{coifman2006diffusion}. 

A common approach to manifold learning in semi-supervised learning is based on the following two-step procedure \citep{belkin2008towards,belkin2006convergence,bengio2013representation,hein2005graphs,hein2006manifold}: (I) approximate the Laplace-Beltrami operator over the manifold, by the discrete Laplacian matrix $\gL$; and (II) compute a representation $\phi(\tx{i})$ of each input token $\tx{i}$ based on $\gL$. Popular examples include the diffusion map \cite{coifman2006diffusion} and Laplacian Eigenmaps.

The design of our first Transformer module $\TF_{rep}$ is motivated by the above two-step approach. We will further decompose $\TF_{rep}$ into two sub-modules: $\TF^{\gL}$ (for forming a discrete Laplacican as in (I)) and $\TF^{\phi}$ (for computing the Eigenmap as in (II)). We prove by construction of $\TF_{rep} = \TF^{\phi} \circ \TF^{\gL}$ that they \emph{compute the Eigenmap $\phi$ of $\gL_h$, given $\tx{1},\dots,\tx{n}$ as inputs}. In the following, we will consider two variants of standard attention, denoted by $\attn_{linear}$ and $\attn_{rbf}$. These two variants replace the standard softmax activation by the linear and RBF functions respectively; their detailed definitions, along with other implementation details, are presented in Appendix  \ref{s:transformer_representation_learning}.\\

\textbf{Transformer For Computing Discrete Laplacian.}\\
Let $X\in \Re^{d\times n}$ denote the matrix whose $i^{th}$ column is the raw coordinates of the $i^{th}$ token $\tx{i}$. Let $Z_0 \in \Re^{(d+n)\times n}$ denote the concatenation of $X$ to $0_{n\times n}$, i.e. $Z_0^\top = [X^\top, 0_{n\times n}]$. This is equivalent to augmenting each $\tx{i}$ with $0_d$. In Lemma 1 provided in  Appendix \ref{s:transformer_representation_learning}, we show that 
there exists a \emph{simple Transformer} $\TF^{\gL}$, which can compute the discrete Laplacian $\hat{\gL}$, for any bandwidth $h$. Concretely, let $\TF^{\gL}(Z_0)\in \Re^{(d+n)\times n}$ denote the output of the 1-layer Transformer from Lemma 1. Then $[\TF^{\gL}]_{d:d+n} = \gL_h$. Notably, $\TF^{\gL}$ has a \emph{single-head}, a \emph{single layer}, and all its parameter matrices are zero everywhere except for a diagonal block of (scaled) identity. Intuitively, such a simple construction exists because the $\attn_{rbf}$ used in $\TF^{\gL}$ exactly computes the $\exp(- \lrn{\tx{i} - \tx{j}}_2^2/(2h))$ weights in $\gL_h$. In practice, however, we use a multi-layer, multi-head Transformer for $\TF^{\gL}$ for two reasons: first, multiple heads us to compute a \emph{combination} of different Laplacians $\gL_h$, each with a different RBF bandwidth $h$; second, the higher expressivity that comes with more layers enables the $\TF^{\gL}$ to compute potentially more complex functions during end-to-end training.

\textbf{Transformer for computing Eigenmap.}\\
Let $\tilde{\gL} = [\TF^{\gL}(Z_0)]_{d:d+n}\in \Re^{n\times n}$ denote the output of preceding Transformer stage. Given $\tilde{\gL}$ as input, the next stage $\TF^{\phi}$ takes as input $Z_0^\top = [\tilde{\gL}{^\top}, 0_{d\times d}, I_{d\times k}]$. The architecture is a modification of the construction in \cite{cheng2024transformers} for computing eigenvectors of graph Laplacians. In  Lemma 2 provided in Appendix \ref{sec:lemma2}, we show that there exists a Transformer $\TF^{\phi}$ that takes in $Z_0$ as defined above, and outputs the Laplacian Eigenmap for tokens $1...n$. Concretely, let the output of $\TF^{\phi}$ be 
\begin{align*}
    [\TF^{\phi}(Z_0)]^\top = [{Z^{out}_1}^\top, {Z^{out}_2}^\top, {Z^{out}_3}^\top],
\end{align*}
where $Z^{out}_1\in \Re^{n\times n}$, $Z^{out}_2\in \Re^{n\times n}$, $Z^{out}_3\in \Re^{k\times n}$. Lemma 2 shows that $Z^{out}_3 \approx [v_1...v_k]$, where $v_i$ is the $i^{th}$ smallest eigenvector of $\tilde{\gL}$ in the input to $\TF^{\phi}$. \emph{The $i^{th}$ column $Z^{out}_3$ constitutes an eigenmap of the $i^{th}$ token.} Henceforth, we use $\phi^{(i)}$ to denote the $i^{th}$ column of $Z^{out}_3$. The parameter configuration of $\TF^{\phi}$ is once again simple; its $W^Q, W^K, W^V$ matrices are all all diagonal. A key difference from $\TF^{\gL}$ is that $\TF^{\phi}$ uses the \emph{linear attention} $\attn_{linear}$.  This is important because linear attention is well-suited to implementing a power-method-like algorithm that efficiently computes the eigenvectors $v_1...v_k$. We defer details to the proof of Lemma 2 to  Appendix \ref{sec:lemma2}. 

\subsection{Transformer for In-context Supervised Learning} 
\label{sec:transformer_supervised_learning}
From the above discussion, for input covariate $x^{(i)}$, the feature-representation stage of our model yields associated features $\phi^{(i)}$, which depend on {\em all} covariates $x^{(1)},\dots,x^{(n)}$. We now discuss how to utilize these features in the subsequent ICL stage with categorical observations.

Our derivation is based on the assumption that $f(\phi)$ resides in a reproducing kernel Hilbert space (RKHS) \cite{scholkopf2002learning}, but the setup extends to softmax-based attention kernels as well \cite{Aaron_ICL,cheng2024transformers}. From the RKHS perspective, let $f(\phi)=A\psi(\phi)$, with $\psi(\phi)$ a {\em fixed} mapping of features $\phi$ to a Hilbert space, and the parameters acting in that space are associated with matrix $A$. 


Assuming we have $m$ labeled samples in a given contextual dataset, indexed $i=1,\dots,m$, the cross-entropy cost function for inferring the parameters  $A\in\mathbb{R}^{d^\prime\times m}$ may be expressed as
\beq
\mathcal{L}(A)=-\frac{1}{m}\sum_{i=1}^m\log\left[\frac{\exp[w_{y_i}^T(A\psi(\phi^{(i)}))]}{\sum_{c=1}^{C}\exp[w_{c}^T(A\psi(\phi^{(i)}))]}\right]
\eeq
We emphasize that this loss is minimized anew for each contextual dataset, and it will be implemented in the Transformer forward pass, for Transformer $\TF_{sup}$. Performing gradient descent (GD) wrt parameters $A$, and using $\kappa(\phi^{(i)},\phi^{(j)})=\psi(\phi^{(i)})^\top \psi(\phi^{(j)})$ one yields the functional update rule for $i=1,\dots,n$ (all samples) (see Appendix \ref{sec:A_GD_derive})
%
\begin{equation}
f_{\ell+1}^{(i)}=f_\ell^{(i)}+\Delta f_\ell^{(i)}~,~~~~~~~~
\Delta f_\ell^{(i)}=\sum_{j=1}^m \alpha\left[w_{y^{(j)}}-\mathbb{E}(w|f_\ell^{(j)})\right]\kappa(\phi^{(i)},\phi^{(j)})\label{eq:f2}
\end{equation}
\beq
\mathbb{E}(w|f_\ell^{(j)})=\sum_{c=1}^{C}w_{c}\Big[\frac{\exp[w_c^Tf_\ell^{(j)}]}{\sum_{c^\prime=1}^C \exp[w_{c^\prime}^Tf_\ell^{(j)}]}\Big]\approx \text{MLP}_\gamma(f_\ell^{(j)})\label{eq:expectation}
\eeq 
with $\alpha$ the GD learning rate. Importantly, in (\ref{eq:f2}) the labeled data are employed within the sum over $j=1,\dots,m$, and the relationship to the unlabeled data for $i=m+1,\dots,n$ is handled by the kernel $\kappa(\phi^{(i)},\phi^{(j)})$. Hence, we see that the representation-learning stage of the Transformer, discussed above, has the task of designing features $\phi^{(i)}$ such that the kernel $\kappa(\phi^{(i)},\phi^{(j)})$ captures the appropriate interrelationship between the data samples for predicting associated labels.


The form of the update in (\ref{eq:f2}) is the same as that considered in previous Transformer ICL research \cite{vonoswald2023transformers,ahn2023transformers,Zhang23,Schlag21,cheng2024transformers,Aaron_ICL}, and it naturally aligns with a self-attention layer (which considered real-valued observations).

Once the vector $f_\ell^{(i)}$ is updated to $f_{\ell+1}^{(i)}=f_\ell^{(i)}+\Delta f_\ell^{(i)}$, the expectation in (\ref{eq:expectation}) must be updated. Expectation (\ref{eq:expectation}) may be viewed as a nonlinear function of (now) $f_{\ell+1}^{(i)}$. Therefore, the attention layer is followed by an MLP layer, represented in (\ref{eq:expectation}) by $\text{MLP}_\gamma(f_{\ell+1}^{(i)})$, with parameters $\gamma$. The MLP parameters $\gamma$ are learned along with all other parameters of the Transformer. Details of the MLP layer and the other elements of this ICL Transformer are provided in  Appendix  \ref{sec:A_GD_derive}.

\section{Experiments\label{sec:experiments}}
This section describes the benchmarks we devised to study the
semi-supervised, in-context learning capabilities of our
end-to-end Transformer. After presenting the baseline comparative benchmark models, and the data-generation
pipelines, we discuss the evaluations we report. For all experiments, we vary the percentage of labeled data among the fixed $n=100$ total samples. Unless stated otherwise, all results that employ ICL for classification use a one layer ICL Transformer (as in Section \ref{sec:transformer_supervised_learning}) and utilize the RBF kernel as the core component of the GD ICL head. Further implementation details required for exact replication are collected in Appendices \ref{s:transformer_representation_learning} and \ref{sec:A_GD_derive}.

Our evaluation focuses on three key aspects: (1) the ability of the Transformer to learn meaningful representations from unlabeled data, (2) the effectiveness of the end-to-end approach compared to traditional methods, (3) the generalization capabilities across different manifolds and data distributions, and (4) mechanistic understanding of in-context semi-supervised learning (IC-SSL) on real-world data through an ImageNet100 experiment.

\subsection{Models Under Evaluation:} \label{sec:otherModels}

We consider the following models under the naming notation of \textit{[Data+Model]}, reflective of the form of input to the classifier (Data), and the model used to classify the unlabled samples (Model).


\vspace{-2mm}
\begin{description}
  \item[\textsc{Orig+E2E-ICL (Our model)}]                  
        The raw \textit{original} coordinate matrix is fed directly into our \textbf{end-to-end Transformer}.           
        See Section~\ref{s:transformer_construction} and  Appendix  \ref{s:transformer_representation_learning} for a full architectural breakdown.           
        Crucially, all three constituent modules—the Laplacian predictor, the eigenvector extractor, and the in-context GD head—are trained \emph{jointly} minimizing their combined loss.
        
  \item[\textsc{Eig+ICL}]         
        The ICL head uses \textit{ground-truth} bottom-$k$ Laplacian eigenvectors as input, isolating the performance contribution of representation learning.
        
  \item[\textsc{Eig+LR}]         
        Standard $\ell_2$-regularized logistic regression on \textit{ground-truth} eigenvector features.
        
  \item[\textsc{Orig+RBF-LR}]         
        Kernel logistic regression with Gaussian RBF kernel on the \emph{original} coordinates.
        
  \item[\textsc{Orig+ICL}]         
        The in-context learning head is applied directly on the \emph{original} coordinates, without Laplacian eigenvector features.
\end{description}

\subsection{ImageNet100: Semi-supervised IC\!-\!SSL on real data}
\label{subsec:imagenet100}

\paragraph{Experimental Design.}
We evaluate on ImageNet100, a 100-class subset of ILSVRC-2012 \cite{russakovsky2015imagenet} 
commonly used for efficient representation learning \cite{caron2018deep,chen2020simple}. 
Here, the ``dataset size'' denotes the number of in-context datasets (tasks). 
Each dataset has 200 batches, each batch sampling two random classes (50 images per class, 100 total) \emph{with replacement}, 
and VGG-29 features~\cite{simonyan2014vgg} are used as inputs. 
The batch size is fixed at 200, with $3\%$ labeled samples (a $39\%$ variant is reported in Appendix  \ref{sec:A_Image_Transfer}). 
For comparison, we use a scale-matched Transformer baseline (2-layer encoder, hidden dimension 128, $\sim$1.5M parameters), 
where class labels are projected into the same embedding space as the inputs and unlabeled samples are assigned a special \texttt{unknown} token, 
following the MetaICL formulation \cite{min2022metaicl} to ensure fairness. 
Ground-truth \textbf{EIG} embeddings are obtained via Laplacian eigen-decomposition of the affinity graph as a spectral reference.

We compare three systems: (i) our \emph{constructed} two‑stage model, (ii) a scale‑matched 2‑layer \emph{baseline Transformer} trained end‑to‑end on the same episodes, and (iii) an \emph{ICL head on raw VGG features} (Orig+ICL). We report standard ICL accuracy and a simple \emph{separation score}~\cite{caron2018deep,oord2018representation,CCI2024} (intra‑class similarity minus inter‑class similarity), which serves as a coarse proxy for representation quality (Figure~\ref{fig:imagenet100_icl}).

\textbf{Accuracy and separation.} Our model attains higher accuracy than the baseline Transformer, with the largest gaps appearing at small training‑set sizes; both models outperform EIG+ICL, indicating that \emph{in‑context representation learning} is crucial and that raw VGG features have poor separation. Separation and accuracy are correlated across data regimes: our model exhibits higher separation even with little data, and the accuracy advantage mirrors this difference (Figure~\ref{fig:imagenet100_icl}).

\textbf{Phase transition in the baseline Transformer.} The baseline Transformer performs poorly when the training set is smaller than \(\approx\!1000\) and undergoes a sharp \emph{phase transition} around \(\sim\!1200\), after which accuracy rises quickly. Strikingly, this jump aligns \emph{exactly} with an abrupt increase in separation score, linking the performance transition to a change in representational structure (Figure~\ref{fig:imagenet100_icl}).

\begin{wraptable}[11]{r}{0.65\linewidth}
 \vspace{-6mm}
\centering
\small
\setlength{\tabcolsep}{6pt}
\renewcommand{\arraystretch}{0.9}
\caption{Alignment metrics at labeled ratio $3\%$ and dataset size $=5000$, 
comparing three embeddings: (\textbf{Our model}), (\textbf{Transformer baseline}), and (\textbf{EIG}). 
We report three alignment metrics (higher is better) introduced in the Platonic Representation Hypothesis \cite{tsuchiya2023platonic}.}
\label{tab:alignment_metrics}
\resizebox{\linewidth}{!}{%
\begin{tabular}{lccc}
\toprule
Source–Target Pair & mNN $\uparrow$ & Cycle $\uparrow$ & LCS $\uparrow$ \\
\midrule
Our model – Transformer baseline & 0.302 & 0.289 & 0.193 \\
EIG – Our model                  & 0.304 & 0.295 & 0.195 \\
EIG – Transformer baseline       & 0.269 & 0.262 & 0.177 \\
\bottomrule
\end{tabular}%
}
\end{wraptable}

\textbf{Representational alignment.} To connect these trends to geometry, we use mutual $k$NN (mNN) alignment: for each sample, take its top-$k$ neighbors in two embeddings and compute the fraction of overlap between the two neighbor sets; the mNN score is the dataset-average (higher = more similar neighborhood structure)~\cite{tsuchiya2023platonic}. We compare three embeddings—ours, the baseline Transformer, and \emph{EIG} (top-4 Laplacian eigenvectors from the affinity graph). Alignment rises with training data. Our model starts highly aligned to EIG (spectral inductive bias), while the baseline progressively aligns with both; with enough data, \emph{baseline–ours} can exceed \emph{ours–EIG}, and \emph{baseline–EIG} improves but generally trails \emph{ours–EIG} at moderate scales. Additional metrics (Cycle, LCS) show the same pattern; by $5000$ episodes all three pairs exhibit high alignment (Table~\ref{tab:alignment_metrics}).

Figure~\ref{fig:imagenet100_icl} and Table~\ref{tab:alignment_metrics} show our two-stage model is competitive on ImageNet‑based artificial IC‑SSL tasks, using real image features, provides mechanistic insight via a phase transition in the baseline, and serves as a \emph{reference model} for how end-to-end Transformers acquire PE-like structure.
\vspace{-2mm}
\begin{figure}[htbp]
    \centering
    \includegraphics[width=\linewidth]{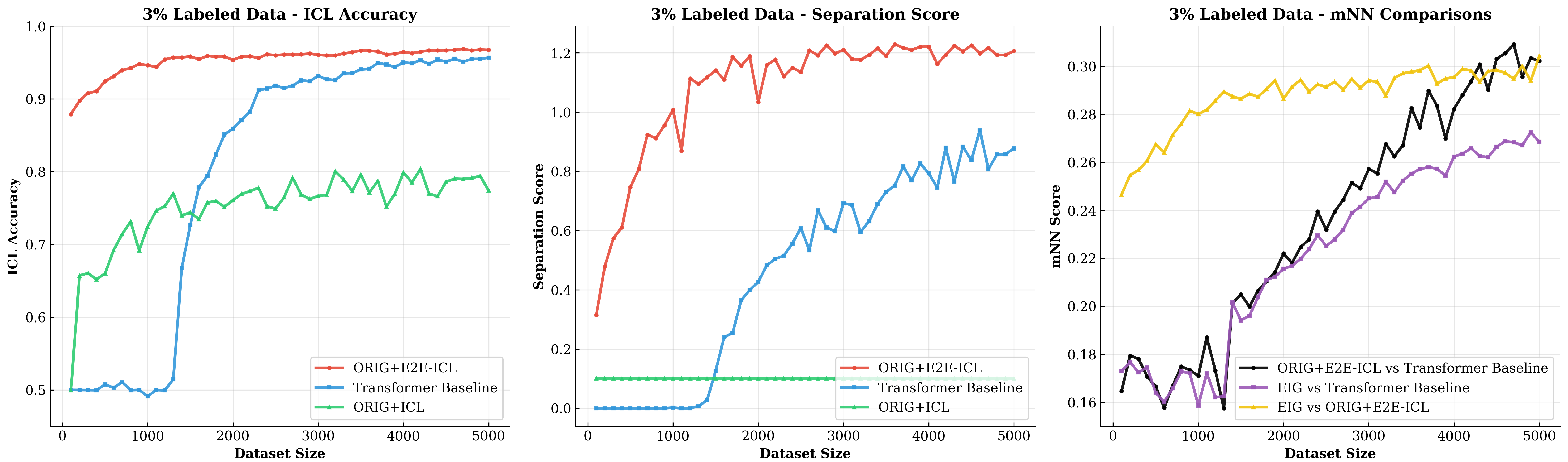}
    \vspace{-4mm}
    \caption{
In-context learning on ImageNet100 with 3\% labels; “dataset size” = number of constructed in-context tasks. Columns: left—ICL accuracy; middle—separation (intra- minus inter-class similarity); right—mNN neighborhood-overlap similarity. Methods: \textbf{Orig+E2E-ICL} (red), \textbf{Transformer baseline} (blue), \textbf{Orig+ICL} (green; VGG-29 features).
    }
    \vspace{-6mm}
    \label{fig:imagenet100_icl}
    
\end{figure}

\noindent\textbf{Data efficiency and spectral perspective.}
In a representative low-label regime (3\% labels, $N{=}5000$), our model improves both accuracy and separation more rapidly than the baseline Transformer, reflecting faster emergence of intra-class compactness and inter-class separation (Figure~\ref{fig:imagenet100_icl}). While both models eventually reach comparable performance at larger data scales, ours does so with substantially fewer samples. From a spectral viewpoint, both representations appear to approximate the same underlying geometry: alignment to the “EIG’’ (top Laplacian eigenvectors) strengthens with more data, with our model starting closer in low-data settings due to its inductive bias, and the baseline progressively converging toward both our model and the spectral reference (Table~\ref{tab:alignment_metrics}).


\subsection{Families of Manifolds and Their Geometry}
\label{subsec:families}

Let $(\mathcal{M},g)$ denote a smooth, connected Riemannian manifold with intrinsic distance $d_{\mathcal{M}}$. We use five such manifolds, visualized in Figure ~\ref{fig:synthetic_manifolds}. Each manifold admits a closed-form geodesic and a parametric map $\Phi\colon U\subset\mathbb{R}^{m}\!\to\!\mathbb{R}^{3}$. For example, for the sphere $\mathbb{S}^2$, $\Phi_{\mathrm{\mathbb{S}^2}}(\theta,\phi)= (\sin\theta\cos\phi,\;\sin\theta\sin\phi,\;\cos\theta)$, and its geodesic is $d_{\mathrm{\mathbb{S}^2}}(p,q)=\arccos(\langle p,q\rangle)$. Details of their parametric maps and geodesic derivations are in  Appendix \ref{ss:synthetic_manifold}. Many of these manifolds have had important applications in scientific machine learning \citep{de2022riemannian, jing2022torsional}.

In our experiments, we sample $n=100$ points uniformly from the manifold in $\mathbb{R}^3$, 
followed by random rotations + translations (see Appendix  \ref{ss:synthetic_manifold}) to generate a family of perturbed manifolds. 
For each set, we select a random center point and assign binary labels based on geodesic distance: 
points within a certain manifold-distance form the positive class, while others form the negative class.

\vspace{-2mm}
\subsubsection{Results on Individual Manifolds}
\label{ss:experiment_synthetic_manifold}

\textbf{In-distribution performance.}  
For each elementary manifold we train and evaluate every model on data drawn from that same manifold—e.g., the “cylinder” curves in Figure \ref{fig:individual_manifold_cylinder} are obtained by training exclusively on cylinder samples and measuring accuracy on a disjoint cylinder test split.  The figure shows the mean ± one standard deviation over three random initialisations, where each run uses an 90 \%/10 \% train–test split, identical label budgets, and fresh random draws of the unlabeled context points.  The same protocol applied to the sphere, cone, torus and Swiss-roll manifolds yields curves with the same qualitative ordering; those results are deferred to  Appendix \ref{subsec:id-vs-ood}.


For the cylinder our \textsc{Orig+E2E-ICL} model dominates all baselines across the entire label-budget spectrum, surpassing the strongest competitor (\textsc{Eig+ICL}) by roughly $5$–$7$ percentage points and reaching $\sim\!90\,\%$ accuracy once \(\ge 25\%\) of the points are labelled.  
The advantage is pronounced in the low-label regime (\(3\% \le \rho \le 15\%\)), confirming that end-to-end contextual representation learning is particularly valuable when supervision is scarce. Notably, direct classification on raw coordinates (\textsc{Orig+RBF-LR}) performs markedly worse than any manifold-aware alternative, highlighting the benefit of exploiting geometric structure rather than relying on purely supervised fits.

\begin{figure}[t]
\centering
\captionsetup[subfigure]{skip=-0.1pt}
\begin{subfigure}{0.32\textwidth}
  \scalebox{1}[1.15]{\includegraphics[width=\textwidth]{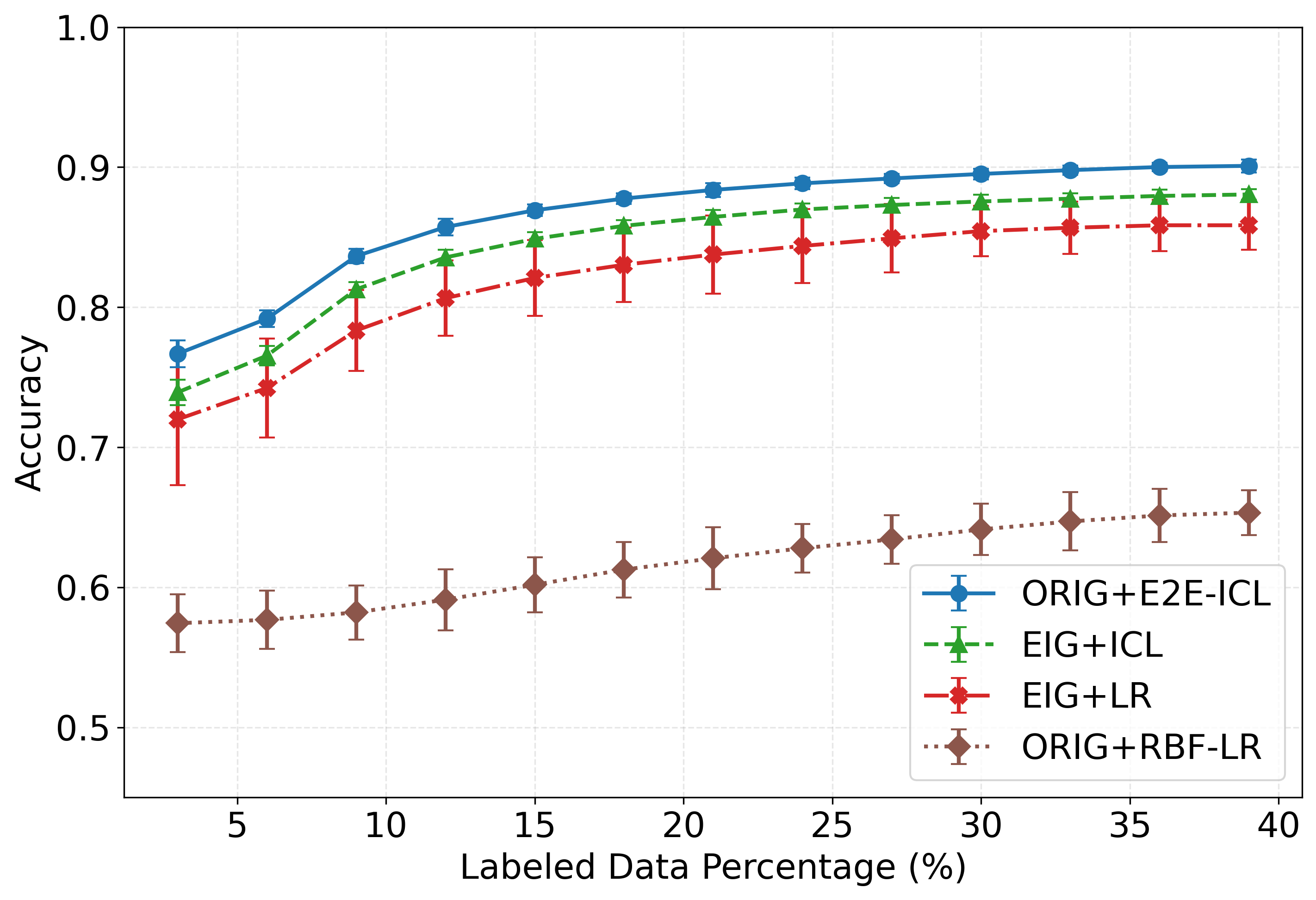}}
  \caption{\footnotesize{ID (cylinder)}}
  \label{fig:individual_manifold_cylinder}
\end{subfigure}
\hfill
\begin{subfigure}{0.32\textwidth}
  \scalebox{1}[1.15]{\includegraphics[width=\textwidth]{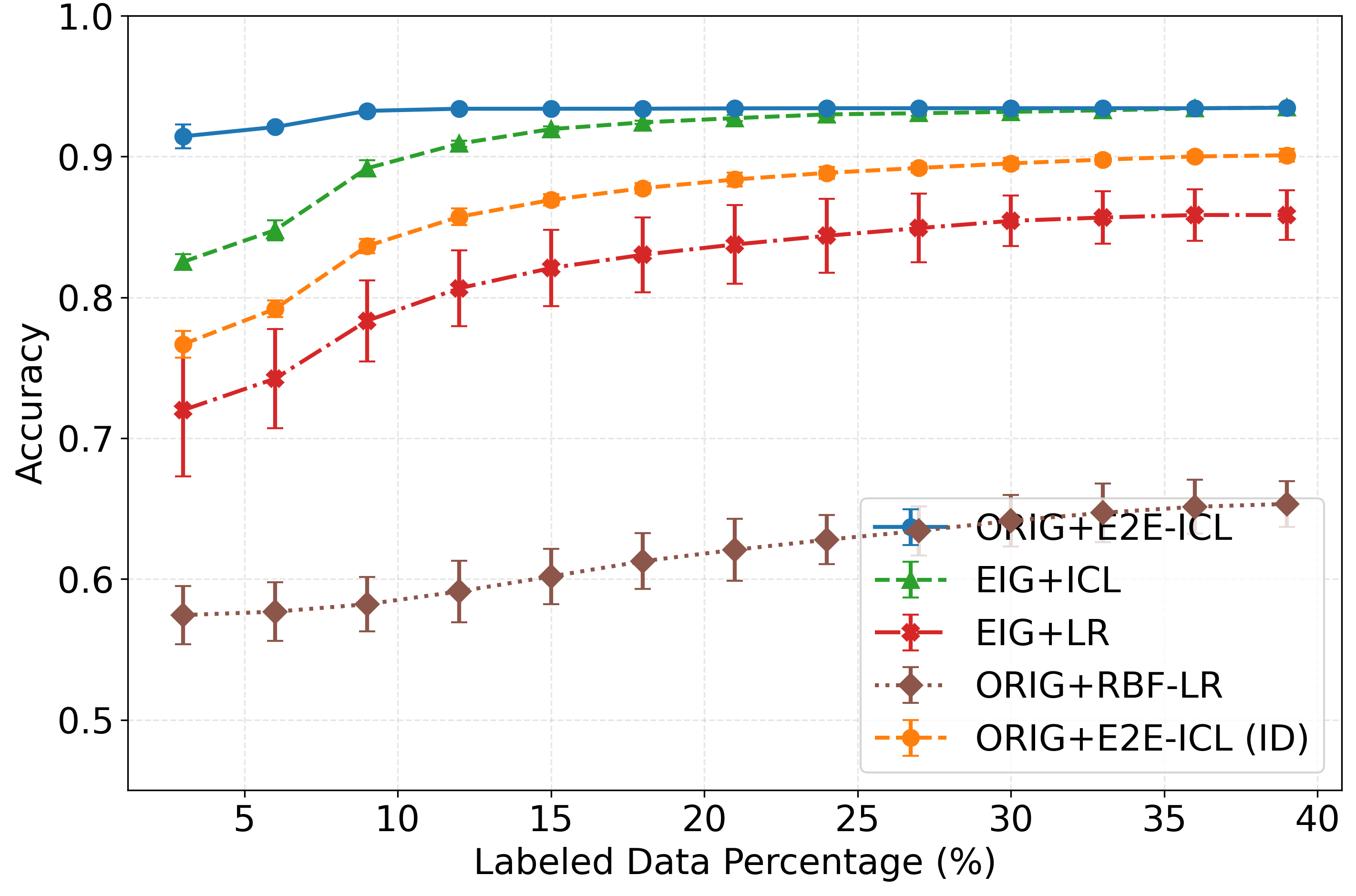}}
  \caption{\footnotesize{OOD (cylinder)}}
  \label{fig:ood_cylinder}
\end{subfigure}
\hfill
\begin{subfigure}{0.32\textwidth}
  \scalebox{1}[1.15]{\includegraphics[width=\textwidth]{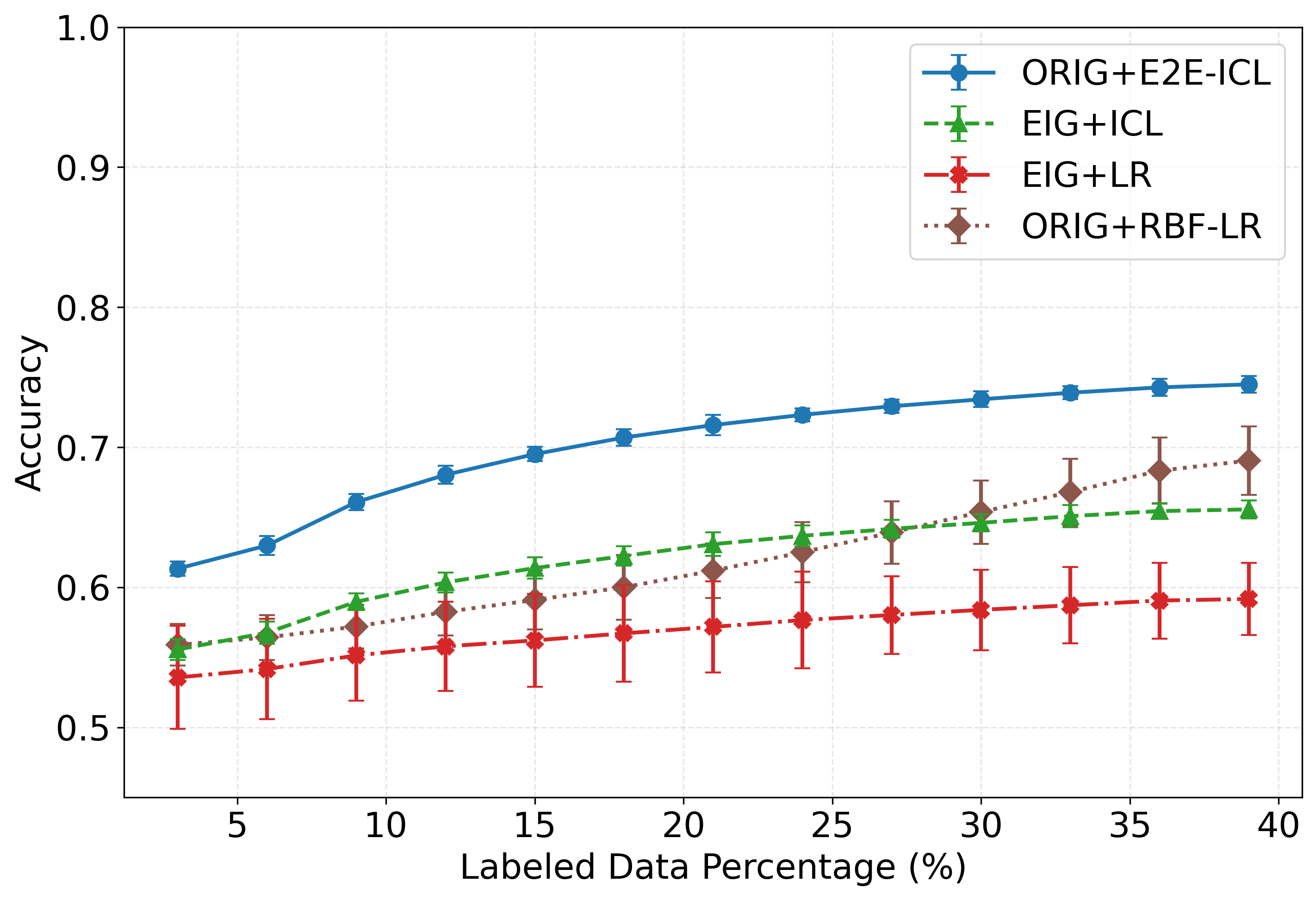}}
  \caption{\footnotesize{ID (product manifold)}}
  \label{fig:product_manifold}
\end{subfigure}
\vspace{-3mm}
\caption{\small 
Accuracy vs. labeled-sample ratio on manifold benchmarks. 
(a) In-distribution: train/test on cylinder. 
(b) OOD (cylinder): blue \textsc{Orig+E2E-ICL} and green \textsc{Orig+ICL} are trained on \{sphere, cone, torus, Swiss-roll\}, tested on cylinder; 
orange is the ID reference (train/test on cylinder). 
(c) In-distribution (product manifold): training and testing on the high-dimensional product manifold 
($\mathbb{S}^2 \times \text{C} \times \text{Cone}_\alpha \times \text{SR} \times \mathbb{T}^2$).}
\label{fig:manifolds}
\vspace{-4mm}
\end{figure}

\textbf{Kernel choice in the ICL head.}  
Replacing the RBF kernel with a linear kernel changes the performance and occasionally the ordering, but the dominance of the end-to-end approach remains; detailed ablations appear in Appendix \ref{subsec:kernel-comparison}.

{\bf Out-of-distribution (OOD) evaluation.}
To test generalizability, we train on four source manifolds— sphere, cone, torus, and Swiss-roll— and test on the held-out cylinder. Figure~\ref{fig:ood_cylinder} contrasts this OOD setting with an in-distribution (ID) baseline trained and tested on cylinder (Figure~\ref{fig:individual_manifold_cylinder}). Across label budgets, \textsc{Orig+E2E-ICL} matches or exceeds its ID performance and is ~3–5 percentage points higher in this case; other OOD trials (see Appendix \ref{subsec:id-vs-ood}) generally approach but do not surpass their ID counterparts. These results indicate that representations learned from diverse geometries transfer effectively to unseen manifolds. The logistic-regression baselines (\textsc{Eig+LR}, \textsc{Orig+RBF-LR}) are refit on cylinder at test time and thus act as ID references rather than true OOD comparators; \textsc{Eig+ICL} underperforms the end-to-end model.
\vspace{-3mm}

\subsubsection{Product Manifold Results}
The manifolds described previously live in $\mathbb{R}^{3}$ and possess relatively simple decision boundaries. To further evaluate the representation learner, we form Cartesian \emph{product manifolds}
\(
\mathcal{P}_K=\mathcal{M}_{1}\times\cdots\times\mathcal{M}_{K},
\)
where each factor $\mathcal{M}_{k}$ is drawn from the five families of manifolds we individually tested against.  
The intrinsic dimension therefore grows additively with $K$, and likewise the complexity of the geodesic ball that defines the label boundary increases as well.  
As a consequence of our Theorem 1 (see  Appendix \ref{sec:theorem1}), the geodesic distance on $\mathcal{P}_{K}$ remains available in closed form, allowing us to generate exact labels \textit{in the exact same way as before} even in the presence of the random scale, rotation, and permutation applied independently to each factor. Details on these transformations, the decision thresholds, and the construction are provided in  Appendix \ref{ss:synthetic_manifold}. 

Figure~\ref{fig:product_manifold} reports accuracy on a representative 5-factor product manifold of all of our individual manifold families.  
As expected, all the methods degrade as geometry becomes more intricate, yet the \textsc{Orig+E2E-ICL} Transformer retains an \(8\!-\!10\%\) margin over the strongest baseline (\textsc{Eig+ICL}) across the whole label spectrum.
These results support that the end-to-end learner continues to extract useful Laplacian-based features and to propagate labels effectively even when the decision boundary is more complex and in a space whose dimensionality exceeds that of any single chart.

\vspace{-4mm}

\vspace{0.3em}
\begin{figure}[t]
\centering
\begin{subfigure}{0.48\textwidth}
  \includegraphics[width=\textwidth]{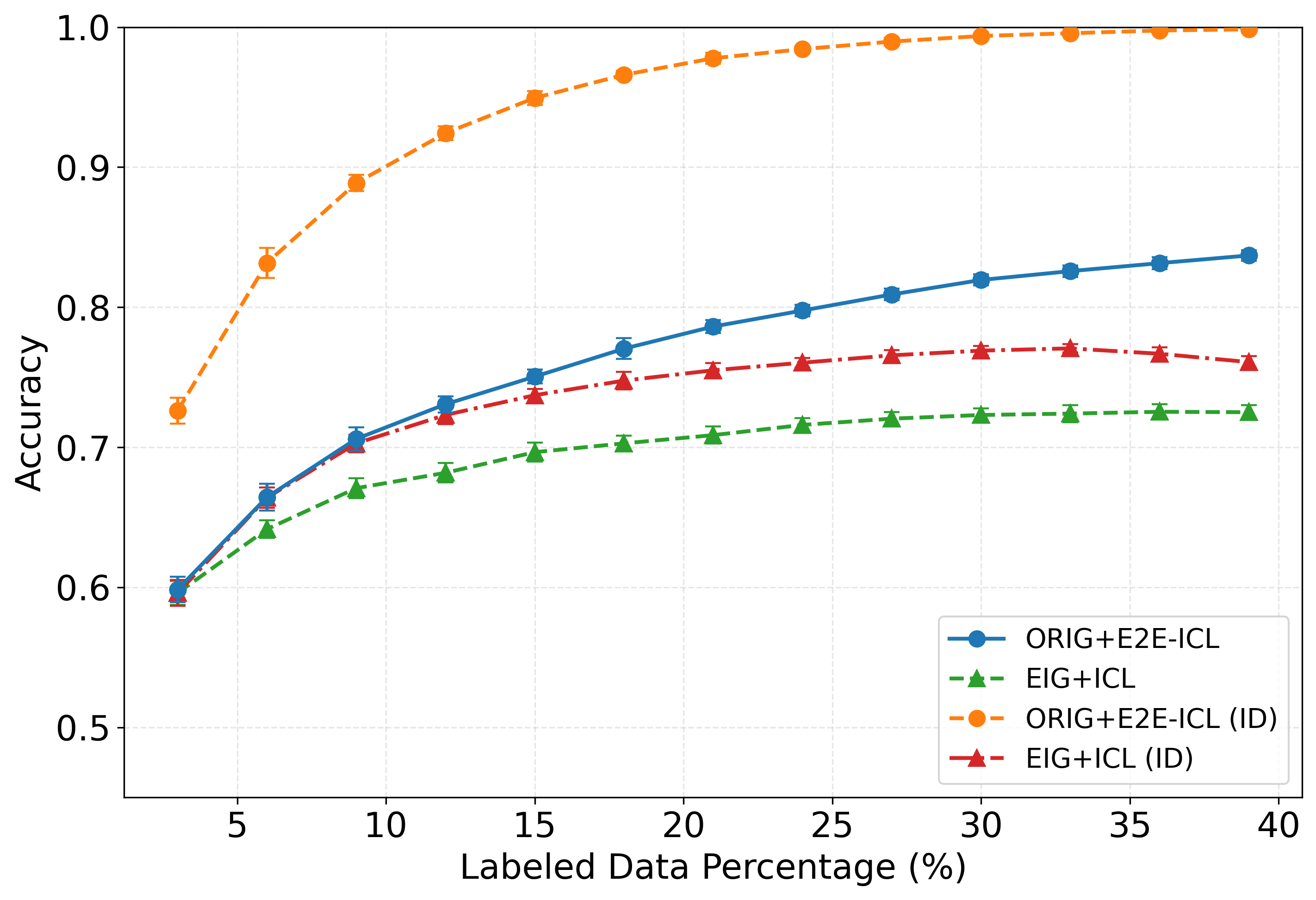}
  \vspace{-4mm}
  \label{fig:image_manifolds}
\end{subfigure}
\hfill
\begin{subfigure}{0.48\textwidth}
  \includegraphics[width=\textwidth,height=0.6\textwidth]{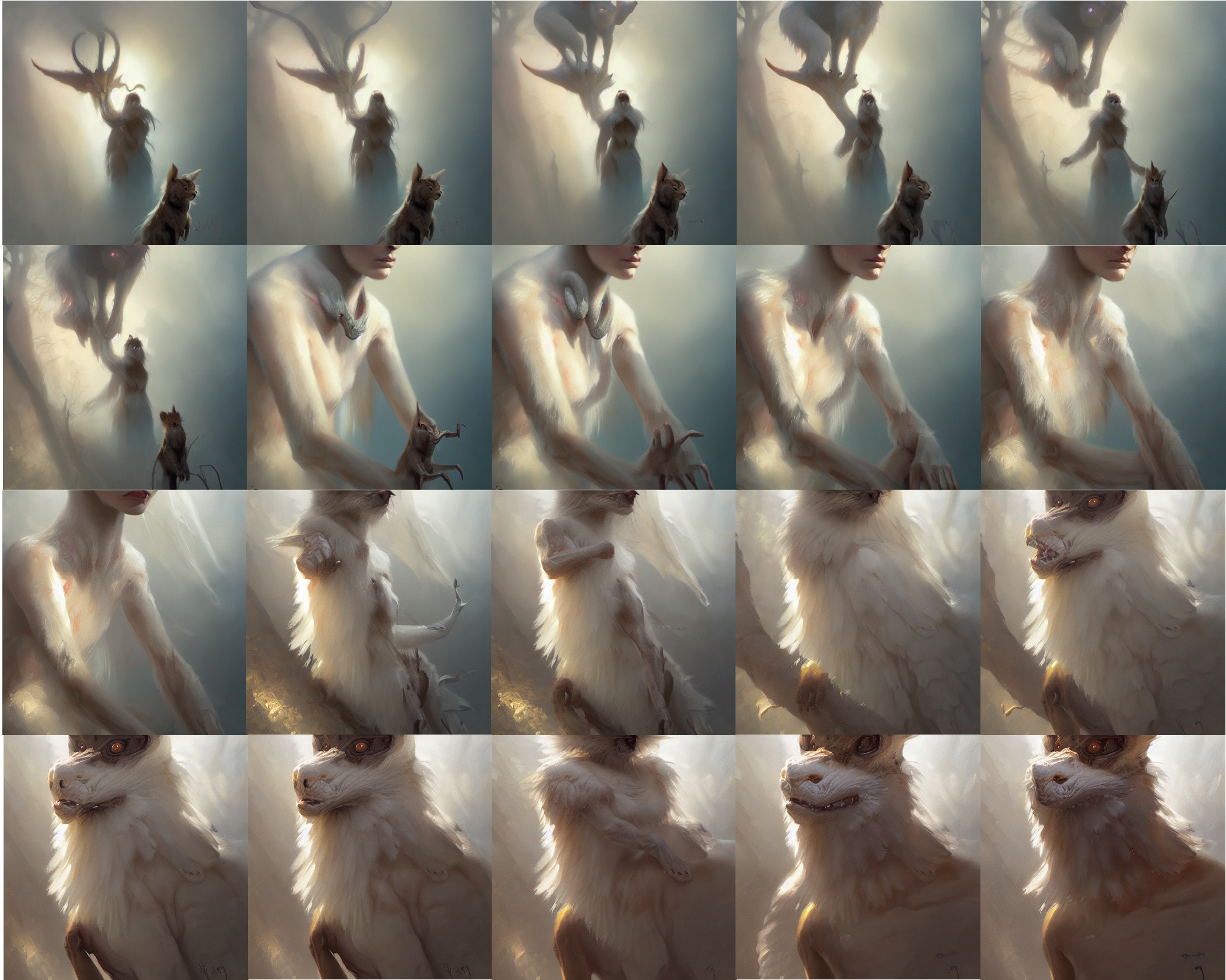}
  \vspace{0em}
    \vspace{0.2em}
  \label{fig:image_manifolds_pic}
\end{subfigure}
\vspace{-3mm}
\caption{\small Results on image manifolds. 
Left: zero-shot test accuracy, where blue \textsc{Orig+E2E-ICL} and green \textsc{Eig+ICL} are trained only on four synthetic manifolds 
\{sphere, cone, Swiss-roll, cylinder\} and tested on image manifolds; red/orange denote ID runs (trained and tested on images). 
Baselines \textsc{Eig+LR}/\textsc{Orig+RBF-LR} are omitted for large underperformance. 
Right: an example manifold obtained by sub-sampling every 5th of $n=100$ SLERP frames between two random latents; 
top two rows are label 1, bottom two rows are label 0, illustrating the smooth geodesic manifold structure.}

\label{fig:manifolds}
\vspace{-6mm}
\end{figure}

\subsection{Image Manifold Experiments} 
\label{ss:image_experiments}

To test whether the representation-learning routine transfers to higher dimensional, natural-looking data, we build an \emph{image manifold} with Stable Diffusion v1.5 \cite{podell2023sdxl}. Concretely, we draw two latent noise vectors, treat them as endpoints, and generate a 100-step spherical linear interpolation in latent space.  Each interpolated latent is decoded—without classifier guidance—into a $500\times500$ RGB image.  Repeating this over 500 prompt seeds yields 500 diverse sub-manifolds of images (each composed of 100 images). The intrinsic geometry of each sub-manifold is inherited from the latent interpolation; full generation and labeling details appear in  Appendix \ref{ss:image_manifold}.

The results demonstrate exceptional zero-shot OOD transfer capabilities: despite having been trained exclusively on simple synthetic manifolds from before, with no prior exposure to the image data, our model achieves roughly 77\% accuracy using only 15\% labeled examples; and remarkably sustains performance above 62\% accuracy even at extremely spare labeling (3/100 labeled images). This consistently surpasses baselines methods by margins of 10-20 percentages points across nearly the entire labeled data regime. For reference, our model trained directly on the image manifold achieves nearly 100\% accuracy, far surpassing the other methods. 

Perhaps more striking than the headline accuracies is the data efficiency: three labels guide the Transformer to a correctness level that baselines cannot match even after seeing 10x more labels. This suggests the network is implementing an efficient modality-agnostic algorithm for constructing a geometry-aware representation from raw coordinates at inference time. The sparse supervision is propagated through this learned geometry, leading to impressive label economy.

\vspace{-3mm}
\section{Limitations and Conclusion}

We introduced semi-supervised in-context learning with a single end-to-end Transformer that first builds a geometry-aware representation from all covariates and then predicts labels in-context—without parameter updates. Across synthetic manifolds, product manifolds, and image manifolds, our model consistently outperforms strong baselines and shows robust out-of-distribution transfer. We use our Transformer model as a reference for studying the IC-SSL capabilities of a standard Transformer. We observe correlations in the representations learned by our model and by the standard Transformer, suggesting that representation learning may be an important direction in understanding the Transformer's IC-SSL capabilities. While the construction is applicable in principle to high-dimensional data, our experiments use relatively small episode-style datasets, reflecting the novelty of the IC-SSL setting and the absence of established benchmarks. 


\newpage

\newpage
\section{Ethics Statement}
This work uses (i) synthetic manifolds and product manifolds we generate procedurally, (ii) images produced by Stable Diffusion v1.5 without human subjects, and (iii) publicly available ImageNet100 features under their licenses. No personally identifiable information or sensitive attributes are collected or inferred. Potential risks are limited to standard concerns in semi-supervised learning (e.g., amplification of dataset bias); we discourage deployment without bias and robustness assessment on target data. No other foreseeable dual-use concerns were identified.

\section{Reproducibility Statement}
An anonymized code archive, \href{https://anonymous.4open.science/r/Annoymous_Nips2025-930B}{here}, is provided with scripts and configs to regenerate figures and tables. Data generation pipelines are specified for: classic manifolds and product manifolds (Appendix~\ref{ss:synthetic_manifold}, Theorem/derivation in Appendix~\ref{sec:theorem1}), the Stable-Diffusion image manifolds (Appendix~\ref{ss:image_manifold}), and the ImageNet100 episodic setup (Section~\ref{subsec:imagenet100}). Model architectures, loss weighting, optimization settings, and training schedules are listed in Section~\ref{s:transformer_construction} and Appendix~\ref{app:experiment-details}; derivations and construction details for the representation module and ICL head appear in Appendices~\ref{s:transformer_representation_learning} and~\ref{sec:A_GD_derive}. Baselines and their hyperparameters are documented in Section~\ref{sec:otherModels} and Appendix~\ref{app:experiment-details}. We fix random seeds and report mean $\pm$ sd over three runs; the repository provides code needed to reproduce our work and further details.

\bibliographystyle{iclr2024}
\bibliography{references}

\newpage
\appendix

\section*{\large APPENDIX}

\section{Transformer for Representation Learning (Lemmas 1 and 2)}
\label{s:transformer_representation_learning}

Given samples $\tz{1},\dots,\tz{n}\in \Re^d$, let $Z\in \Re^{d\times n}$ denote the matrix whose $i^{th}$ column is $\tz{i}$. Let $\kappa(U,V)$ denote a non-linear transformation. We define the $\kappa$-attention module as
\begin{align*}
    & \attn_\kappa(Z; W^V, W^Q, W_k) = W^V Z \kappa \lrp{W^Q Z, W_k Z}
    \numberthis \label{e:attention_basic}
\end{align*}
In standard softmax-activated attention, $\lrb{\kappa(U,V)}_{i,j}:= \exp\lrp{{U^{(i)}}^\top {V^{(j)}}}/\sum_{k=1}^n \exp\lrp{{U^{(i)}}^\top {V^{(k)}}}$. For the purpose of this section, we will consider two other types of  activations:
\begin{align*}
    & \lrb{\kappa_{\linear}(U,V)}_{i,j}:= {U^{(i)}}^\top {V^{(j)}}\\
    & \lrb{\kappa_{\rbf}(U,V)}_{i,j}:= \frac{\exp\lrp{-\|{U^{(i)}} - {V^{(j)}}\|_2^2}}{\sum_{k=1}^n \exp\lrp{-\|{U^{(k)}} - {V^{(j)}}\|_2^2}},
    \numberthis \label{e:attention_rbf}
\end{align*}
where $U^{(i)}$ denotes the $i^{th}$ column of $U$. Throughout this section, we will consider the Transformer defined by the following update:
\begin{align*}
    & Z_{\ell+1} = \lrp{I + W^S_\ell}Z_\ell + \sum_{h=1}^H \attn_\kappa (Z_\ell; W^V_{\ell,h}, W^Q_{\ell,h}, W^K_{\ell,h}),
    \numberthis \label{e:transformer_basic}
\end{align*}
where $W^S_\ell$ is a linear transformation, $\ell$ is the layer number, $H$ is the total number of heads, and $\{W^V_{\ell,h},W^Q_{\ell,h},W^K_{\ell,h}\}$ are the value, query and key matrices, respectively. 

\subsection{Transformer for forming normalized graph Laplacian (Lemma 1)\label{sec:lemma1}}
The Transformer that we use for learning a Laplacian-based representation is based on the one proposed in Section 14.2 of \cite{cheng2025graph} with one important modification -- replacing the linear attention by RBF attention -- enables the same architecture to efficiently compute the Laplacian of the nearest-neighbors graph in-context. See \eqref{e:attention_rbf} above for definitions of $\attn_{\rbf}$.

Let $\tx{1},\dots,\tx{n}$ denote the coordinate representation of token $i$. Let $X\in \Re^{d\times n}$ denote the matrix whose $i^{th}$ column is given by $X^{(i)} = \tx{i}$. We will begin by forming the \emph{augmented token} $\tz{i} = [\tx{i}; 0_n] \in \Re^{d+n}$. Let $Z_0\in \Re^{(d+n)\times n}$ denote the matrix whose columns are $\tz{i}$. The input to the Transformer is thus $Z_0^\top = \lrb{X^\top, 0_{n\times n}}$. We have used $0_n$ and $0_{n\times n}$ to denote all-zeros vectors and matrices, of dimensions indicated by the subscripts. In practice, we will also learn an additional initialization parameter matrix $M\in \Re^{n\times n}$, so that 
\begin{align*}
    Z_0^\top = \lrb{X^\top, M}.
\end{align*}

Adding an initialization $M$ ensures that the architecture is identical to the Eigenmap Transformer in the subsequent section (except for choice of nonlinear activation), at no loss of expressivity.

Let $Z_\ell\in \Re^{d+n}$ denote the output of the $\ell^{th}$ Transformer layer, as defined in \eqref{e:transformer_basic}, with $\attn_{\rbf}$. We impose the following constraints on the parameter matrices of the Transformer. To reduce notation, we focus on the single-head setting; the case of multiple heads simply repeats the following construction in parallel:
\begin{align*}
   &W^V_{\ell}= 
      \bmat{ a_{\ell} I_{d\times d} & 0_{d\times n} \\ 0_{n\times d} & A_\ell },\ 
   W^Q_{\ell}= \bmat{b_{\ell} I_{d\times d} & 0_{d\times n} \\ 0_{n\times d} & B_\ell },\ 
   W^K_{\ell}= \bmat{ c_{\ell} I_{d\times d} & 0_{d\times n} \\ 0_{n\times d} & C_\ell },\ \\
   & W^S_\ell = \bmat{s_\ell I_{d\times d} & 0_{d\times n} \\ 0_{n\times d} & S_\ell},
   \numberthis \label{e:t:qoimdmald:0}
\end{align*}
where $A_\ell, B_\ell, C_\ell, S_\ell \in \Re^{n\times n}$. 

The total set of Transformer parameters is 
\begin{align*}
    \lrbb{M} \cup\lrbb{s_\ell, S_\ell, a_{\ell}, A_\ell, b_{\ell}, B_\ell, c_{\ell}, C_\ell}_{\ell=1,\dots,L}.
\end{align*}

\textcolor{red}{
\begin{remark}\label{l:identity_matrix}
    In our construction of Lemma \ref{l:laplacian_transformer_construction}, the matrices $A_\ell = a'_\ell I_{n\times n}, B_\ell=b'_\ell I_{n\times n}, C_\ell=c'_\ell I_{n\times n}, S_\ell=s'_\ell I_{n\times n}$, where $a'_\ell,b'_\ell,c'_\ell,s'_\ell$ are scalar multiples of the identity matrix. In our implementation, $a'_\ell,b'_\ell,c'_\ell,s'_\ell$ are learnable scalar parameters, and thus the parameter count does not scale with $n$. 
\end{remark}
}


We show in the following lemma that a 1-layer Transformer with $h=1$ computes the (RBF) Laplacian with bandwidth $\sigma^2$. For ease of reference, we redefine several matrices of interest below:
\begin{enumerate}
    \item The $\sigma^2$-bandwidth RBF adjacency matrix $\A\in \Re^{n\times n}$ is defined by
    \begin{align*}
        \A_{ij} = \exp\lrp{-\frac{1}{\sigma^2}\|\tx{i}-\tx{j}\|_2^2}.
        \numberthis \label{e:adjacency_appendix}
    \end{align*}
    \item The diagonal matrix $\D\in \Re^{n\times n}$ is defined with diagonal elements $D_{jj}:= \sum_{k=1}^n \exp\lrp{-\frac{1}{\sigma^2}\|\tx{k}-\tx{j}\|_2^2}$.
    \item The random walk matrix is defined as $\A\D^{-1} \ni \Re^{n\times n}$. We verify that 
    \begin{align*}
        \lrb{\A \D^{-1}}_{ij} = \frac{\exp\lrp{-\frac{1}{\sigma^2}\|\tx{i}-\tx{j}\|_2^2}}{\sum_k \exp\lrp{-\frac{1}{\sigma^2}\|\tx{k}-\tx{j}\|_2^2}}.
        \numberthis \label{e:randomwalk_matrix_appendix}
    \end{align*}
    \item The Laplacian $\gL$, symmetric-normalized Laplacian $\bar{\gL}$ and right-normalized Laplacian $\bar{\gL}$ are defined as follows:
    \begin{align*}
        \gL = \D - \A, \qquad \bar{\gL} = I_{n\times n} - \D^{-1/2} \A \D^{-1/2}, \qquad \hat{\gL} = I_{n\times n} - \A \D^{-1}
        \numberthis \label{e:laplacian_appendix}
    \end{align*}
\end{enumerate}
\begin{lemma}[Construction of Transformer for Computing Laplacian]
\label{l:laplacian_transformer_construction}
Let $\hat{\gL}\in \Re^{n\times n}$ denote the right-normalized Laplacian with bandwidth $\sigma^2$ as defined in \eqref{e:laplacian_appendix} above. Consider the single-head Transformer, parameterized as in \eqref{e:t:qoimdmald:0}, with the following parameters:
    \textcolor{blue}{$s_\ell = 0, S_\ell = 0_{n\times n}, a_{\ell} = 0, b_{\ell}=c_{\ell} = 1/\sigma, A_{\ell}=-I_{n\times n} , B_{\ell}=C_{\ell}=0_{n\times n}$}. Let $[Z_1]_{d:n+1} \in \Re^{n\times n}$ denote the last $n$ rows of $Z_1$. Let $M = I_{d\times d}$. Then $[Z_1]_{d:d+n}=\hat{\gL}$.
\end{lemma}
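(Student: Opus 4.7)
The plan is to verify the claim by direct block-matrix computation: with the stated parameter choices the single-layer Transformer reduces to a very simple update, and one checks that the RBF attention kernel coincides with the random-walk matrix $\A\D^{-1}$, so that the residual update deposits $I_{n\times n}-\A\D^{-1}=\hat{\gL}$ into the lower $n\times n$ block of $Z_1$. I will assume that the statement $M=I_{d\times d}$ is meant to read $M=I_{n\times n}$, since $M\in\Re^{n\times n}$ by definition, so that the initial state is
\begin{align*}
    Z_0 \;=\; \bmat{X \\ I_{n\times n}} \;\in\; \Re^{(d+n)\times n}.
\end{align*}

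First I would substitute the parameters into \eqref{e:t:qoimdmald:0} and compute the three projections. With $b_\ell=c_\ell=1/\sigma$, $B_\ell=C_\ell=0$, and $a_\ell=0$, $A_\ell=-I_{n\times n}$, one gets
\begin{align*}
    W^Q_\ell Z_0 = \bmat{(1/\sigma)X \\ 0_{n\times n}}, \qquad
    W^K_\ell Z_0 = \bmat{(1/\sigma)X \\ 0_{n\times n}}, \qquad
    W^V_\ell Z_0 = \bmat{0_{d\times n} \\ -I_{n\times n}}.
\end{align*}
Next I would plug these into the RBF attention kernel from \eqref{e:attention_rbf}. Because only the first $d$ coordinates of $W^Q_\ell Z_0$ and $W^K_\ell Z_0$ are nonzero,
\begin{align*}
    \bigl\|[W^Q_\ell Z_0]^{(i)} - [W^K_\ell Z_0]^{(j)}\bigr\|_2^2 \;=\; \tfrac{1}{\sigma^2}\bigl\|\tx{i}-\tx{j}\bigr\|_2^2,
\end{align*}
so that $[\kappa_{\rbf}(W^Q_\ell Z_0, W^K_\ell Z_0)]_{ij} = \frac{\exp(-\sigma^{-2}\|\tx{i}-\tx{j}\|_2^2)}{\sum_k \exp(-\sigma^{-2}\|\tx{k}-\tx{j}\|_2^2)} = [\A\D^{-1}]_{ij}$ exactly matches the definition \eqref{e:randomwalk_matrix_appendix}.

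To finish, I would combine these pieces. Since $W^S_\ell=0$ and the value projection zeros out the top block,
\begin{align*}
    Z_1 \;=\; Z_0 + W^V_\ell Z_0 \,\kappa_{\rbf}(W^Q_\ell Z_0, W^K_\ell Z_0) \;=\; \bmat{X \\ I_{n\times n}} + \bmat{0 \\ -\A\D^{-1}} \;=\; \bmat{X \\ I_{n\times n}-\A\D^{-1}},
\end{align*}
and the last $n$ rows equal $\hat{\gL}$ by \eqref{e:laplacian_appendix}, completing the proof. There is no real analytical obstacle here: the whole argument is block-matrix bookkeeping. The only subtle point is ensuring that the RBF kernel's denominator sums over the correct index (a column-stochastic normalization over $i$ for each fixed $j$) so that the resulting matrix is the right-normalized random walk $\A\D^{-1}$ rather than its transpose or the symmetric normalization; the convention given in \eqref{e:attention_rbf} does yield exactly this, which is why the lemma targets $\hat{\gL}$ and not $\bar{\gL}$.
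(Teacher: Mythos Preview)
Your proposal is correct and follows essentially the same block-matrix verification as the paper's proof: compute $W^Q Z_0$, $W^K Z_0$, $W^V Z_0$ under the given parameters, identify $\kappa_{\rbf}$ with the random-walk matrix $\A\D^{-1}$, and read off $I_{n\times n}-\A\D^{-1}=\hat{\gL}$ from the residual update. Your remark that $M=I_{d\times d}$ should be $M=I_{n\times n}$ is well taken (the paper's proof tacitly uses $Z_0=\bmat{X\\ I_{n\times n}}$), and your closing comment about the column-stochastic normalization is exactly the point that makes the output $\hat{\gL}$ rather than $\bar{\gL}$.
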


\begin{proof}[Proof of Lemma \ref{l:laplacian_transformer_construction}]
    For convenience, let $Z^X_\ell \in \Re^{d\times n}$ denote the first $d$ rows of $Z_\ell$ and $Z^\phi_\ell$ denote the last $n$ rows of $Z_\ell$. Let $\kappa_{\ell,h} \in \Re^{n\times n}$ denote the self-similarity matrix. We begin by verifying that for $\ell=0$, the self-similarity matrix simplifies to
    \begin{align*}
        \lrb{\kappa_{\rbf}\lrp{W^K_\ell Z_\ell, W^Q_\ell Z_\ell}}_{ij} 
        =& \frac{\exp\lrp{-\lrn{[W^Q_\ell Z_\ell]^{(i)}- [W^K_\ell Z_\ell]^{(j)}}_2^2}}{\sum_k \exp\lrp{-\lrn{[W^Q_\ell Z_\ell]^{(k)}- [W^K_\ell Z_\ell]^{(j)}}_2^2}}\\
        =& \frac{\exp\lrp{-b_\ell c_\ell \lrn{\tx{i}-\tx{j}}_2^2}}{\sum_k \exp\lrp{-b_\ell c_\ell \lrn{\tx{k}-\tx{j}}_2^2}}\\
        =& \frac{\exp\lrp{-\frac{1}{\sigma^2} \lrn{\tx{i} - \tx{j}}_2^2}}{\sum_k \exp\lrp{-\frac{1}{\sigma^2} \lrn{\tx{k} - \tx{j}}_2^2}}.
    \end{align*}
    From the above, we verify that $\kappa_{\rbf}\lrp{W^K_\ell Z_\ell, W^Q_\ell Z_\ell} = \A \D^{-1}$, which is the random walk matrix, defined in \eqref{e:adjacency_appendix} and \eqref{e:randomwalk_matrix_appendix}. By definition of $M$ and $Z_0$, we have
    \begin{align*}
        W^V_0 Z_0 = \bmat{0_{d\times n}\\I_{n\times n}}.
    \end{align*}
    Consequently, for $\ell=0$,
    \begin{align*}
        \attn_{\rbf}(Z_\ell) 
        =& W^V_\ell Z_\ell \kappa_{\rbf}\lrp{W^K_\ell Z_\ell, W^Q_\ell Z_\ell}\\
        =& \bmat{0_{d\times n}\\-I_{n\times n}} \A \D^{-1} = \bmat{0_{d\times n}\\ -\A \D^{-1}}
    \end{align*}
    which is exactly equal to $\A_{ij}$, where $\A$ is the adjacency matrix defined in Section 3.1. Finally, by definition of $s_\ell=0$ and $S_\ell=I_{n\times n}$, we have 
    $$Z_1 = Z_0 - \bmat{0_{d\times n}\\ \A \D^{-1}} = \bmat{X\\ I_{n\times n}} - \bmat{0_{d\times n}\\ \A\D^{-1}} = \bmat{X\\\hat{\gL}},$$
    where $\hat{\gL}$ is as defined in \eqref{e:laplacian_appendix}.
\end{proof}

\subsection{Transformer for Computing Eigenmap (Lemma 2)\label{sec:lemma2}}
As with the preceding section, the Transformer that we use for computing the Eigenmap is based on the Transformer from in Lemma 9 of \cite{cheng2025graph} with one important modification. Here, we retain the use of $\attn_{\linear}$ as our goal is to implement a block-power-method algorithm for computing the Eigenvectors of a matrix. See \eqref{e:attention_rbf} above for definitions of $\attn_{\linear}$.

Let $\psi^{(1)},\dots,\psi^{(n)} \in \Re^{n}$ denote some pre-computed representation of $i$. Let $\Psi:=\lrb{\psi^{(1)},\dots,\psi^{(n)}}\in \Re^{n\times n}$ denote the matrix whose columns contain representations of each token. We are primarily interested in the setting when $\Psi$ is the Laplacian matrix $\hat{\gL}$, output by the Transformer constructed in Lemma \ref{l:laplacian_transformer_construction}; however, the construction we present below applies to a general $\Psi$. This generalization is practically important, as it enables the Transformer to learn to compute embeddings not based on the RBF Laplacian.

Let $Z_0\in \Re^{(n+k)\times n}$ denote the matrix whose columns are $\tz{i}$. The input to the Transformer is thus $Z_0^\top = \lrb{\Psi^\top, 0_{n\times k}}$. In practice, we will also learn an additional initialization matrix $M\in \Re^{k\times n}$, so that 
\begin{align*}
    Z_0^\top = \lrb{\Psi^\top, 0_{n\times k} + M^\top}.
\end{align*}
On a high level, the Transformer will perform a block-power-method algorithm using the $\Psi$ matrix to compute the bottom $k$ eigenvectors of $Z_0$, and $M$ enables the Transformer to learn a good initial guess of the $k$ eigenvectors. 

Let $Z_\ell\in \Re^{d+n}$ denote the output of the $\ell^{th}$ Transformer layer, as defined in \eqref{e:transformer_basic} (single head), but with $\attn_{\linear}$, and an additional normalization between layers:
\begin{align*}
    & Z_{\ell+1} = \lrp{I + W^S_\ell}Z_\ell + \attn_{\linear}(Z_\ell; W^V_{\ell}, W^Q_{\ell}, W^K_{\ell}),\\
    & Z_{\ell+1} = \texttt{normalize} (Z_{\ell+1}),
    \numberthis \label{e:transformer_eigmap}
\end{align*}
where $\texttt{normalize}(Z)$ applies row-wise normalization to $Z$. We impose the following constraints on the parameter matrices of the Transformer. 
\begin{align*}
   &W^V_{\ell}= 
      \bmat{ a_{\ell} I_{n\times n} & 0_{n\times k} \\ 0_{k\times n} & A_\ell },\ 
   W^Q_{\ell}= \bmat{b_{\ell} I_{n\times n} & 0_{n\times k} \\ 0_{k\times n} & B_\ell },\ 
   W^K_{\ell}= \bmat{ c_{\ell} I_{n\times n} & 0_{n\times k} \\ 0_{k\times n} & C_\ell },\\
& W^S_\ell = \bmat{s_\ell I_{n\times n} & 0_{n\times k} \\ 0_{k\times n} & S_\ell},
   \numberthis \label{e:t:qoimdmald:1}
\end{align*}
where $A_\ell, B_\ell, C_\ell, S_\ell \in \Re^{k\times k}$.Thus the total set of Transformer parameters is 
\begin{align*}
    \lrbb{M} \cup\lrbb{s_\ell, S_\ell, a_{\ell}, A_\ell, b_{\ell}, B_\ell, c_{\ell}, C_\ell}_{\ell=1,\dots,L}.
\end{align*}

\textcolor{red}{
\begin{remark}\label{l:identity_matrix}
    In our construction of Lemma \ref{l:eigen_laplacian}, the matrices $A_\ell, B_\ell, C_\ell, S_\ell$ are $k\times k$ parameter matrices which do not scale with $n$.
\end{remark}
}

\begin{lemma}[Construction of Transformer for computing Eigenmap.]
\label{l:eigen_laplacian}    
Let $\Psi \in \Re^{n\times n}$ be some feature map. Let $\mu \in \Re^+$ denote a constant that satisfies $\mu I \succ \Psi^\top \Psi$. Then there exists a Transformer of the form \eqref{e:transformer_eigmap}, whose parameter matrices satisfy the constraints of \eqref{e:t:qoimdmald:1}, which implements the block-power-iteration algorithm for $\mu I - \Psi^\top \Psi$. Consequently, let $Z_L\in \Re^{(n+k)\times n}$ denote the output of a $L$-layer Transformer, let $\Phi\in \Re^{k\times n}$ denote the last $k$ rows of $Z_L$, and let $\Phi_j$ denote the $j^{th}$ row of $\Phi$. Then
\begin{align*}
    \Phi_j \approx v_j,
\end{align*}
where $v_j$ is the $j^{th}$ smallest eigenvector of $\Psi^\top \Psi$. When $\Psi$ is the Laplacian $\hat{\gL}$, the bottom $k$ eigenvectors of $\Psi^\top \Psi$ are the same as the bottom $k$ right-singular vectors of $\hat{\gL}$. When $\hat{\gL}$ has constant degree (i.e. $\D\propto I_{n\times n}$), these are also equal to the bottom $k$ eigenvectors of the unnormalized Laplacian $\gL$ and the symmetric-normalized Laplacian $\bar{\gL}$.
\end{lemma}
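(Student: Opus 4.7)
The strategy is to exhibit an explicit parameter choice under which each Transformer layer, restricted to its last $k$ rows, realizes one step of right-multiplicative block power iteration on the matrix $B := \mu I - \Psi^\top \Psi$. Because $\mu I \succ \Psi^\top\Psi$, the matrix $B$ is symmetric positive definite; its eigenvectors coincide with those of $\Psi^\top\Psi$ but the ordering is reversed, so the top $k$ eigenvectors of $B$ are exactly the bottom $k$ eigenvectors $v_1,\dots,v_k$ of $\Psi^\top\Psi$. Convergence of block power iteration then gives the claim, and the equivalences in the last sentence of the lemma follow immediately from the definitions in \eqref{e:laplacian_appendix}.

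For the construction, I would take $a_\ell = 0$, $b_\ell c_\ell = -1$, $A_\ell = I_{k\times k}$, $B_\ell = C_\ell = 0_{k\times k}$, $s_\ell = 0$, and $S_\ell = (\mu-1)I_{k\times k}$. Writing the layer input as $Z_\ell^\top = [\Psi^\top,\Phi_\ell^\top]$ with $\Phi_\ell \in \Re^{k\times n}$, a direct block calculation gives $W^V Z_\ell = \bigl[\begin{smallmatrix}0\\ \Phi_\ell\end{smallmatrix}\bigr]$, $W^Q Z_\ell = \bigl[\begin{smallmatrix}b_\ell\Psi\\ 0\end{smallmatrix}\bigr]$, $W^K Z_\ell = \bigl[\begin{smallmatrix}c_\ell\Psi\\ 0\end{smallmatrix}\bigr]$, so $(W^Q Z_\ell)^\top (W^K Z_\ell) = -\Psi^\top\Psi$ with no cross terms (since $B_\ell = C_\ell = 0$). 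The linear attention output is therefore $\bigl[\begin{smallmatrix}0\\ -\Phi_\ell \Psi^\top \Psi\end{smallmatrix}\bigr]$, and the residual $(I + W^S_\ell)Z_\ell = \bigl[\begin{smallmatrix}\Psi\\ \mu\Phi_\ell\end{smallmatrix}\bigr]$ combines with it to yield $\bigl[\begin{smallmatrix}\Psi\\ \Phi_\ell(\mu I - \Psi^\top\Psi)\end{smallmatrix}\bigr]$ before the row normalization. Thus $\Psi$ is frozen in the upper block and the lower block obeys $\Phi_{\ell+1} = \texttt{normalize}(\Phi_\ell B)$, which is exactly right-multiplicative block power iteration on $B$.

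For convergence, the trainable initialization $M$ plays the role of $\Phi_0$. Expanding the columns of $\Phi_0^\top$ in the eigenbasis of $B$ (equivalently of $\Psi^\top\Psi$), each application of $B$ rescales the component along $v_i$ by the positive factor $\mu - \lambda_i$. Since $\mu - \lambda_1 > \cdots > \mu - \lambda_k > \mu - \lambda_{k+1} > 0$, after $L$ layers the components along $v_1,\dots,v_k$ dominate the rest by a factor at least $\bigl((\mu - \lambda_k)/(\mu - \lambda_{k+1})\bigr)^L$, so the row span of $\Phi_L$ lies in an $O(\rho^L)$-neighborhood of $\mathrm{span}(v_1,\dots,v_k)$ with contraction rate $\rho = (\mu - \lambda_{k+1})/(\mu - \lambda_k) < 1$. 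The Laplacian-specific claims are then immediate: the bottom eigenvectors of $\hat\gL^\top\hat\gL$ are by definition the bottom right singular vectors of $\hat\gL$, and when $\D \propto I_{n\times n}$ the right-normalized and symmetric-normalized Laplacians coincide and equal a scalar multiple of $\gL$, so the three share an eigenbasis.

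The main obstacle is the gap between convergence of the \emph{row span} of $\Phi_L$ to $\mathrm{span}(v_1,\dots,v_k)$ and the stronger row-by-row claim $\Phi_j \approx v_j$: pure row normalization (as opposed to a Gram--Schmidt step) tends to collapse all rows onto the dominant eigenvector $v_1$, so strict row-wise convergence requires extra structure. I would close this gap by exploiting the diagonal freedom in $S_\ell$ — taking $S_\ell$ diagonal with row-specific shifts $S_{\ell,jj}$ so that row $j$ runs a shifted power iteration whose dominant direction is $v_j$ — or by choosing the learned initialization $M$ so that row $j$ is approximately orthogonal to $v_1,\dots,v_{j-1}$ and tuning $L$ so the iterate is stopped before those components re-emerge; a third option is to port the more elaborate deflation mechanism of Lemma 9 in \cite{cheng2025graph}, which was designed for precisely this setting. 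The rest of the argument is a mechanical block-matrix verification.
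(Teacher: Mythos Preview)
Your construction of the multiplication layer is essentially identical to the paper's: you obtain
\[
Z_{\ell+1}^\top = \bigl[\Psi^\top,\ (\Phi_\ell(\mu I-\Psi^\top\Psi))^\top\bigr]
\]
with the same parameter choice up to where the minus sign is placed (you use $b_\ell c_\ell=-1$, $A_\ell=I$; the paper uses $b_\ell=c_\ell=1$ and puts the sign in $A_\ell$). So the first half of the argument is fine and matches the paper.

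The gap you flag is real, and the paper does \emph{not} close it by any of your first two options. Row-specific diagonal shifts $S_{\ell,jj}$ would make $v_j$ dominant for row $j$ only if the shifts are tuned to the unknown eigenvalues $\lambda_j$, so that route needs information you don't have. The initialization-plus-early-stopping idea does not give convergence to individual $v_j$ either: the $v_1$ component in every row grows at the fastest rate regardless of how small it starts, so for any fixed $M$ the approximation deteriorates as $L$ grows rather than improving.

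What the paper actually does is your third option, but carried out explicitly inside the Transformer. It dedicates $k$ additional layers---one per row index $i$---with a \emph{different} parameter configuration: $a_\ell=b_\ell=c_\ell=s_\ell=0$, $S_\ell=0$, $A_\ell$ diagonal with a single $-1$ in position $(i,i)$, and $B_\ell=C_\ell$ diagonal with ones in positions $1,\dots,i-1$. With these choices the linear attention computes $\sum_{j<i}\langle\Phi_i,\Phi_j\rangle\Phi_j$ and the residual subtracts it from row $i$, i.e.\ one Gram--Schmidt step; the $\texttt{normalize}$ then restores unit norm. Stacking these $k$ layers implements the $QR$ step, and interleaving them with your multiplication layer yields exactly subspace iteration $\Phi\mapsto QR(\Phi(\mu I-\Psi^\top\Psi))$, for which row-by-row convergence $\Phi_j\to v_j$ is standard. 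So your proposal is on the right track but incomplete: you need to exhibit the orthogonalization layers, not just cite that a deflation mechanism exists.
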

\begin{remark}Before stating the proof, we make a number of remarks on how the above result relates to Section 3.1.
\begin{enumerate}
    \item Notice that the Laplacian Transformer $\TF^{\gL}$ from Lemma \ref{l:laplacian_transformer_construction} has input/output dimension $d+n$ and the Eigenmap Transformer $\TF^{\Psi}$ from Lemma \ref{l:eigen_laplacian} has input/output dimension $n+k$. For consistency, we pad each Transformer accordingly, so that both Transformers have input/output dimension $d+n+k$. Therefore, the ``last $k$ rows of $Z_\ell$" as described in this lemma is equivalent to $Z^{out}_3$ defined in Section 3.1.
    \item In practice, for both the Laplacian Transformer $\TF^{\gL}$ and the Eigenmap Transformer $\TF^{\Psi}$ we found it beneficial to use a multiple parallel heads. One benefit, for instance, is that $H$ heads can simultaneously orthogonalize $H$ columns. 
    \item We loop each layer twice, and the entire Transformer twice in the forward pass, which we find to stabilize training. 
\end{enumerate}
\end{remark}

\begin{proof}
    Our proof is primarily based on Lemma 6, Corollary 7, and Lemma 9 of \cite{cheng2025graph}. For ease of reference, we reproduce the necessary details below.
    
    We will demonstrate that the Transformer is capable of implementing the following block-power-method algorithm (also known as subspace iteration):
    \begin{align*}
        & \hat{\Phi}_{\ell+1} = \Phi_\ell (\mu I  - \Psi^\top \Psi) \\
        & \Phi_{\ell+1} = QR(\hat{\Phi}_{\ell+1}).
        \numberthis \label{e:t:qoimdmald:2}
    \end{align*}
    The above is initialized at $\Psi_0 \in \Re^{k\times n}$ with full column rank. In the above, $QR(\cdot): \Re^{k\times n} \to \Re^{k\times n}$ returns the $Q$ matrix in the $QR$ decomposition of the input matrix. Under the above, the rows of $\Psi$ converge to the top $k$ eigenvectors of $\mu I - \Psi^\top \Psi$, which are equivalent to the bottom $k$ eigenvectors of $\Psi^\top \Psi$ by definition of $\mu$.

    Concretely, let $Z^\Psi_\ell \in \Re^{n\times n}$ and $Z^{\Phi}_\ell \in \Re^{k\times n}$ denote the first $n$ rows and last $k$ rows of $Z$ respectively:
    \begin{align*}
        Z_\ell^\top  =: \lrb{{Z^\Psi_\ell}^\top, {Z^\Phi_\ell}^\top}.
    \end{align*}
    Intuitively, $\Phi_\ell$ in the block power method algorithm in \eqref{e:t:qoimdmald:2} is exactly $Z^{\Phi}_\ell$.

    Our proof proceeds in two steps. First, we verify that there exists a choice of parameters under which the Transformer orthogonalizes the $i^{th}$ column of $Z^{\Phi}_\ell$ with respect to all columns $j>i$. The construction is identical to Lemma 8 of \cite{cheng2025graph}.
    
    Let us choose $s_\ell = S_\ell = a_\ell=b_\ell=c_\ell=0$. Let $\lrb{A_\ell}_{ii}=-1$ and $A_{jj}=0$ for $j\neq i$. Let $\lrb{B_\ell}_{jj}=1$ for $j< i$ and $\lrb{B_\ell}_{jj} = 0$ for $j\geq i$. Let $C_\ell=B_\ell$, so that $B_\ell^\top C_\ell = B_\ell$. Then we verify that
    \begin{align*}
        Z_{\ell+1}
        =& Z_\ell + W^V_\ell Z_\ell Z_\ell^\top {W^K_\ell}^\top W^Q_\ell Z_\ell\\
        =& \bmat{Z^{\Psi}_\ell\\Z^{\Phi}_\ell} - \bmat{0_{n\times n} \\ A_\ell Z^{\Phi}_\ell {Z^{\Phi}_\ell}^\top B_\ell Z^{\Phi}_\ell}.
        \numberthis \label{e:t:qoimdmald:3}
    \end{align*}
    We verify that the $i^{th}$ row of the last term is 
    \begin{align*}
        \lrb{A_\ell Z^{\Phi}_\ell {Z^{\Phi}_\ell}^\top B_\ell Z^{\Phi}_\ell}_i
        =& \sum_{j=1}^{i-1} \lin{{Z^{\Phi}_\ell}_i,{Z^{\Phi}_\ell}_j} {Z^{\Phi}_\ell}_j.
    \end{align*}
    Since each row of $Z^{\Phi}_\ell$ has unit norm due to normalization in \eqref{e:transformer_eigmap}, the above expression is exactly the projection of the $i^{th}$ row onto the subspace spanned by rows $i+1,\dots,k$. Consequently, \eqref{e:t:qoimdmald:3} orthogonalizes the $i^{th}$ row of $Z^{\Phi}_\ell$ wrt rows $i+1,\dots,k$. Repeating the same construction for each row $i$, over $k$ layers, we exactly implement the QR decomposition of $Z^{\Phi}_\ell$.

    Finally, we verify that there exists a construction under which the Transformer implements the first step of \eqref{e:t:qoimdmald:2}. This step is simple: first, in all constructions in this proof, we will pick $a_\ell=s_\ell=0$. Consequently, $Z^{\Psi}_\ell$ is never updated, and $Z^{\Psi}_\ell=\Psi$ for all $\ell$. Let us now choose 
    $a_\ell=s_\ell = 0$, $b_\ell=c_\ell = 1$ and $A_\ell=I_{k\times k}$, $B_\ell=C_\ell=0_{k\times k}$, and $S_\ell=(\mu-1)I_{k\times k}$. We verify that
    \begin{align*}
        Z_{\ell+1}
        =& Z_\ell + W^V_\ell Z_\ell Z_\ell^\top {W^K_\ell}^\top W^Q_\ell Z_\ell\\
        =& \bmat{\Psi\\Z^{\Phi}_\ell} - \bmat{0_{n\times n} \\ Z^{\Phi}_\ell \Psi^\top \Psi} + \bmat{0_{n\times n}\\(\mu-1)Z^{\Phi}_\ell}\\
        =& \bmat{\Psi\\Z^{\Phi}_\ell (\mu I - \Psi^\top \Psi)}.
    \end{align*}
The above is exactly the first step of \eqref{e:t:qoimdmald:1}. We have thus verified Transformer constructions for implementing both steps of \eqref{e:t:qoimdmald:1}.
\end{proof}

\section{Derivation of the GD Update Equation for ICL with Categorical Observations\label{sec:A_GD_derive}}

Our derivation is based on the assumption that $f(\phi)$ resides in a reproducing kernel Hilbert space (RKHS) \cite{scholkopf2002learning}, but the setup extends to softmax-based attention kernels as well \cite{Aaron_ICL}. From the RKHS perspective, let $f(\phi)=A\psi(\phi)$, with $\psi(\phi)$ a {\em fixed} mapping of features $\phi$ to a Hilbert space, and the parameters acting in that space are associated with matrix $A$. 


Assuming we have $m$ labeled samples, indexed $i=1,\dots,m$, the cross-entropy cost function for inferring the parameters  $A\in\mathbb{R}^{d^\prime\times K}$ (let $K$ represent the output dimension of $\psi(\phi)$) may be expressed as
\begin{align}
\mathcal{L}(A)=&-\frac{1}{m}\sum_{i=1}^m\log\left[\frac{\exp[w_{y_i}^T(A\psi(\phi^{(i)}))]}{\sum_{c=1}^{C}\exp[w_{c}^T(A\psi(\phi^{(i)}))]}\right]\nonumber\\
=&-\frac{1}{m}\sum_{i=1}^m [w_{y_i}^TA\psi(\phi^{(i)})-\log\sum_{c=1}^{C}\exp(w_{c}^TA\psi(\phi^{(i)}))] \,.
\end{align}

Let $a^{(j)}\in\mathbb{R}^K$ represent the $j$th row of $A$. Taking the gradient of $\mathcal{L}$ wrt $a^{(j)}$:
\begin{align}
\nabla_{a^{(j)}}\mathcal{L}=&-\frac{1}{m}\sum_{i=1}^m \Big[w_{y_i}(j)\psi(\phi^{(i)})-\frac{\sum_{c=1}^{C}\exp[w_{c}^T(A\psi(\phi^{(i)}))]w_{c}(j)\psi(\phi^{(i)})}{\sum_{c\prime=0}^{C}\exp(w_{c^\prime}^T(A\psi(\phi^{(i)})))}\Big]\nonumber\\
=&-\frac{1}{m}\sum_{i=1}^m \Big[w_{y_i}(j)-\frac{\sum_{c=1}^{C}\exp[w_{c}^Tf^{(i)}]w_{c}(j)}{\sum_{c\prime=0}^{C}\exp(w_{c^\prime}^Tf^{(i)})}\Big]\psi(\phi^{(i)})\nonumber\\
=&-\frac{1}{m}\sum_{i=1}^m \Big[w_{y_i}(j)-\mathbb{E}(w(j)|f^{(i)})\Big]\psi(\phi^{(i)}) \,.
\end{align}
%
The gradient update step for $a^{(j)}$ is
\begin{align}
a^{(j)}_{\ell+1}=&\ a^{(j)}_\ell-\alpha \nabla_{a^{(j)}}\mathcal{L}\nonumber\\
=&\ a^{(j)}_\ell+ \frac{\alpha}{m}\sum_{i=1}^m \Big[w_{y_i}(j)-\mathbb{E}(w(j)|f^{(i)})\Big]\psi(\phi^{(i)}) \,. \nonumber
\end{align}
%
Using the GD update rules for $\{a^{(j)}\}_{j=1,d^\prime}$, we have
\begin{align}
f_{\ell+1}^{(j)}=& 
\begin{pmatrix}
(a_{\ell+1}^{(1)})^T\psi(\phi^{(j)})\nonumber\\
\vdots\\
(a_{\ell+1}^{(d^\prime)})^T\psi(\phi^{(j)})
\end{pmatrix}\\
\nonumber\\
=&\ f_\ell^{(j)}+\frac{\alpha}{m}\sum_{i=1}^m \left[w_{y_i}-\mathbb{E}(w|f_\ell^{(i)})\right]\kappa(\phi^{(i)},\phi^{(j)})  \,.
\label{eq:f_update}
\end{align}


\section{Transformer for Categorical ICL\label{s:transformer_categorical_ICL}}

The Transformer design for ICL (referred to as $\TF_{sup}$ in the main paper) is based on the GD update equations developed in Section \ref{sec:A_GD_derive}. The input to the transformer at layer $\ell$ is 
\begin{equation}
e_\ell^{(i)}=
    \begin{pmatrix}
        f_\ell^{(i)} \\
        \mathbb{E}(w|f_\ell^{(i)}) \\
        w_{y_i}\\
        \phi^{(i)}\\
    \end{pmatrix}\label{eq:e}
\end{equation}
Within $e_\ell^{(i)}$, the vector component $f_\ell^{(i)}$ is iteratively updated with increasing layer index $\ell$, with the update manifested by each self-attention layer. The expectation $\mathbb{E}(w|f_\ell^{(i)})$ is updated by each MLP layer. Vector components $f_\ell^{(i)}$ and $\mathbb{E}(w|f_\ell^{(i)})$ occupy what we term as {\em computational scratch space}. The features $\phi^{(i)}$ and embedding vector $w_{y_i}$ represent the encoding of the data $(\phi^{(i)},y_i)$, and the portion of $e_\ell^{(i)}$ occupied by $(\phi^{(i)},w_{y_i})$ remains fixed at all Transformer layers.

Each attention block consists of a self-attention layer, composed of two attention heads; one of these attention heads implements $f_\ell^{(i)}\rightarrow f_{\ell+1}^{(i)}$ like above (for which $\mathbb{E}(w|f_\ell^{(i)})$ is needed), and the second attention head erases $\mathbb{E}(w|f_\ell^{(i)})$, preparing for its update by the subsequent MLP layer.

\subsection{Self-attention layer}


In matrix form, the input at layer $\ell$ is
\begin{equation}
    \begin{pmatrix}
    f_\ell^{(1)} & \dots & f_\ell^{(m)} & f_\ell^{(m+1)}\\
         \mathbb{E}(w|f_\ell^{(1)}) & \dots & \mathbb{E}(w|f_\ell^{(m)}) & \mathbb{E}(w|f_\ell^{(m+1)})\\
        w_{y_1} & \dots & w_{y_{m}} & 0_{d^\prime}\\
\phi_1 &  \dots & \phi_m & \phi_{m+1} 
    \end{pmatrix}
\end{equation}
We assume that there are $m$ labeled samples, index $i=1,\dots,m$, and sample $m+1$ is unlabeled. For notational simplicity we only consider one unlabeled sample, corresponding to sample $m+1$. In the semi-supervised setting, all unlabeled samples are treated the same was as sample $m+1$ shown here.

The update equation for $f_{\ell+1}^{(i)}$ is given by
\begin{equation}
    f_{\ell+1}^{(i)} = f_\ell^{(i)} + \Delta f_\ell^{(i)}
\end{equation}
where
\begin{equation}
    \Delta f_\ell^{(i)} = \frac{\alpha}{m}\sum_{j=1}^m(w_{y_i} - \mathbb{E}(w|f_\ell^{(i)}))\kappa(\phi^{(i)}, \phi^{(j)})
\end{equation}

\subsubsection{Self-Attention Head 1}

We design $W_K^{(1)}$, $W_Q^{(1)}$, and $W_V^{(1)}$ such that

\begin{equation}
    W_{K}^{(1)} e_\ell^{(i)} = ( 0_{d^\prime}, 0_{d^\prime},0_{d^\prime},\phi^{(i)})^T
\end{equation}

\begin{equation}
    W_{Q}^{(1)} e_\ell^{(j)} = ( 0_{d^\prime}, 0_{d^\prime},0_{d^\prime},\phi^{(j)})^T
\end{equation}

\begin{equation}
    W_{V}^{(1)} e_\ell^{(i)} = (\frac{\alpha}{m}[w_{y_i} - \mathbb{E}(w|f_\ell^{(i)})],0_{d}, 0_{d^\prime},0_{d^\prime})^T
\end{equation}



The output of this first attention head, at position $j\in\{1,\dots,m+1\}$ is

\begin{equation}
    (\frac{\alpha}{m}\sum_{j=1}^m(w_{y_j} - \mathbb{E}(w|f_\ell^{(j)}))\kappa(\phi^{(i)}, \phi^{(j)}),0_{d}, 0_{d^\prime}, 0_{d^\prime})^T
\end{equation}

The output of this first attention head at this first attention layer (before adding the skip connection) is 
\begin{equation}
    O^{(1)} = \begin{pmatrix}
    \Delta f_\ell^{(1)} & \dots & \Delta f_\ell^{(m)} & \Delta f_\ell^{(m+1)}\\
        0_{d} &  \dots & 0_{d} & 0_{d}\\
        0_{d^\prime} & \dots & 0_{d^\prime} & 0_{d^\prime}\\
        0_{d^\prime} & \dots & 0_{d^\prime} & 0_{d^\prime}
    \end{pmatrix}
\end{equation}

\subsubsection{Self-Attention Head 2}

With the second attention head we want to add $(0_d,-\mathbb{E}(w|f_\ell^{(j)}),0_{d^\prime},0_{d^\prime})^T$ from position $j$, so we clear out the prior expectation. This will provide ``scratch space'' into which, with the next attention layer type, we will update the expectation, using $f_{\ell+1}^{(j)}$. To do this, we design $W_Q^{(2)}$ and $W_K^{(2)}$ such that 

\begin{equation}
    W_{K}^{(2)} e_\ell^{(i)} = \lambda ( 0_{d^\prime}, 0_{d^\prime},0_{d^\prime},\phi^{(i)})^T
\end{equation}

\begin{equation}
    W_{Q}^{(2)} e_\ell^{(j)} = \lambda ( 0_{d^\prime}, 0_{d^\prime},0_{d^\prime},\phi^{(j)})^T
\end{equation}
where $\lambda >> 1$. With an RBF kernel, for example (similar things will happen with softmax), if $\lambda$ is very large,
\begin{equation}
    \kappa(W_{K}^{(2)} e_\ell^{(i)},W_{Q}^{(2)} e_\ell^{(j)})=\delta_{i,j}
\end{equation}
where $\delta_{i,j}=1$ if $i=j$, and it's zero otherwise.

The value matrix is designed as
\begin{equation}
    W_{V}^{(2)} e_\ell^{(i)} = (0_{d},\mathbb{E}(w|f_\ell^{(i)}), 0_{d^\prime}, 0_{d^\prime})^T
\end{equation}

The output of this head is
\begin{equation}
    O^{(2)} = \begin{pmatrix}
        0_{d} &  \dots & 0_{d} & 0_{d}\\
                \mathbb{E}(w|f_\ell^{(1)}) & \dots & \mathbb{E}(w|f_\ell^{(m)}) & \mathbb{E}(w|f_\ell^{(m+1)})\\
        0_{d^\prime} & \dots & 0_{d^\prime} & 0_{d^\prime}\\
        0_{d^\prime} & \dots & 0_{d^\prime} & 0_{d^\prime}\\
    \end{pmatrix}
\end{equation}

We then add $P^{(1)}O^{(1)}+P^{(2)}O^{(2)}$, with $P^{(1)}$ and $P^{(2)}$ designed so as to yield the cumulative output of the attention
\begin{equation}
    O^{(\text{total})} = \begin{pmatrix}
        \Delta f_\ell^{(1)} & \dots & \Delta f_\ell^{(m)} & \Delta f_\ell^{(m+1)} \\       -\mathbb{E}(w|f_\ell^{(1)}) & \dots & -\mathbb{E}(w|f_\ell^{(m)}) & -\mathbb{E}(w|f_\ell^{(m+1)})\\0_{d} &  \dots & 0_{d} & 0_{d}\\
0_{d^\prime} & \dots & 0_{d^\prime} & 0_{d^\prime}
    \end{pmatrix}
\end{equation}

This is now added to the skip connection, yielding the total output of this attention layer as

\begin{equation}
    T = \begin{pmatrix}         f_{\ell+1}^{(1)} & \dots &  f_{\ell+1}^{(m)} &  f_{\ell+1}^{(m+1)}\\

        0_{d^\prime} & \dots & 0_{d^\prime} & 0_{d^\prime}\\
     w_{y_1} & \dots & w_{y_N} & 0_{d^\prime}\\
        \phi_1 &  \dots & \phi_m & \phi_{m+1}\\
    \end{pmatrix}
\end{equation}

With the first attention layer, with two heads, we update the functions, and we also erase the prior expectations. In the next attention layer, we update the expectations, and place them in the locations of the prior expectations.

\subsection{Multi-Layer Perceptron (MLP) Layer}

The vectors connected to $T$ above will go into the next  layer, which will be characterized by a MLP. Ideally, the MLP should implement the function
\begin{equation}
    \mathbb{E}(w|f_{\ell+1}^{(i)}) = \sum\limits_{c=1}^{C} w_{c} \Big[ \frac{\exp (w_{c}^T f_{\ell+1}^{(i)})}{\sum\limits_{c^\prime=1}^{C} \exp( w_{c^\prime}^T f_{\ell+1}^{(i)})}  \Big]\label{eq:expect1}
\end{equation}
to be consistent with functional GD. Let $g_\gamma(f_{\ell+1}^{(i)})$ represent an MLP with parameters $\gamma$. The same MLP acts on each of the vectors at positions $i=1,\dots,m$, corresponding to the first $m$ columns of $T$, from left. The components of that vector corresponding to $f_{\ell+1}^{(i)}$ are input to $g_\gamma(\cdot )$, and the output is a $d^\prime$-dimensional vector. The output is placed in the position of the zeros in $T$.

At each layer of the Transformer, the form of the function in (\ref{eq:expect1}) is the same. Consequently, within the Transformer implementation, we tie the MLP parameters across all Transformer layer. The MLP parameters $\gamma$ are learned via the cross-entropy loss connected to the query $(\phi_{m+1},y_{m+1})$. While the MLP is motivated via functional GD to implement (\ref{eq:expect1}), this is not explicitly imposed when learning all Transformer parameters. 
\section{Experimental-Setup Details}
\label{app:experiment-details}

\paragraph{Parameter descriptions.}
\begin{itemize}
\item \textbf{Number of sample points} ($n=100$): The number of data points sampled from each manifold. These points form the vertices of our graph construction.

\item \textbf{$k$-nearest neighbours} ($k=6$): The number of nearest neighbors used when constructing the adjacency graph for each point. This parameter directly influences the connectivity of the graph and the subsequent Laplacian matrix construction.

\item \textbf{RBF kernel width} ($\alpha=10$): The scaling parameter for the radial basis function kernel used to compute weights in the adjacency matrix. In the code, this appears as:
\begin{verbatim}
A_rbf = torch.exp(-scale_rbf*dist_sq)  # scale_rbf = 10
\end{verbatim}

\item \textbf{Eigenvectors exported} ($k_{\text{feat}}=4$): The number of eigenvectors of the Laplacian matrix that are extracted and used as features. Although the Laplacian has $n$ eigenvectors, we only use the first 4 corresponding to the smallest eigenvalues, as these capture the most important structural information of the manifold.

\item \textbf{Manifold transformations}: The isotropic scale factor, planar rotation angle, and planar translation are randomly applied to the embedded coordinates after sampling. These transforms can be found in the implementation code, for example:
\begin{verbatim}
# Apply random scaling
scale_factor = np.random.uniform(0.02, 0.1)
a_translated = a * scale_factor

# Apply random rotation
theta = np.random.uniform(0, 2 * np.pi)
rotation_matrix = torch.tensor([
    [np.cos(theta), -np.sin(theta), 0],
    [np.sin(theta), np.cos(theta), 0],
    [0, 0, 1]
], dtype=torch.float32)
a_rotated = torch.matmul(a_translated, rotation_matrix)

# Apply random translation
translation_x = np.random.uniform(-1, 1)
translation_y = np.random.uniform(-1, 1)
translation = torch.tensor([[[translation_x, translation_y, 0]]])
a_translated = a_rotated + translation
\end{verbatim}
\end{itemize}

The random scale, rotation, and translation are applied \emph{after} sampling and therefore leave all intrinsic geometric properties unchanged. The scale factor uniformly multiplies all coordinates, preserving relative distances and angles. These transformations ensure that our model learns the inherent structure of the manifold rather than relying on specific coordinate representations.




\section{Synthetic Manifolds}
\label{ss:synthetic_manifold}
\subsection{Classic Manifolds in $\Re^3$}
\label{subsec:families}

Formally, let $(\mathcal M,g)$ be any of the five smooth, connected Riemannian manifolds illustrated in Figure 2.  Each manifold admits (i) an elementary chart $\Phi:U\subset\mathbb R^{m}\!\to\!\mathbb R^{3}$ that we list below, and (ii) a closed-form intrinsic distance $d_{\mathcal M}$.  For every experiment we draw $n=100$ points $\{\tilde x_i\}_{i=1}^{n}\subset\mathbb R^{3}$ by sampling the chart parameters uniformly and then applying an independent random rigid motion (isotropic scale, 3-D rotation, planar translation); this removes global cues while preserving the intrinsic geometry that our model must discover.  A centre index $\hat\imath\sim\mathrm{Unif}\{1,\dots,n\}$ is selected and binary labels are assigned via

$$
y_i \;=\;
\begin{cases}
1, & d_{\mathcal M}\bigl(\tilde x_i,\tilde x_{\hat\imath}\bigr)\,<\,\tau_{\mathcal M},\\[4pt]
0, & \text{otherwise},
\end{cases}
$$

so that the positive class is exactly the geodesic ball of radius $\tau_{\mathcal M}$ about $\tilde x_{\hat\imath}$.  The explicit charts, geodesic formulas, and manifold-specific thresholds $\tau_{\mathcal M}$ are collected in the remainder of this section.

\subsubsection{Spherical family \texorpdfstring{$\mathbf{S^{2}}$}{S2}}
\begin{itemize}
\item Parametrization:\\
\[
\Phi(\theta,\phi)
=\bigl(\sin\theta\cos\phi,\;\sin\theta\sin\phi,\;\cos\theta\bigr),
\qquad
(\theta,\phi)\in(0,\pi)\times(0,2\pi).
\]

\item Line element:\\
$d s^{2}=d\theta^{2}+\sin^{2}\theta\,d\phi^{2}$.

\item Geodesic distance:\\
Great-circle length
\(
d_{S^{2}}(p,q)=\arccos\langle p,q\rangle .
\)

\item Labeling threshold:\\
$\tau_{\mathrm{S}}=\pi/3$ radians. This corresponds to approximately 60 degrees of separation on the sphere's surface, creating a spherical cap that covers roughly 25\% of the sphere's surface area.
\end{itemize}
\subsubsection{Cylinder family}
\begin{itemize}
\item {Parametrization:}\\
\[
\Phi(\theta,z)=(r\cos\theta,\;r\sin\theta,\;z),\qquad
r\equiv1.
\]

\item {Line element:}\\
$ds^{2}=r^{2}d\theta^{2}+dz^{2}$.

\item {Distance:}\\
\(
d_{\text{cyl}}^{2}=(r\,\Delta\theta)^{2}+(z_{1}-z_{2})^{2}.
\)

\item{Labeling threshold:}\\
$\tau_{\text{cyl}}=1.0$. Given the unit radius, this threshold allows the geodesic distance to extend approximately 1 radian around the cylinder circumference and 1 unit along its height.
\end{itemize}
\subsubsection{Cone family}

\begin{itemize}
\item {Parametrization:}
\[
\Phi(s,\theta)=
\bigl(s\sin\alpha\cos\theta,\;
      s\sin\alpha\sin\theta,\;
      s\cos\alpha\bigr),
\quad
\alpha\in(0,\tfrac{\pi}{2}).
\]

\item {Line element:}\\

$ds^{2}=ds^{2}+s^{2}\sin^{2}\alpha\,d\theta^{2}$.

\item {Distance:}\\
Unfolding to a sector gives
\[
d_{\text{cone}}^{2}=s_{1}^{2}+s_{2}^{2}-2s_{1}s_{2}
                    \cos\!\bigl(\sin\alpha\,\Delta\theta\bigr).
\]

\item {Labeling threshold:}\\
$\tau_{\text{cone}}=0.5$. This creates a circular region on the cone's surface, with the size of the region depending on the cone's apex angle $\alpha$.
\end{itemize}
\subsubsection{Archimedean spiral (Swiss roll)}
\begin{itemize}
\item {Parametrization:}\\
$\displaystyle
\Phi(t)=\bigl(t^{2}\cos4\pi t,\;t^{2}\sin4\pi t,\;1\bigr),\;
t\in[0,1].
$

\item {Line element:}\\
Speed
$\|\dot\Phi(t)\|=2t\sqrt{1+4\pi^{2}t^{2}}$;
hence
\[
ds
 = 2t\sqrt{1+4\pi^{2}t^{2}}\;dt.
\]

\item {Distance:}\\
Integrating and taking absolute differences yields
\[
d_{\text{SR}}(t_{1},t_{2})
  =\frac{\bigl|(1+4\pi^{2}t_{2}^{2})^{3/2}
            -(1+4\pi^{2}t_{1}^{2})^{3/2}\bigr|}{6\pi^{2}}.
\]

\item {Labeling threshold:}\\
The class boundary is defined at $\operatorname{median}(t)$, where points with parameter $t$ less than the median are assigned to class 1, and others to class 0. This creates a natural division of the Swiss roll into inner and outer regions.
\end{itemize}
\subsubsection{Flat torus \texorpdfstring{$\mathbf{\mathbb{T}^{2}}$}{T2}}
\begin{itemize}
\item {Parametrization:}\\
$\displaystyle
\Phi(\theta,\phi)=(\theta,\phi,0),\;
(\theta,\phi)\in(0,2\pi)^{2}.
$

\item {Line element:}\\
$ds^{2}=d\theta^{2}+d\phi^{2}$.

\item {Distance (wrapped Euclidean):}\\
\[
d_{\mathbb{T}^{2}}^{2}=
\Delta\theta^{2}+\Delta\phi^{2},
\qquad
\Delta\theta=\min(|\theta_{1}-\theta_{2}|,\,2\pi-|\,\cdot\,|).
\]

\item {Labeling threshold:}\\
$\tau_{\mathbb{T}^{2}}=0.5$. Given the parameter range $(0,2\pi)^2$, this threshold creates a small circular region on the flat torus, corresponding to approximately 1/8 of the torus surface area.
\end{itemize}

\subsection{Product Manifolds}
\label{subsec:product}
The five classical manifolds introduced in Section~\ref{subsec:families} are all
low‐dimensional and possess relatively simple
label boundaries.  In practical applications, however, data often reside on
much \emph{higher‐dimensional} nonlinear spaces whose intrinsic geometry
cannot be recovered from any single elementary chart. By taking the Cartesian product of elementary manifolds, we can construct highly complex manifolds in arbitrary dimensions. These product manifolds play an important role in practical applications ranging from computational chemistry \citep{zhang2021product} and image and 3D structure analysis \citep{fumero2021learning}.
Evaluating on these products therefore probes the \emph{algorithmic}
generalisability of our Transformer—namely, whether it can still recover
the manifold structure and correct labels when confronted
with far higher ambient dimension and intricate decision boundaries.

\paragraph{Definition.}
For any integer \(K\ge 2\) and any ordered tuple
\((\mathcal{M}_{1},\dots ,\mathcal{M}_{K})\) chosen from our base set
\(\{\text{Sphere},\text{Cylinder},\text{Cone},\text{Spiral},\mathbb{T}^{2}\}\)
we consider the Cartesian product
$\mathcal{P}\;=\;\mathcal{M}_{1}\times\cdots\times\mathcal{M}_{K}.$
Each factor inherits its Riemannian metric $g_{\mathcal{M}_{k}}$, and
$\mathcal{P}$ carries the standard
\emph{product metric} \(g=\sum_{k=1}^{K}\pi_{k}^{*}g_{\mathcal{M}_{k}}\),
leading to the distance
\begin{align*}
d_{\times}(P,Q)=
\Bigl(\sum_{k=1}^{K}
       d_{\mathcal{M}_{k}}(p_{k},q_{k})^{2}\Bigr)^{1/2},
\qquad
P=(p_{1},\dots ,p_{K}),\;
Q=(q_{1},\dots ,q_{K}).
\numberthis \label{PM}
\end{align*}

\subsubsection{Statement and proof of Theorem 1\label{sec:theorem1}}
\begin{theorem}[Geodesic distance on a product manifold]
\label{thm:product-geo}
Let $(M_i,g_i)$ be complete Riemannian manifolds with intrinsic distances
$d_{M_i}$.  Endow
\(M=M_1\times\cdots\times M_K\) with the product metric described above.
Then for every \(p,q\in M\), $d_{M}(p,q)=
\sqrt{\sum_{i=1}^{K}d_{M_i}(\pi_i p,\pi_i q)^{2}}.$
\end{theorem}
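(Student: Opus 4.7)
The plan is to prove the two opposite inequalities $d_M(p,q) \le \sqrt{\sum_i d_{M_i}(\pi_i p,\pi_i q)^2}$ and $d_M(p,q) \ge \sqrt{\sum_i d_{M_i}(\pi_i p,\pi_i q)^2}$ separately, following the standard template for product Riemannian metrics. Write $d_i := d_{M_i}(\pi_i p, \pi_i q)$ for brevity.

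For the upper bound, I will construct an explicit competitor curve in $M$. Since each $(M_i,g_i)$ is complete, Hopf--Rinow guarantees a minimizing geodesic $\gamma_i:[0,1]\to M_i$ from $\pi_i p$ to $\pi_i q$ with constant speed $\|\dot\gamma_i\|_{g_i} \equiv d_i$. Defining the product curve $c(t) := (\gamma_1(t),\dots,\gamma_K(t))$, the product metric gives $g(\dot c,\dot c) = \sum_i \|\dot\gamma_i\|_{g_i}^2 = \sum_i d_i^2$, which is constant in $t$, so $L(c) = \sqrt{\sum_i d_i^2}$. This establishes $d_M(p,q) \le \sqrt{\sum_i d_i^2}$. (If completeness is not assumed one can instead take almost-minimizing curves $\gamma_i^{\varepsilon}$ of length $d_i + \varepsilon$ with constant speed and let $\varepsilon \downarrow 0$.)

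For the lower bound, let $c:[0,1]\to M$ be any piecewise $C^1$ curve joining $p$ to $q$, and write $c(t) = (c_1(t),\dots,c_K(t))$ and $a_i(t) := \|\dot c_i(t)\|_{g_i}$. Then $L(c) = \int_0^1 \sqrt{\sum_i a_i(t)^2}\,dt$, which is precisely the integral of the Euclidean $\ell^2$ norm of the $\RR^K$-valued function $t \mapsto (a_1(t),\dots,a_K(t))$. By Minkowski's integral inequality in $\RR^K$,
\[
L(c) \;=\; \int_0^1 \bigl\|(a_1(t),\dots,a_K(t))\bigr\|_{\ell^2} \, dt \;\ge\; \Bigl\|\int_0^1 (a_1(t),\dots,a_K(t))\, dt\Bigr\|_{\ell^2} \;=\; \sqrt{\sum_{i=1}^K L(c_i)^2}.
\]
Since each projection $c_i$ connects $\pi_i p$ to $\pi_i q$, we have $L(c_i) \ge d_i$, hence $L(c) \ge \sqrt{\sum_i d_i^2}$. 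Taking the infimum over $c$ yields $d_M(p,q) \ge \sqrt{\sum_i d_i^2}$.

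The only nontrivial step is the lower bound, where one must resist the temptation to estimate factor-by-factor (which would only yield $d_M(p,q) \ge \max_i d_i$ or, worse, the naive sum). The key observation is that the product metric packages the pointwise speeds into an $\ell^2$ norm, so Minkowski's inequality in $\ell^2$ is the exact tool needed to swap the integral and the norm in the right direction. Everything else is routine; I would add a short remark that the minimizer constructed in the upper bound is itself a geodesic on $M$ (each coordinate being geodesic and the product metric being diagonal), recovering the familiar fact that geodesics on a product manifold are products of geodesics.
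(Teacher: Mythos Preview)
Your proof is correct and follows essentially the same approach as the paper: both arguments use Minkowski's integral inequality in $\mathbb{R}^K$ (what the paper phrases as the ``triangle inequality in $\mathbb{R}^k$'') for the lower bound, and a product of constant-speed curves for the upper bound. The only cosmetic difference is that you invoke Hopf--Rinow directly to obtain exact minimizing geodesics (valid, since the statement assumes completeness), whereas the paper first argues via $\varepsilon$-almost-minimizers and relegates the Hopf--Rinow shortcut to a remark.
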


\noindent
Although the formula~\eqref{PM} is intuitive, we were unable to
locate an explicit proof in the literature.  For completeness, we provide the following proof.


Restatement:

Let $p=(p_1,\dots ,p_k)$ and $q=(q_1,\dots ,q_k)$ be points of $M$.  Then
\[
   d_M(p,q)
     \;=\;\sqrt{\,d_{M_1}(p_1,q_1)^2\; +\; \dots \; +\; d_{M_k}(p_k,q_k)^2\,}.
\]

\begin{proof}
Set $d_i:=d_{M_i}(p_i,q_i)$ for $1\le i\le k$.  We prove the two opposite inequalities.

\paragraph{Lower bound.}  Let $c:[0,1]\to M$ be any piecewise $C^{1}$ curve with $c(0)=p$ and $c(1)=q$.  Write $c(t)=(c_1(t),\dots ,c_k(t))$ and denote
\[
   a_i(t):=\lVert\dot c_i(t)\rVert_{g_i}\quad(\ge 0).
\]
Because $g(\dot c,\dot c)=\sum_{i=1}^{k} a_i(t)^2$,
\[
   L(c)=\int_{0}^{1}\sqrt{\sum_{i=1}^{k} a_i(t)^2}\,dt.
\]
Using the triangle inequality in $\mathbb R^{k}$,
\[
   \Bigl\lVert\int_{0}^{1} (a_1(t),\dots ,a_k(t))\,dt\Bigr\rVert_{\mathbb R^{k}}
       \;\le\;L(c).
\]
But $\int_{0}^{1} a_i(t)\,dt=L(c_i)\ge d_i$ for each $i$, so
\[
   L(c)\;\ge\;\sqrt{\sum_{i=1}^{k} d_i^{\,2}}.
\]
Taking the infimum over all $c$ yields
\[
   d_M(p,q)\;\ge\;\sqrt{\sum_{i=1}^{k} d_i^{\,2}}.
\]

\paragraph{Upper bound.}  Fix $\varepsilon>0$.  For each $i$ choose a piecewise $C^{1}$ curve $\gamma_i:[0,1]\to M_i$ joining $p_i$ to $q_i$ with constant speed and
\[
   L(\gamma_i)\;<\;d_i+\varepsilon.
\]
Define $c:[0,1]\to M$ by $c(t):=(\gamma_1(t),\dots ,\gamma_k(t))$.  Since each $\gamma_i$ has constant speed $a_i:=L(\gamma_i)$,
\[
   g(\dot c(t),\dot c(t))\;=\;\sum_{i=1}^{k} a_i^{2}\quad(\text{independent of }t),
\]
so
\[
   L(c)=\sqrt{\sum_{i=1}^{k} a_i^{2}}\;<\;\sqrt{\sum_{i=1}^{k}(d_i+\varepsilon)^2}.
\]
Letting $\varepsilon\downarrow 0$ gives the reverse inequality
\[
   d_M(p,q)\;\le\;\sqrt{\sum_{i=1}^{k} d_i^{\,2}}.
\]
\end{proof}

\begin{remark}
If every $(M_i,g_i)$ is complete, the Hopf--Rinow theorem guarantees the existence of minimizing geodesics; in that case the curve constructed in the upper bound can be chosen geodesic in every factor, giving an explicit length minimizer in the product manifold.
\end{remark}

\subsection{Image Manifold}
\label{ss:image_manifold}

To evaluate our approach on high-dimensional, we created image manifolds using Stable Diffusion v1.5 \cite{Rombach_2022_CVPR}. These manifolds represent a significant increase in complexity and dimensionality compared to our synthetic manifolds, testing the transfer capabilities of our semi-supervised in-context learning method.

\subsubsection{Image Manifold Generation}

Our image manifolds are constructed using spherical linear interpolation (slerp) in the latent space of Stable Diffusion v1.5. For each manifold:

\begin{enumerate}
    \item We sample two independent random latent vectors from a standard normal distribution.
    \item We perform spherical linear interpolation between these two vectors, generating 100 evenly spaced intermediate points.
    \item Each interpolated latent vector is decoded into a 500×500 pixel RGB image using the Stable Diffusion model without classifier guidance.
    \item This process is repeated for 950 different random seed pairs, creating 950 diverse image manifolds.
\end{enumerate}

As in our other experiments, there are $n=100$ total samples connected with a given manifold, here corresponding to 100 images. The smooth transitions between images in each manifold confirm that the latent space of the diffusion model indeed has a manifold structure. This allows us to test our methods on natural image data while maintaining control over the underlying manifold geometry.

\subsubsection{Data Processing}

The image data processing involves several specific steps:

\begin{itemize}
    \item \textbf{Downsampling}: All 500×500 images are downsampled to 32×32 pixels. 
    This reduces the dimensionality from 250,000 to 1,024 dimensions per image while preserving essential visual features.
    \item \textbf{Grayscale conversion}: All images are converted to grayscale to simplify processing.
    \item \textbf{Normalization}: Pixel values are first normalized to the range [0,1] by dividing by 255.
    \item \textbf{Pixel scaling}: A pixel scaling factor of 0.001 is applied, reducing the normalized pixel values to the range [0, 0.001]. This scaling is specifically designed to match the distance scale of image manifolds with our synthetic manifolds, enabling effective transfer of models trained on synthetic data to image data. Without this scaling factor, the distances between image points would have a significantly different distribution compared to synthetic manifolds, hindering generalization.
\end{itemize}

The relevant code for image processing is:
\begin{verbatim}
# Downsample from 500×500 to 32×32
img = img.resize((32, 32), Image.LANCZOS)

# Convert to tensor and normalize to [0, 1]
img_tensor = torch.from_numpy(np.array(img)).float() / 255.0

# Apply pixel scale factor of 0.001
img_tensor = img_tensor / 1000.0
\end{verbatim}

\subsubsection{Train-Test Split}
We divide the 950 image manifolds between training and testing:
\begin{itemize}
    \item \textbf{Training manifolds}: Prompts with IDs 1-500 (first 500 manifolds)
    \item \textbf{Test manifolds}: Prompts with IDs 501+ (remaining manifolds)
\end{itemize}

This split ensures that models are evaluated on previously unseen interpolation sequences, testing true generalization rather than memorization.

\subsubsection{Graph Construction}

For each image manifold:
\begin{itemize}
    \item We compute pairwise Euclidean distances between flattened image vectors.
    \item These distances can be (optionally) scaled using the parameter \texttt{--image\_scale 0.001}. 
    \item An RBF kernel is applied to convert distances to similarities: $A_{ij} = \exp(-\alpha \cdot d_{ij}^2)$. $\alpha$ has the same scale 10 here as for our other manifolds.
    \item A 4-nearest neighbors graph is constructed for each manifold.
    \item The normalized Laplacian is computed from this graph as for our other manifolds.
\end{itemize}

\subsubsection{Labeling}

For image manifolds, we use a position-based labeling scheme:
\begin{itemize}
    \item The first 50 images in each interpolation sequence are assigned to class 0.
    \item The last 50 images are assigned to class 1.
\end{itemize}

This creates a natural separation based on position in the interpolation sequence, dividing each manifold into two equal parts. Only $m\ll n$ of the images are labeled as given to our Transformer.

\subsubsection{Visual Representation}

To visualize the manifold structure of our image datasets, we present two examples of complete 10×10 image grids, each showing all $n=100$ interpolated frames from different manifolds. These grids demonstrate the smooth transitions created by the spherical linear interpolation in Stable Diffusion's latent space.

\begin{figure}[h]
\centering
\includegraphics[width=\textwidth]{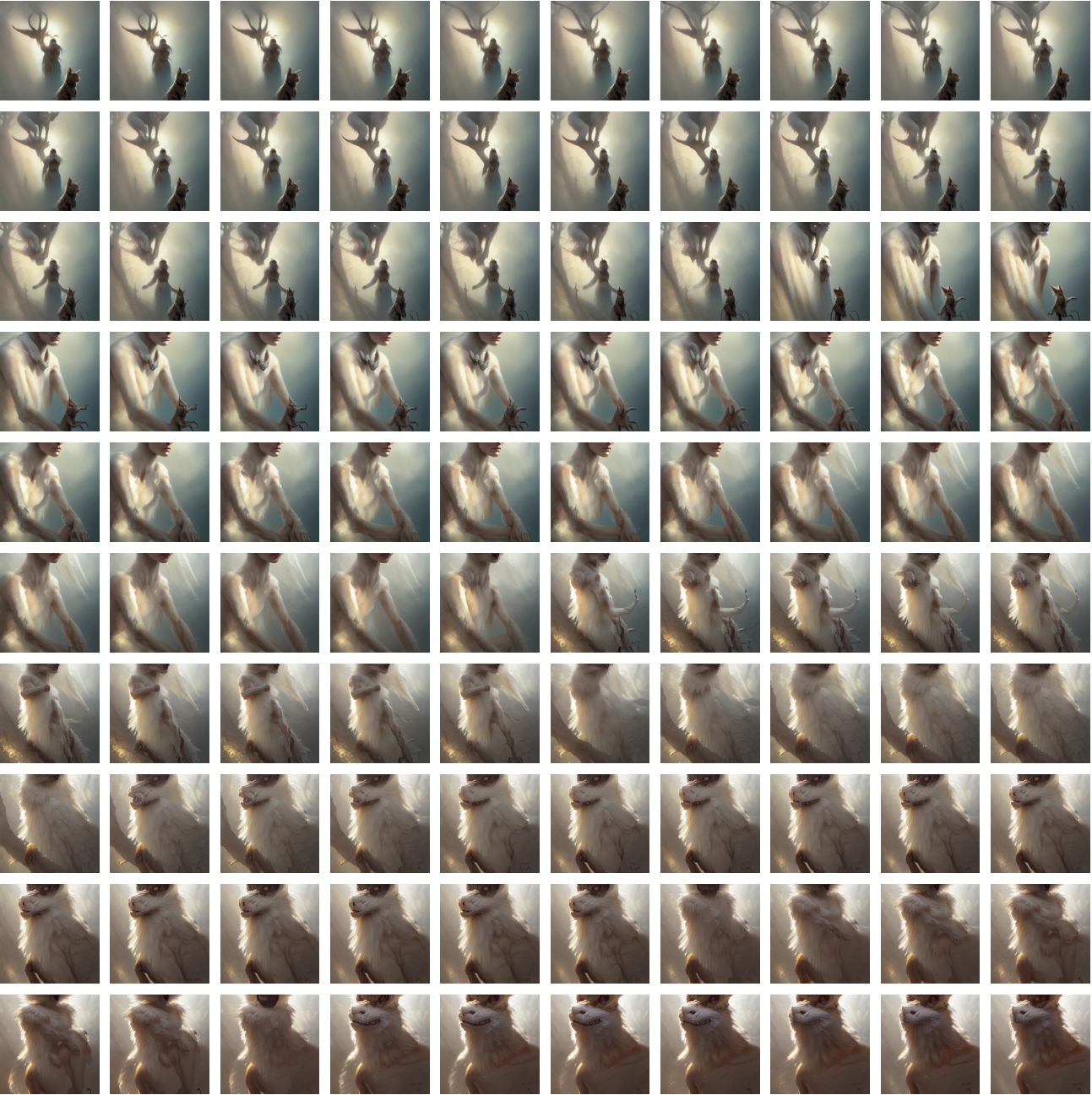}
\caption{Example image manifold showing interpolation of a deer in atmospheric lighting. The transformation progresses from a distant view of antlers (class 0, top 5 rows) to close-up facial details (class 1, bottom 5 rows), demonstrating how the manifold captures continuous changes in perspective.}
\label{fig:image_manifold_2}
\end{figure}

\begin{figure}[h]
\centering
\includegraphics[width=\textwidth]{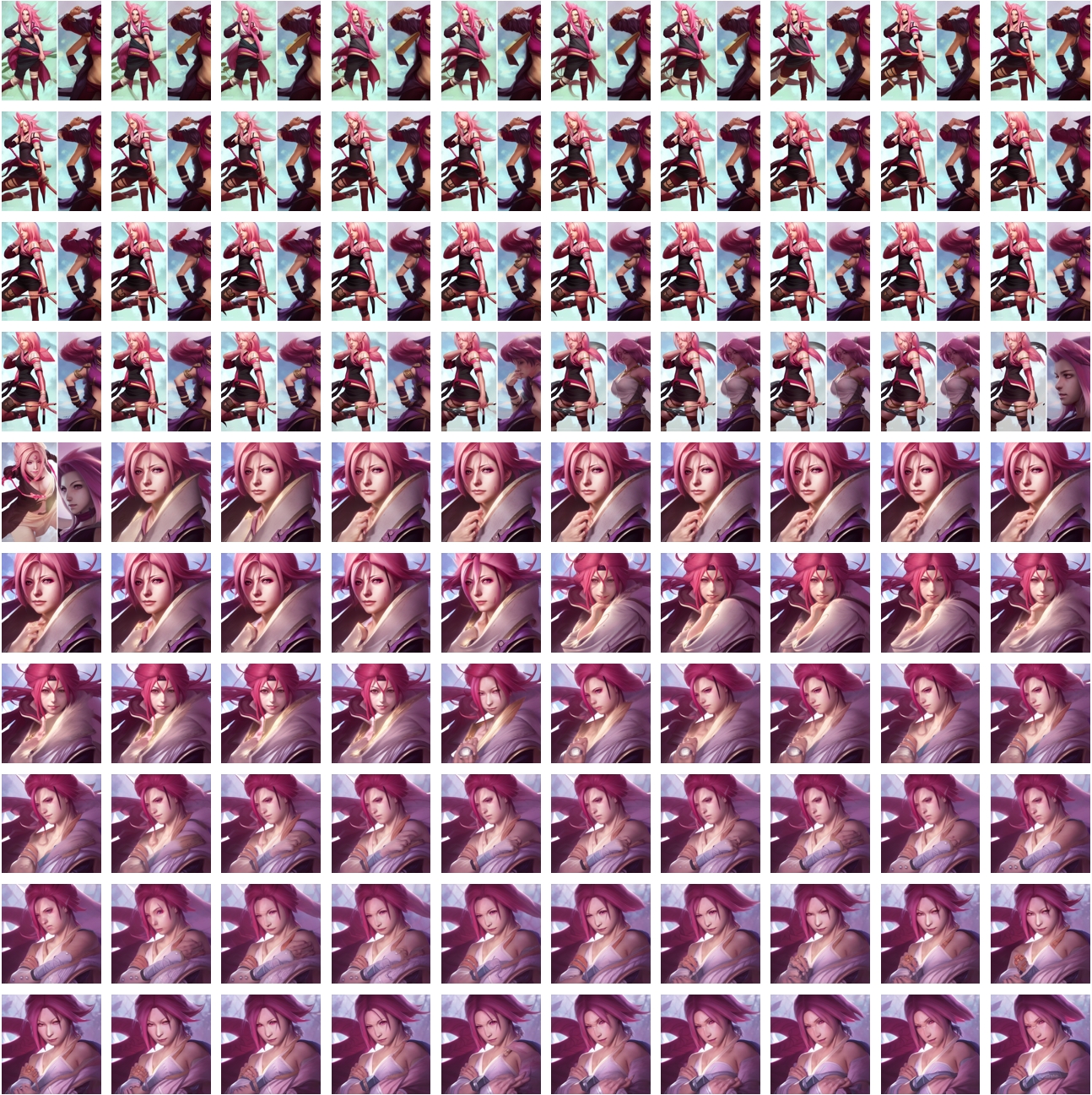}
\caption{Example image manifold showing interpolation between anime-style character concepts. The progression moves from an action pose with bright colors (class 0, top 5 rows) to increasingly detailed portrait shots (class 1, bottom 5 rows), highlighting the manifold's ability to represent both style and compositional changes.}
\label{fig:image_manifold_3}
\end{figure}

These examples illustrate several important aspects of our image manifolds:

\begin{enumerate}
    \item \textbf{Continuous transitions:} Each row shows gradual changes without abrupt shifts, supporting the assumed smooth manifold structure.
    
    \item \textbf{Semantic coherence:} The interpolations maintain thematic consistency while traversing the latent space, with changes in perspective, detail, composition, and style.
    
    \item \textbf{Natural class division:} The first 50 images (top 5 rows) and last 50 images (bottom 5 rows) form our two classes, with the most significant changes often occurring near the midpoint of the interpolation.
    
    \item \textbf{Diversity across manifolds:} The three examples demonstrate how our methodology creates varied manifolds across different themes and visual styles, providing a challenging and diverse test set.
\end{enumerate}

Our semi-supervised in-context learning method must navigate these complex visual manifolds, identifying the underlying geometric structure from just a few labeled examples. The ability to transfer knowledge from synthetic 3D manifolds to these high-dimensional image manifolds demonstrates the robust generalization capabilities of our approach.

\section{Additional Experimental Results}
\label{app:additional-results}

\subsection{Dependence on ICL Transformer Depth}
\label{subsec:transformer-depth}



We investigate how the depth of the in-context learning (ICL) Transformer for categorical outcomes (see Section \ref{s:transformer_categorical_ICL}) affects performance by comparing 1-layer and 2-layer ICL Transformer architectures (in general, we observed little change in performance for deeper models). Figure~\ref{fig:n_layers_comparison} shows the results of this comparison on the product manifold.

\begin{figure}[t]
\centering
\includegraphics[width=0.8\textwidth]{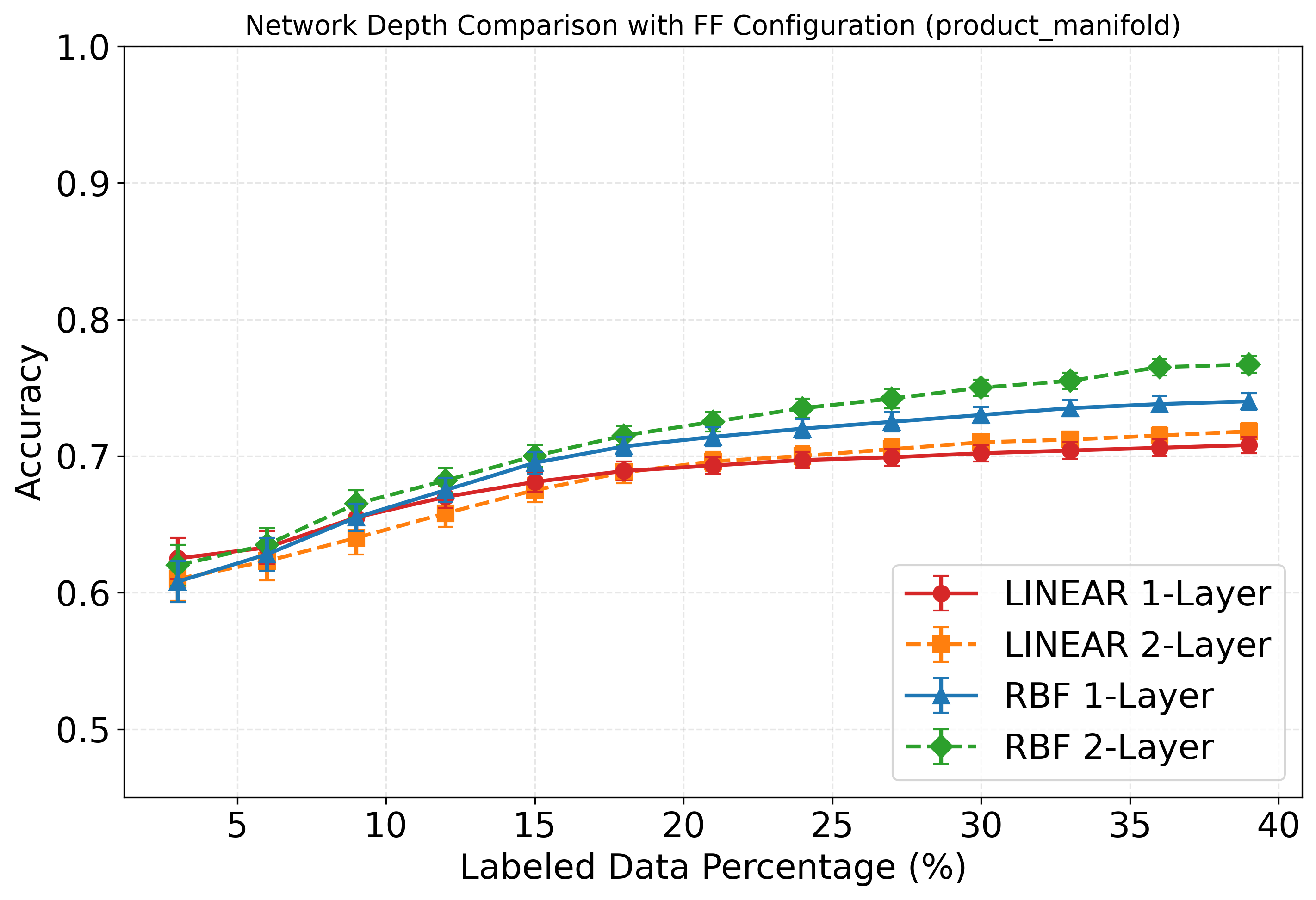}
\caption{Comparison of 1-layer vs. 2-layer ICL Transformer architectures on the product manifold. The 2-layer architecture shows improved performance, particularly in the low-label regime, suggesting that deeper networks can learn more sophisticated manifold representations.}
\label{fig:n_layers_comparison}
\end{figure}

The results demonstrate that the 2-layer ICL architecture consistently outperforms the 1-layer architecture across all labeled data percentages. This supports the theoretical framework in \cite{Aaron_ICL}, showing that increased depth allows the model to learn more sophisticated manifold representations. The performance improvement is more pronounced for the RBF kernel, which further emphasizes the kernel's ability to capture non-linear relationships in the data manifold. We observe that for both kernel types, the 2-layer ICL architecture provides a consistent advantage, with RBF kernels outperforming linear counterparts at both depths. This advantage is most notable in the low-label regime (5-15\% labeled data), highlighting the importance of architectural capacity when labeled data is scarce.

\subsection{Comparison of Kernel Types Across Manifolds}
\label{subsec:kernel-comparison}

While the paper focused primarily on results with the RBF kernel on selected manifolds due to space constraints, here we present a comprehensive analysis comparing linear and RBF kernels across all individual manifolds as well as product manifolds.

\begin{figure}[htbp]
\centering
\begin{subfigure}{0.48\textwidth}
    \includegraphics[width=\textwidth]{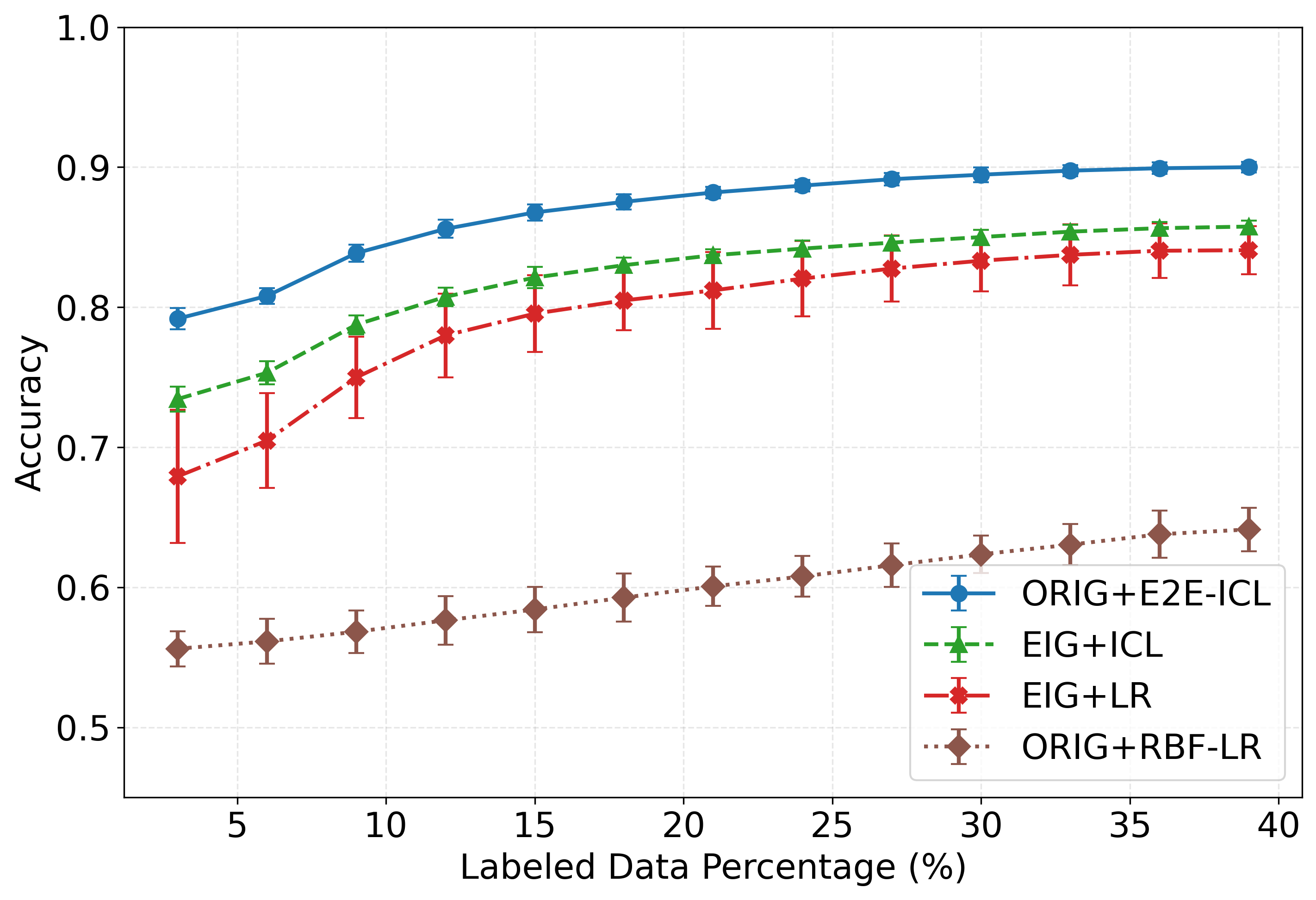}
    \caption{Sphere (Linear kernel)}
    \label{fig:linear_sphere}
\end{subfigure}
\hfill
\begin{subfigure}{0.48\textwidth}
    \includegraphics[width=\textwidth]{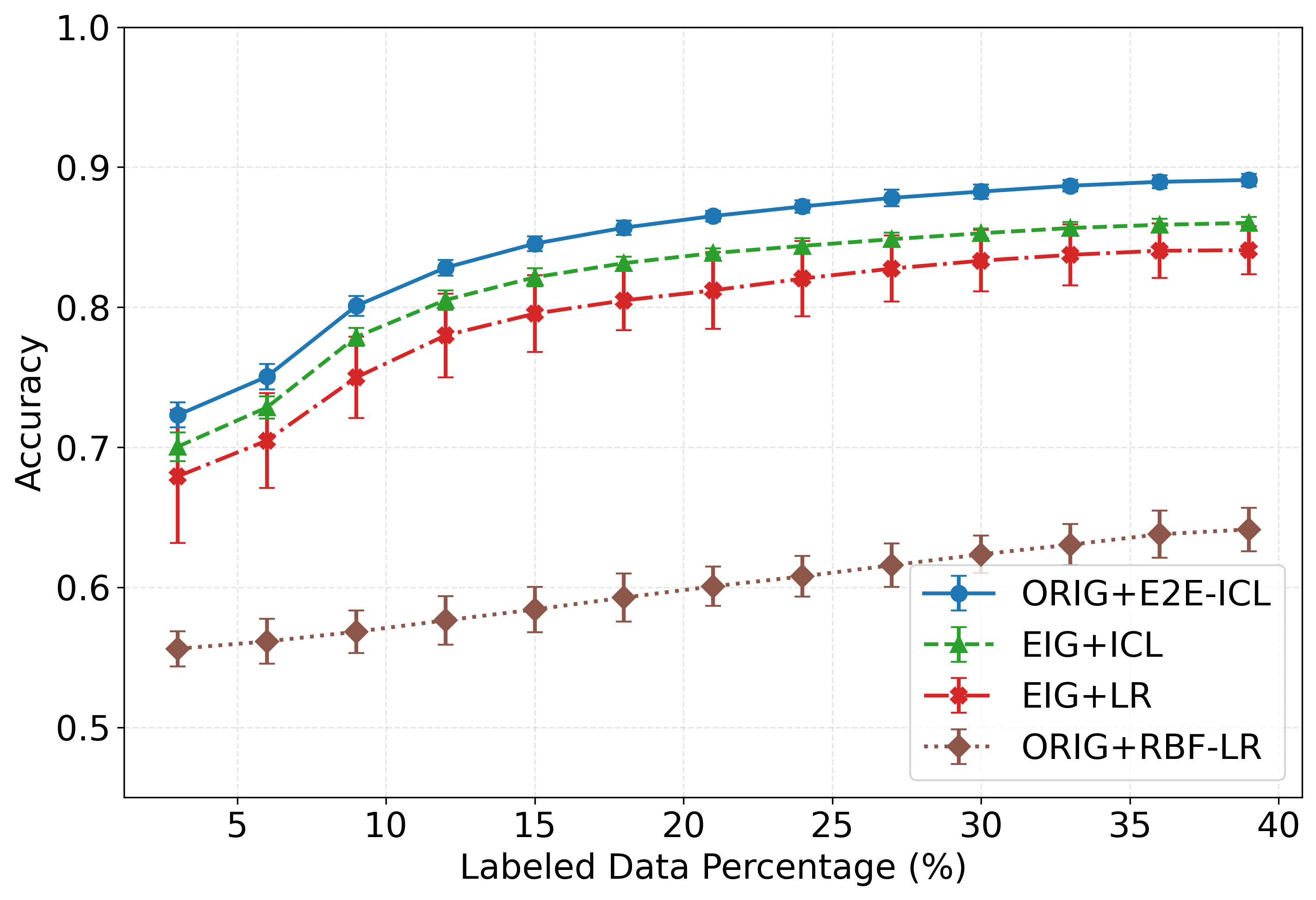}
    \caption{Sphere (RBF kernel)}
    \label{fig:rbf_sphere}
\end{subfigure}

\begin{subfigure}{0.48\textwidth}
    \includegraphics[width=\textwidth]{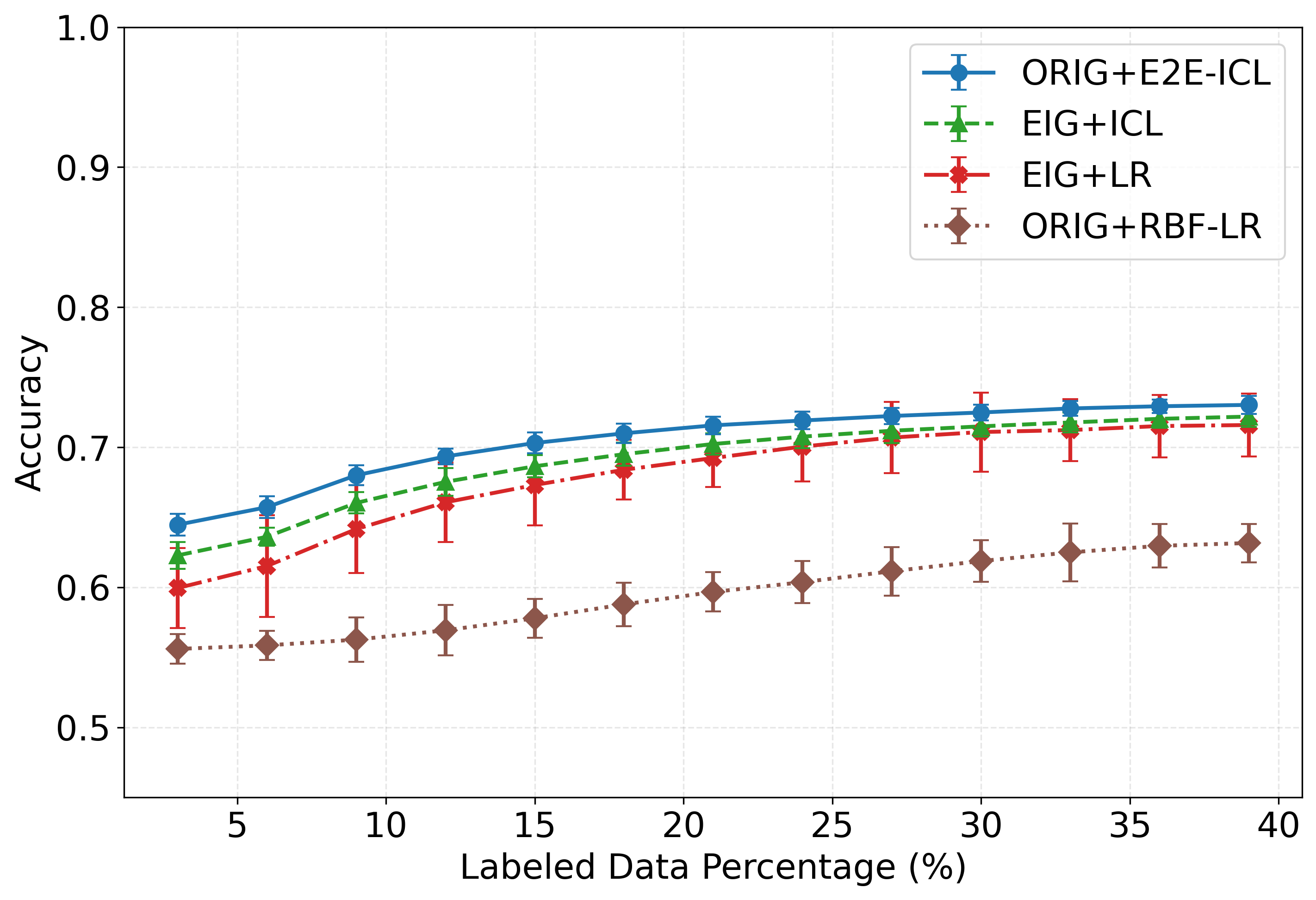}
    \caption{Torus (Linear kernel)}
    \label{fig:linear_torus}
\end{subfigure}
\hfill
\begin{subfigure}{0.48\textwidth}
    \includegraphics[width=\textwidth]{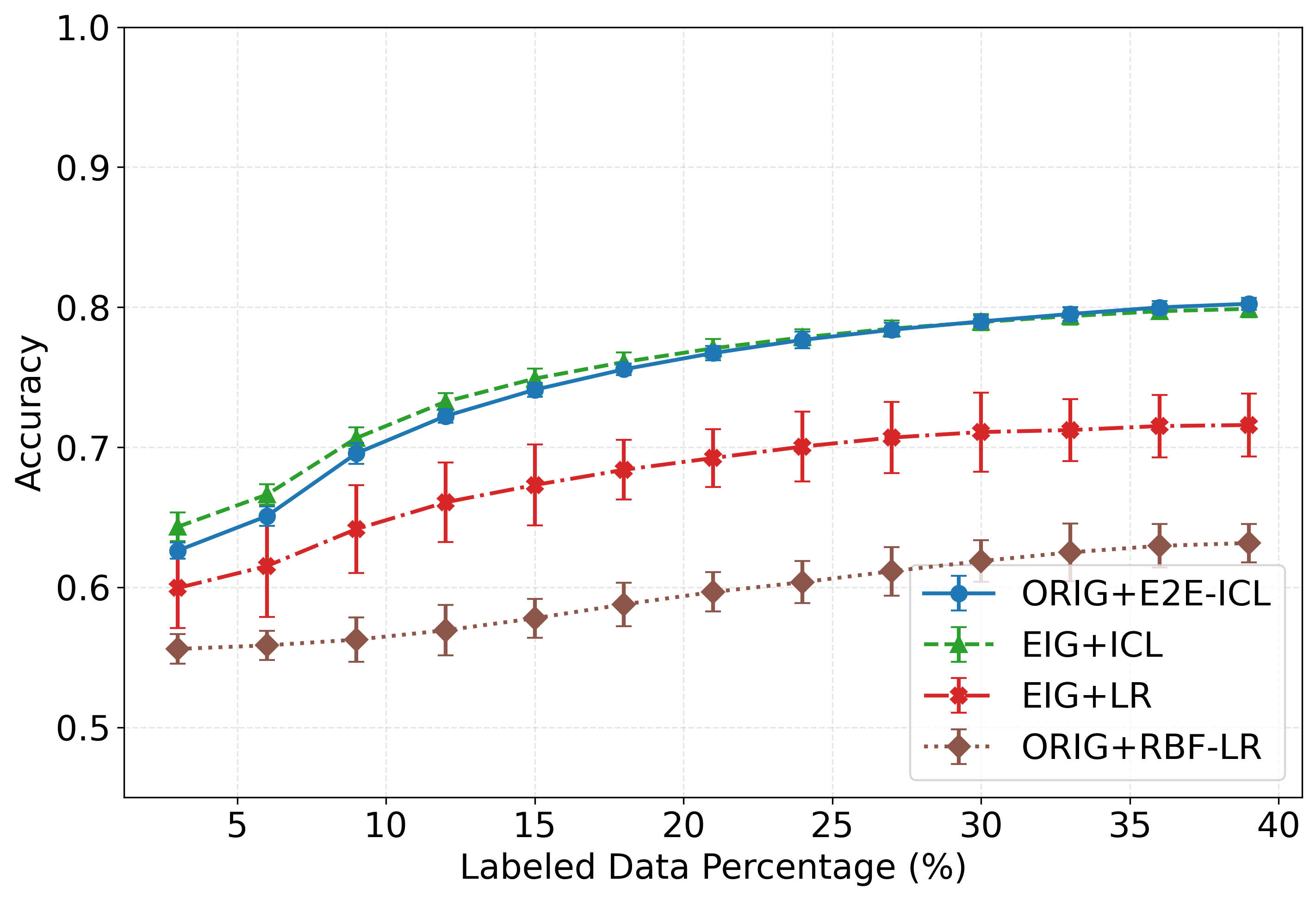}
    \caption{Torus (RBF kernel)}
    \label{fig:rbf_torus}
\end{subfigure}

\begin{subfigure}{0.48\textwidth}
    \includegraphics[width=\textwidth]{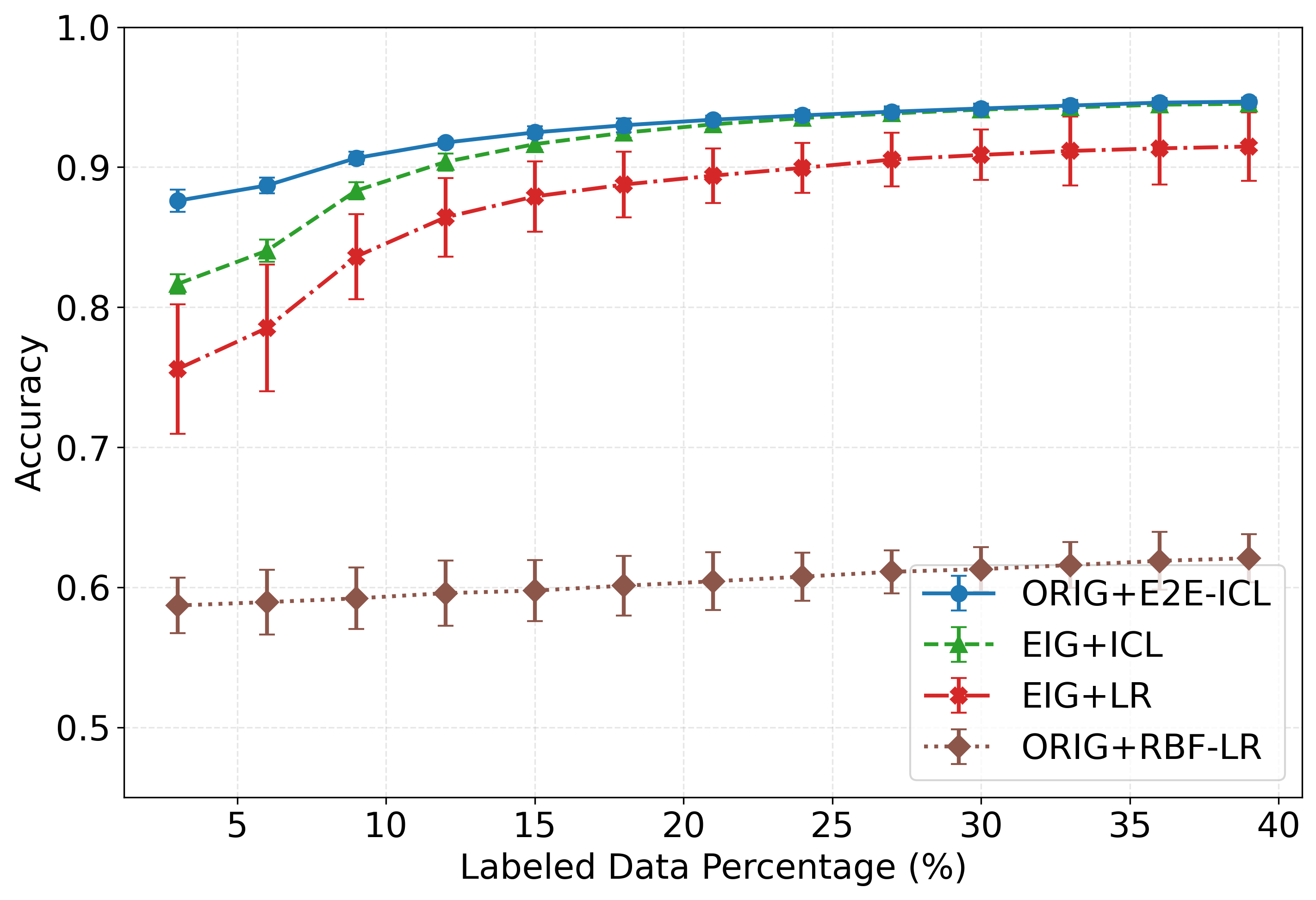}
    \caption{Cone (Linear kernel)}
    \label{fig:linear_cone}
\end{subfigure}
\hfill
\begin{subfigure}{0.48\textwidth}
    \includegraphics[width=\textwidth]{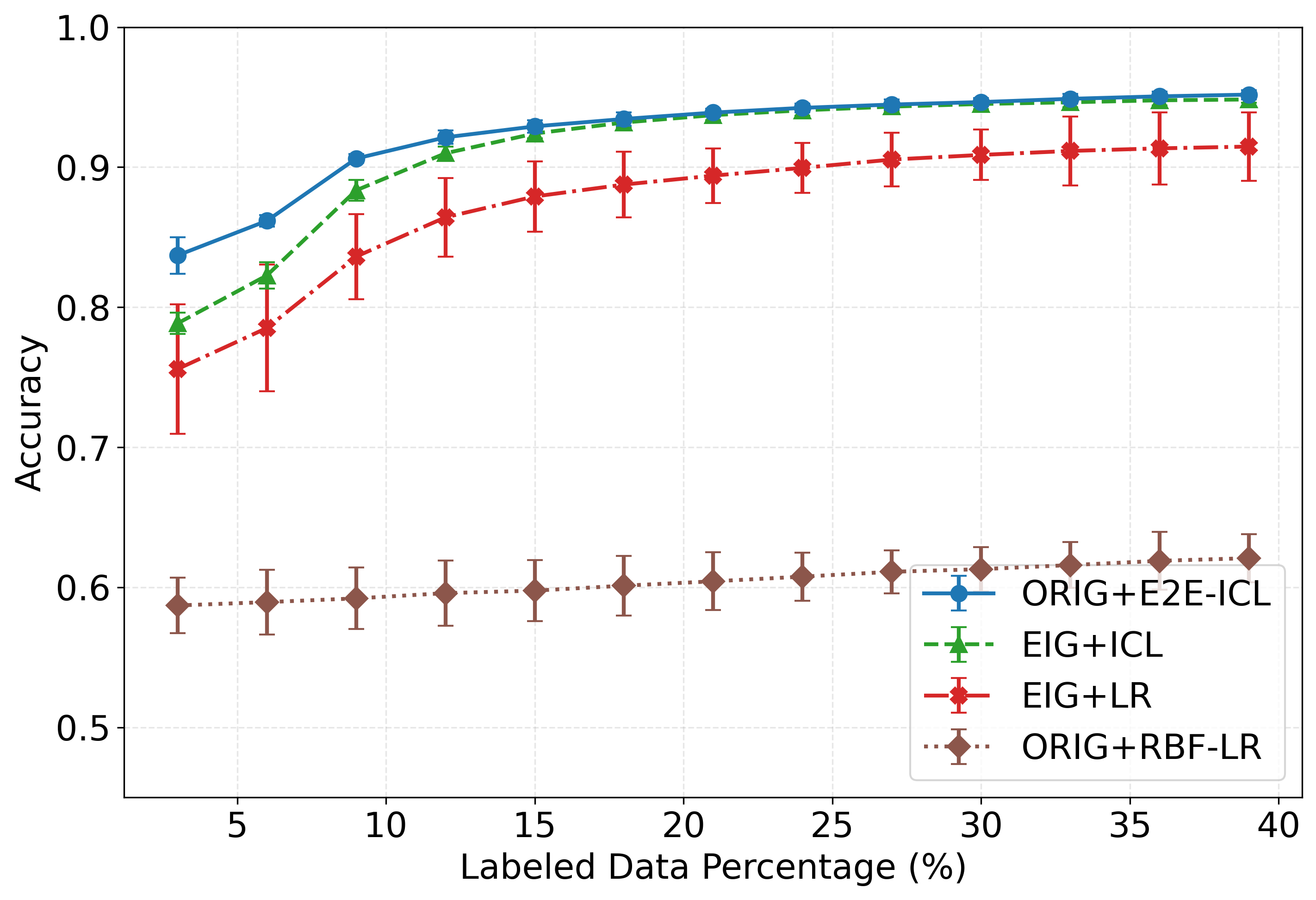}
    \caption{Cone (RBF kernel)}
    \label{fig:rbf_cone}
\end{subfigure}

\begin{subfigure}{0.48\textwidth}
    \includegraphics[width=\textwidth]{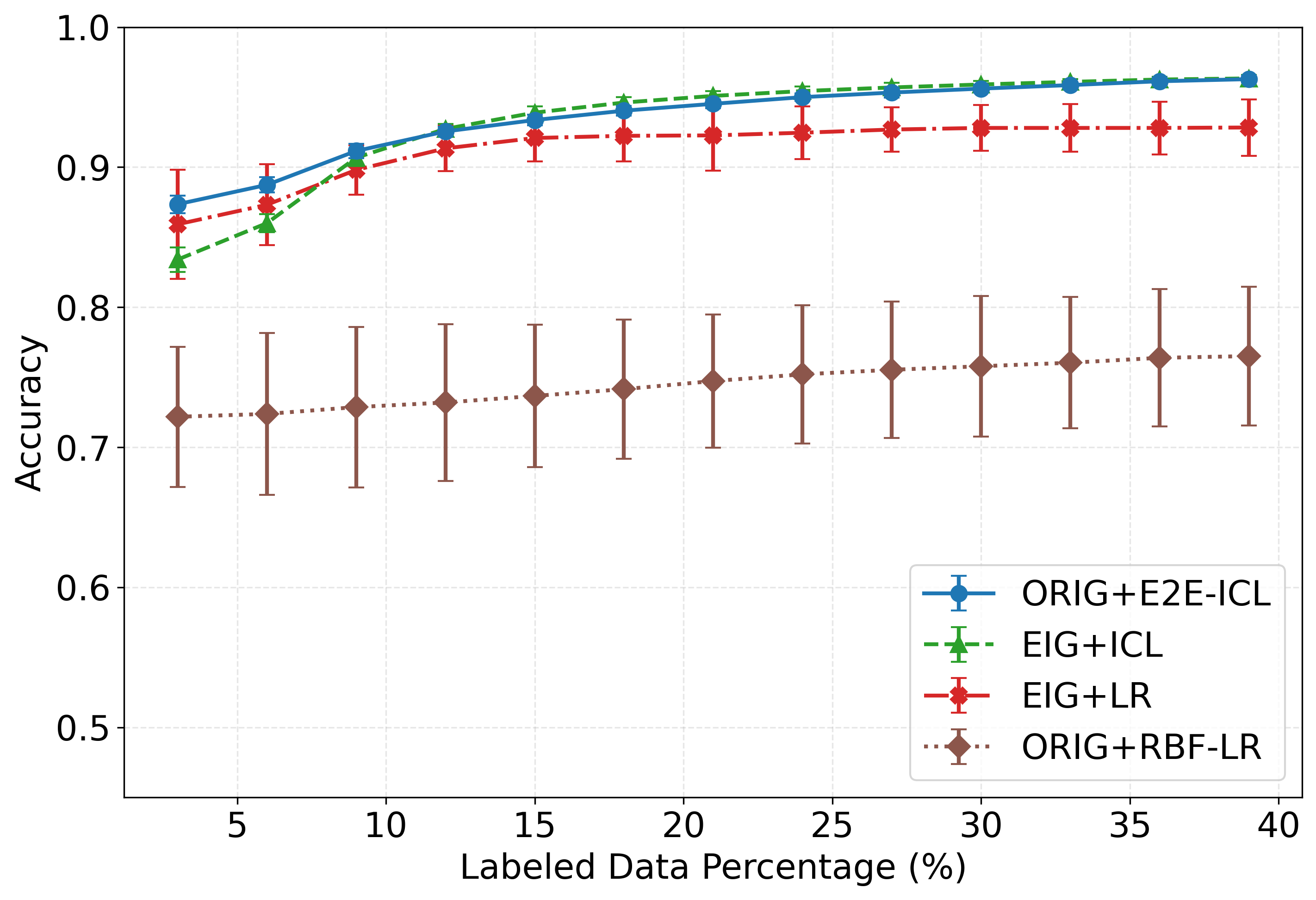}
    \caption{Swiss Roll (Linear kernel)}
    \label{fig:linear_swiss_roll}
\end{subfigure}
\hfill
\begin{subfigure}{0.48\textwidth}
    \includegraphics[width=\textwidth]{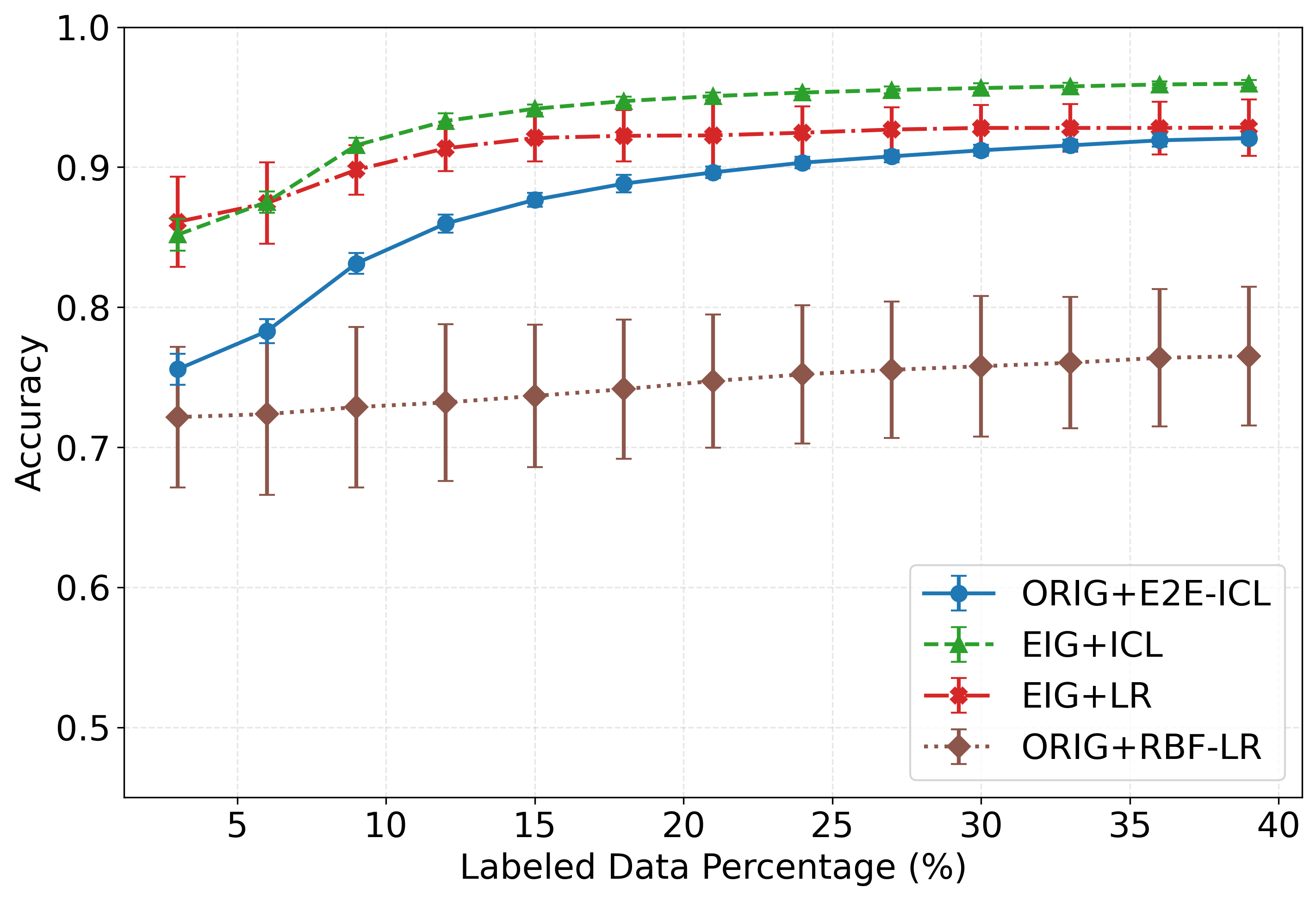}
    \caption{Swiss Roll (RBF kernel)}
    \label{fig:rbf_swiss_roll}
\end{subfigure}

\caption{Comparison of model performance across all individual manifolds with linear and RBF kernels. Our end-to-end Transformer (ORIG+E2E-ICL) consistently outperforms baseline methods across all manifolds and kernel types, with the advantage being particularly pronounced in the low-label regime.}
\label{fig:all_manifolds_comparison}
\end{figure}

The results in Figure~\ref{fig:all_manifolds_comparison} demonstrate that our approach generalizes well across different manifold types and kernel choices. For each manifold, we observe similar patterns where our end-to-end Transformer consistently outperforms the baselines. While the RBF kernel typically yields better results for most manifolds, the relative performance varies depending on the geometric complexity of the manifold. These results further demonstrate that the semi-supervised Transformer framework we have developed is robust across a variety of manifold structures.

\subsection{In-Distribution vs. Out-of-Distribution Performance}
\label{subsec:id-vs-ood}

We examine the generalization capabilities of our model by evaluating its performance in both in-distribution (ID) and out-of-distribution (OOD) settings. For ID, the manifold types seen at test were observed when the Transformer was trained, while for OOD the Transformer did not see the class of test manifolds when being trained. For OOD evaluation, we train the model on four manifold types and test on the held-out fifth manifold. Tables~\ref{tab:context_size_3}, \ref{tab:context_size_21}, and \ref{tab:context_size_39} present detailed results for different context sizes.

\begin{table}[htbp]
  \centering
  \small
  \caption{Performance Comparison at Context Size 3}
  \begin{tabular}{lcccc}
    \toprule
    Kernel-Target & EIG+ICL (ID) & EIG+ICL (OOD) & E2E ICL (ID) & E2E ICL (OOD) \\
    \midrule
    Linear-Cone & 0.778 ± 0.007 & 0.799 ± 0.007 & 0.859 ± 0.008 & 0.757 ± 0.018 \\
    Linear-Cylinder & 0.734 ± 0.007 & 0.790 ± 0.010 & 0.772 ± 0.011 & 0.923 ± 0.007 \\
    Linear-Sphere & 0.708 ± 0.009 & 0.788 ± 0.007 & 0.767 ± 0.008 & 0.919 ± 0.005 \\
    Linear-Torus & 0.605 ± 0.010 & 0.857 ± 0.009 & 0.627 ± 0.008 & 0.968 ± 0.003 \\
    Linear-Swiss Roll & 0.789 ± 0.009 & 0.788 ± 0.008 & 0.854 ± 0.006 & 0.826 ± 0.015 \\
    \midrule
    RBF-Cone & 0.731 ± 0.008 & 0.826 ± 0.013 & 0.790 ± 0.013 & 0.753 ± 0.017 \\
    RBF-Cylinder & 0.696 ± 0.009 & 0.918 ± 0.006 & 0.725 ± 0.010 & 0.927 ± 0.005 \\
    RBF-Sphere & 0.658 ± 0.010 & 0.912 ± 0.006 & 0.679 ± 0.009 & 0.920 ± 0.006 \\
    RBF-Torus & 0.614 ± 0.010 & 0.963 ± 0.004 & 0.590 ± 0.006 & 0.969 ± 0.003 \\
    RBF-Swiss Roll & 0.811 ± 0.010 & 0.785 ± 0.009 & 0.805 ± 0.010 & 0.694 ± 0.014 \\
    \bottomrule
  \end{tabular}
  \label{tab:context_size_3}
\end{table}

\begin{table}[htbp]
  \centering
  \small
  \caption{Performance Comparison at Context Size 21}
  \begin{tabular}{lcccc}
    \toprule
    Kernel-Target & EIG+ICL (ID) & EIG+ICL (OOD) & E2E ICL (ID) & E2E ICL (OOD) \\
    \midrule
    Linear-Cone & 0.931 ± 0.004 & 0.955 ± 0.004 & 0.933 ± 0.003 & 0.798 ± 0.015 \\
    Linear-Cylinder & 0.861 ± 0.005 & 0.929 ± 0.004 & 0.871 ± 0.006 & 0.935 ± 0.002 \\
    Linear-Sphere & 0.837 ± 0.004 & 0.925 ± 0.003 & 0.882 ± 0.004 & 0.930 ± 0.002 \\
    Linear-Torus & 0.703 ± 0.008 & 0.964 ± 0.003 & 0.716 ± 0.006 & 0.970 ± 0.001 \\
    Linear-Swiss Roll & 0.951 ± 0.003 & 0.952 ± 0.002 & 0.946 ± 0.003 & 0.919 ± 0.009 \\
    \midrule
    RBF-Cone & 0.937 ± 0.004 & 0.877 ± 0.009 & 0.939 ± 0.003 & 0.800 ± 0.015 \\
    RBF-Cylinder & 0.866 ± 0.005 & 0.934 ± 0.002 & 0.885 ± 0.005 & 0.936 ± 0.002 \\
    RBF-Sphere & 0.839 ± 0.003 & 0.929 ± 0.002 & 0.865 ± 0.004 & 0.929 ± 0.003 \\
    RBF-Torus & 0.771 ± 0.007 & 0.970 ± 0.001 & 0.768 ± 0.006 & 0.970 ± 0.001 \\
    RBF-Swiss Roll & 0.950 ± 0.002 & 0.860 ± 0.007 & 0.940 ± 0.004 & 0.764 ± 0.010 \\
    \bottomrule
  \end{tabular}
  \label{tab:context_size_21}
\end{table}

\begin{table}[htbp]
  \centering
  \small
  \caption{Performance Comparison at Context Size 39}
  \begin{tabular}{lcccc}
    \toprule
    Kernel-Target & EIG+ICL (ID) & EIG+ICL (OOD) & E2E ICL (ID) & E2E ICL (OOD) \\
    \midrule
    Linear-Cone & 0.947 ± 0.003 & 0.966 ± 0.002 & 0.947 ± 0.003 & 0.803 ± 0.014 \\
    Linear-Cylinder & 0.876 ± 0.004 & 0.935 ± 0.002 & 0.885 ± 0.004 & 0.935 ± 0.002 \\
    Linear-Sphere & 0.857 ± 0.004 & 0.930 ± 0.003 & 0.901 ± 0.004 & 0.929 ± 0.002 \\
    Linear-Torus & 0.724 ± 0.007 & 0.969 ± 0.002 & 0.730 ± 0.006 & 0.970 ± 0.001 \\
    Linear-Swiss Roll & 0.964 ± 0.002 & 0.964 ± 0.003 & 0.964 ± 0.002 & 0.947 ± 0.009 \\
    \midrule
    RBF-Cone & 0.949 ± 0.002 & 0.877 ± 0.007 & 0.952 ± 0.003 & 0.804 ± 0.016 \\
    RBF-Cylinder & 0.882 ± 0.004 & 0.935 ± 0.002 & 0.902 ± 0.005 & 0.935 ± 0.002 \\
    RBF-Sphere & 0.860 ± 0.004 & 0.930 ± 0.002 & 0.892 ± 0.004 & 0.930 ± 0.002 \\
    RBF-Torus & 0.800 ± 0.006 & 0.970 ± 0.002 & 0.804 ± 0.004 & 0.970 ± 0.001 \\
    RBF-Swiss Roll & 0.960 ± 0.002 & 0.868 ± 0.007 & 0.954 ± 0.003 & 0.775 ± 0.009 \\
    \bottomrule
  \end{tabular}
  \label{tab:context_size_39}
\end{table}

Our results reveal interesting patterns in generalization capabilities. For simpler manifolds like cylinder, sphere, and torus, we observe that OOD performance is competitive with or sometimes even exceeds ID performance by a large margin. This suggests that training on a diverse set of manifolds provides rich geometric knowledge that transfers well to these simpler structures. The simpler manifolds share similar geometric features, making knowledge transfer more effective.

In contrast, for more complex manifolds like Swiss Roll and Cone, OOD performance is typically lower than ID performance. This is particularly pronounced for the Swiss Roll manifold with the RBF kernel, where OOD performance is significantly lower than ID. This performance gap likely stems from the unique geometric properties of the Swiss Roll that are not well captured by the kernel bandwidth parameters learned from other manifolds. The RBF kernel's bandwidth is particularly sensitive to the intrinsic curvature and density variations present in the Swiss Roll but absent in simpler manifolds.

When training exclusively on one manifold type, the model may specialize excessively to the spectral decomposition of that manifold's Laplacian matrix. This results in spectral features with lower bias but significantly higher variance upon finite-sample re-estimation—characteristic of overfitting. Exposure to multiple manifold types appears to promote more robust spectral feature learning that generalizes better.

\subsection{Image Manifold Transfer Performance\label{sec:A_Image_Transfer}}

The ability to transfer knowledge from synthetic manifolds to high-dimensional image manifolds is a crucial test of our approach's robustness. Table~\ref{tab:rbf_image_performance_by_context} presents results for different combinations of training manifolds when evaluating on image manifolds.

\begin{table}[htbp]
  \centering
  \small
  \caption{Performance Comparison for RBF Kernel on Image Manifolds at Different Context Sizes}
  \begin{tabular}{llcccc}
    \toprule
    Context & Training Manifolds & EIG+ICL (ID) & EIG+ICL (OOD) & E2E ICL (ID) & E2E ICL (OOD) \\
    \midrule
    \multirow{5}{*}{3} 
    & cone,cylinder,sphere,swiss\_roll & 0.700 ± 0.008 & 0.555 ± 0.005 & 0.730 ± 0.007 & 0.557 ± 0.007 \\
    & cone,cylinder,torus,swiss\_roll & 0.700 ± 0.006 & 0.555 ± 0.005 & 0.730 ± 0.009 & 0.600 ± 0.007 \\
    & cone,sphere,torus,swiss\_roll & 0.700 ± 0.009 & 0.555 ± 0.005 & 0.730 ± 0.006 & 0.596 ± 0.006 \\
    & cylinder,sphere,torus,swiss\_roll & 0.700 ± 0.007 & 0.557 ± 0.009 & 0.730 ± 0.008 & 0.543 ± 0.005 \\
    & cone,cylinder,sphere,torus & 0.700 ± 0.005 & 0.555 ± 0.007 & 0.730 ± 0.010 & 0.524 ± 0.004 \\
    \midrule
    \multirow{5}{*}{21} 
    & cone,cylinder,sphere,swiss\_roll & 0.850 ± 0.007 & 0.657 ± 0.008 & 0.880 ± 0.006 & 0.686 ± 0.005 \\
    & cone,cylinder,torus,swiss\_roll & 0.850 ± 0.009 & 0.657 ± 0.008 & 0.880 ± 0.005 & 0.744 ± 0.004 \\
    & cone,sphere,torus,swiss\_roll & 0.850 ± 0.006 & 0.657 ± 0.008 & 0.880 ± 0.008 & 0.745 ± 0.003 \\
    & cylinder,sphere,torus,swiss\_roll & 0.850 ± 0.005 & 0.652 ± 0.006 & 0.880 ± 0.009 & 0.650 ± 0.005 \\
    & cone,cylinder,sphere,torus & 0.850 ± 0.008 & 0.612 ± 0.008 & 0.880 ± 0.007 & 0.584 ± 0.004 \\
    \midrule
    \multirow{5}{*}{39} 
    & cone,cylinder,sphere,swiss\_roll & 0.900 ± 0.006 & 0.689 ± 0.005 & 0.930 ± 0.008 & 0.740 ± 0.006 \\
    & cone,cylinder,torus,swiss\_roll & 0.900 ± 0.005 & 0.689 ± 0.005 & 0.930 ± 0.007 & 0.781 ± 0.004 \\
    & cone,sphere,torus,swiss\_roll & 0.900 ± 0.008 & 0.689 ± 0.005 & 0.930 ± 0.005 & 0.786 ± 0.004 \\
    & cylinder,sphere,torus,swiss\_roll & 0.900 ± 0.009 & 0.688 ± 0.006 & 0.930 ± 0.006 & 0.710 ± 0.008 \\
    & cone,cylinder,sphere,torus & 0.900 ± 0.007 & 0.626 ± 0.006 & 0.930 ± 0.009 & 0.616 ± 0.005 \\
    \bottomrule
  \end{tabular}
  \label{tab:rbf_image_performance_by_context}
\end{table}

The results demonstrate that our end-to-end semi-supervised Transformer achieves remarkable zero-shot transfer to image manifolds despite being trained solely on synthetic manifolds. Several important observations emerge:

The combination of training manifolds significantly impacts transfer performance. Including both Swiss Roll and torus in the training set appears to consistently yield the best transfer results, likely because these manifolds capture certain intrinsic properties present in image manifolds better than other combinations.

Secondly, there is a clear relationship between context size and transfer performance. As the context size increases from 3 to 39, the transfer performance improves substantially across all training configurations. This suggests that the model's ability to leverage contextual information generalizes well from synthetic to image manifolds.

Finally, our end-to-end approach (E2E ICL) consistently outperforms the baseline EIG+ICL model in both ID and OOD settings, confirming that the learned representation is more effective than using ground-truth eigenvectors directly in diffcult problems.

\subsection{ImageNet100 Experiments with $39\%$ Labels}

We additionally report results on ImageNet100 with a higher label ratio of $39\%$.

\begin{figure}[t]
    \centering
    \includegraphics[width=0.95\linewidth]{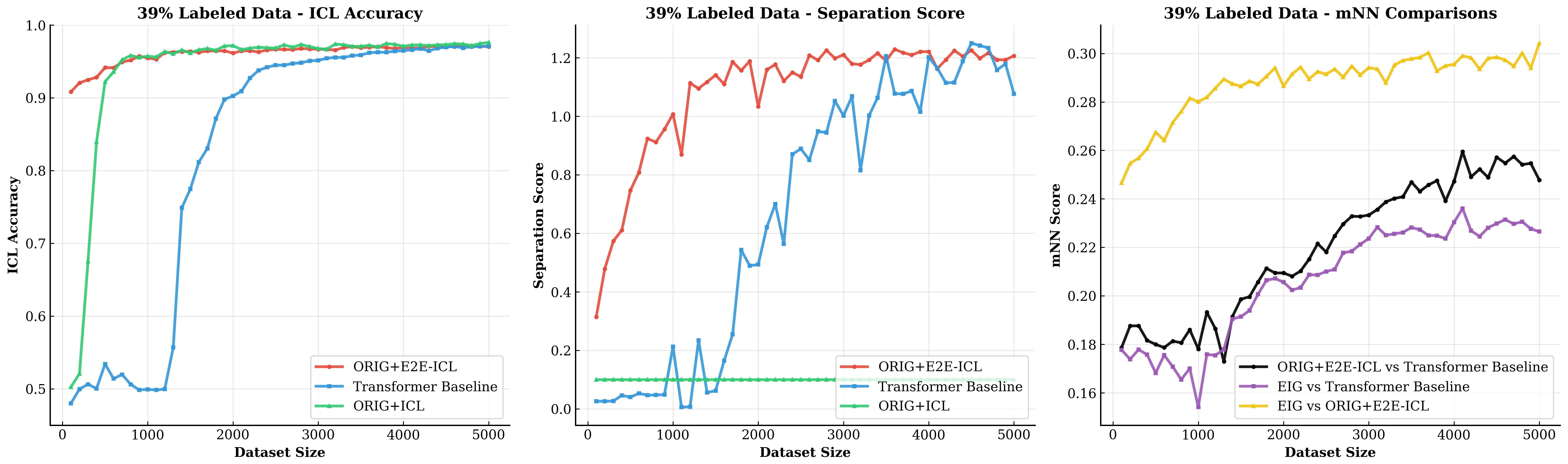}
    \caption{
     In-context learning on ImageNet100 with $39\%$ labels and dataset size $5000$. 
     Each dataset consists of 200 batches, with each batch containing 100 samples 
     (50 images per class) represented by VGG-29 features. 
     The left column reports ICL accuracy, and the right column reports the separation score 
     (intra-class minus inter-class similarity, following \cite{caron2018deep,oord2018representation}). 
     We compare three methods: \textbf{Orig+E2E-ICL} (red, PE from our model), 
     a \textbf{Transformer baseline} (blue, PE from the final transformer layer \cite{min2022metaicl}), 
     and \textbf{Eig+ICL} (green, features pre-extracted from VGG-29).
    }
    \label{fig:imagenet100_icl_39}
    \vspace{-0mm}
    
\end{figure}

\begin{remark}[Separation Score]
The separation score is defined as the difference between average intra-class similarity 
and inter-class similarity \cite{caron2018deep,oord2018representation}:
\[
\text{Sep} = 
\frac{1}{|\mathcal{C}|}\sum_{c \in \mathcal{C}} 
\mathbb{E}_{x_i, x_j \in c}\,[\text{sim}(x_i, x_j)]
\;-\;
\mathbb{E}_{x \in c, \, y \in c', \, c \neq c'}
[\text{sim}(x, y)],
\]
where $\text{sim}(\cdot,\cdot)$ denotes cosine similarity.
\end{remark}

\subsection{Additional Experimental Details}
\label{sec:experiments}
All models were implemented in PyTorch and trained on a Tesla V100 PCIe 16 GB GPU. 
The complete set of experiments required roughly 2 hours of computation time. 
All source code is available in an anonymous repository: 
\href{https://anonymous.4open.science/r/Annoymous_Nips2025-930B}{GitHub}.

\subsection{Models Under Evaluation}
\label{sec:otherModels}

We denote models by \textit{[Data+Model]}, where \textit{Data} specifies the form of input features, 
and \textit{Model} specifies the classifier applied to the unlabeled samples.

\begin{description}
  \item[\textsc{Orig+E2E-ICL (Ours)}]  
        The raw \textit{original} coordinate matrix is fed directly into our 
        \textbf{end-to-end Transformer}.  
        All three modules—the Laplacian predictor, eigenvector extractor, and in-context GD head—
        are trained jointly with a combined loss. 
        (See Appendix~\ref{s:transformer_representation_learning} for architectural details.)

  \item[\textsc{Eig+ICL}]  
        The ICL head is supplied with the \textit{ground-truth} bottom-$k$ Laplacian eigenvectors, 
        computed offline. This removes representation learning, so the gap to 
        \textsc{Orig+E2E-ICL} directly quantifies the benefit of learned representations.

  \item[\textsc{Eig+LR}]  
        A standard $\ell_2$-regularized logistic regression refit on the 
        \textit{ground-truth} eigenvectors at each label ratio.

  \item[\textsc{Orig+RBF-LR}]  
        Kernel logistic regression on the \emph{original} coordinates using the Gaussian kernel  
        $\kappa_{\text{RBF}}(x,x')=\exp(-\gamma\|x-x'\|_2^{2})$ \cite{scholkopf2002learning}.  
        The decision function is 
        $\sigma\!\bigl(\sum_{j=1}^{n}\alpha_j\kappa_{\text{RBF}}(x_i,x_j)\bigr)$ 
        with an $\ell_2$ penalty on $\boldsymbol\alpha$, providing a nonlinear classifier refit 
        at each label ratio.

  \item[\textsc{Orig+ICL}]  
        The raw \textit{original} coordinate matrix is fed directly into our in-context GD head, 
        without any additional learned embedding module. 
        This serves as a control to measure the contribution of the learned PE module in 
        \textsc{Orig+E2E-ICL}.
\end{description}

\subsection{\textcolor{blue}{Ablation on Lap (Laplacian Predictor) and PE (Eigenvector Extractor)}}

{\color{blue}
\textbf{Experimental Setup:}~~
We replace the RBF activation in the Laplacian-forming module ($\phi_{\text{lap}}$) with a linear mapping (no activation) and additionally compare a \textbf{1-layer} variant of our Laplacian and positional embedding (PE) modules.~~
All experiments are conducted on the 5-factor product manifold task (train and test on the same manifold family), averaged over 10 runs per context size.

\textbf{Note on “1-layer” configuration:}~~
The “1-layer” Lap+PE variant refers to a non-iterative version of our spectral representation module, which performs only a single forward pass to compute the Laplacian and positional embeddings, without iterative refinement.~~
By contrast, the full in-context Transformer iteratively refines the Laplacian and PE features through multiple forward passes within each Transformer block.~~
As shown below, the single-pass configuration produces weaker manifold representations, confirming the importance of iterative refinement.
}

\begin{table}[H]
\centering
\caption{\textcolor{blue}{Ablation on Laplacian and positional embedding modules using product-manifold data (5 factors).}}
\begin{tabular}{cccc}
\toprule
\textbf{Context Size} & \textbf{Lap+PE (1 Layer)} & \textbf{Our Model} & \textbf{Lap (Linear)} \\
\midrule
3  & 0.534 $\pm$ 0.008 & \textbf{0.663 $\pm$ 0.031} & 0.479 $\pm$ 0.009 \\
5  & 0.548 $\pm$ 0.016 & \textbf{0.688 $\pm$ 0.029} & 0.480 $\pm$ 0.012 \\
10 & 0.567 $\pm$ 0.015 & \textbf{0.720 $\pm$ 0.016} & 0.465 $\pm$ 0.010 \\
15 & 0.577 $\pm$ 0.007 & \textbf{0.739 $\pm$ 0.009} & 0.475 $\pm$ 0.009 \\
20 & 0.584 $\pm$ 0.013 & \textbf{0.735 $\pm$ 0.011} & 0.508 $\pm$ 0.009 \\
40 & 0.608 $\pm$ 0.009 & \textbf{0.744 $\pm$ 0.007} & 0.536 $\pm$ 0.005 \\
80 & 0.619 $\pm$ 0.007 & \textbf{0.744 $\pm$ 0.007} & 0.555 $\pm$ 0.009 \\
\bottomrule
\end{tabular}
\end{table}

{\color{blue}
\textbf{Observation:}~~
Both the linear Laplacian and the 1-layer (non-iterative) Lap+PE configurations substantially reduce performance.~~
The full in-context Transformer benefits from iterative Laplacian updates and non-linear kernels, which yield improved manifold alignment and label propagation.
}

\section{\textcolor{blue}{Transformer Baseline Architecture}}

{\color{blue}
\textbf{Model:} SmallScaleTransformer (adapted from \citealp{vonoswald2023transformers}).~~
This serves as an \textit{in-context Transformer pipeline}, where raw features and tokenized labels are jointly embedded, processed by a Transformer encoder, and predictions are made in-context.

\textbf{Pipeline Overview:}
\begin{itemize}[noitemsep]
\item \textbf{Input construction:}~~
  Raw features $[B,N,D]$ and label IDs $[B,N]$ are embedded via~~
  $\text{input\_projection(raw)} + \text{label\_embedding(labels)} \rightarrow [B,N,240]$.~~
  Context tokens use true labels; query tokens use a special \texttt{unknown\_label\_id}.
\item \textbf{Encoder:}~~
  Two Transformer layers (4 heads, $d_{\text{model}}{=}240$).~~
  Each layer uses Multi-Head Self-Attention and a feed-forward block (240→960→240) with LayerNorm and residual connections.
\item \textbf{Output heads:}~~
  Feature head – Linear(240→4) (positional encoding)~~
  \quad Classification head – Linear(4→2) (binary prediction)
\item \textbf{Training:}~~
  Loss computed only on query tokens; context tokens serve as demonstrations.~~
  Cross-entropy on query logits vs. true labels; all parameters optimized end-to-end.
\end{itemize}

\textbf{Total Parameters:} 1,442,178 (~1.4M).~~
This baseline acts as a standard in-context Transformer for comparison with our structured semi-supervised version.
}

\section{\textcolor{blue}{Model Parameter Counts}}

{\color{blue}
The parameter counts reported below refer to the total number of \textbf{trainable model parameters}—that is, 
all learnable weights (including attention projections, feed-forward layers, and embedding matrices) 
within each model configuration.~~
These values indicate the relative model capacity rather than computational cost, 
and allow for fair comparison between our proposed in-context semi-supervised Transformer 
and standard baselines.
}

\begin{table}[H]
\centering
\caption{\textcolor{blue}{Parameter counts and architectural summary of all compared models.}}
\begin{tabular}{lcl}
\toprule
\textbf{Model} & \textbf{Parameters} & \textbf{Architecture Configuration} \\
\midrule
ORIG + LR & 627 & Linear logistic regression \\
ORIG + RBF-LR & 628 & RBF kernel logistic regression \\
EIG + LR & 627 & Regression on eigenfeatures \\
\addlinespace
ORIG + ICL & 576 & \textbf{Our in-context learning Transformer} (on raw inputs) \\
EIG + ICL & 576 & \textbf{Our in-context learning Transformer} (on eigenfeatures) \\
ORIG + Transformer & 1,442,178 & \textbf{In-context Transformer baseline}  \\
ORIG + E2E-ICL (ours) & 10,852 & Lap (2,592) + PE (7,684) + ICL head (576) \\
\bottomrule
\end{tabular}
\end{table}

{\color{blue}
\textbf{Parameter Composition (ours):}
\begin{itemize}[noitemsep]
\item Lap module – 2,592 parameters
\item PE module – 7,684 parameters 
\item ICL Transformer – 576 parameters (in-context classification head)
\end{itemize}
}

\begin{remark}
Unless otherwise stated, all ICL-based models use a single-layer ICL Transformer 
with the RBF kernel as the core component of the GD ICL head. 
For all experiments, we vary the labeled-data ratio among a fixed $n=100$ total samples. 
\end{remark}



\end{document}